\newenvironment{proof}{\noindent{\sc Proof.}}{\qed}
\newtheorem{theorem}{Theorem}[section]
\newtheorem{lemma}{Lemma}[section]
\newtheorem{cor}{Corollary}[section]
\newtheorem{rem}{Remark}[section]
\newtheorem{prop}{Proposition}[section]
\newtheorem{uda}{Example}[section]
\newcommand{\qed}{$\blacksquare$}
\def\bhag#1{\noindent
\setcounter{equation}{0}
\section{#1}
}
\def\HH{{\mathbb H}}
\def\RR{{\mathbb R}}
\def\ZZ{{\mathbb Z}}
\def\TT{\mathbb T}
\def\T{\mathcal{T}}
\def\bs#1{{\boldsymbol{#1}}}
\def\x{\mathbf{x}}
\def\k{\mathbf{k}}
\def\y{\mathbf{y}}
\def\u{\mathbf{u}}
\def\m{\mathfrak{m}}
\def\j{\mathbf{j}}
\def\t{\mathbf{t}}
\def\O{{\cal O}}
\def\C{{\mathcal C}}
\def\argmin{\mathop{\hbox{\textrm{arg min}}}}
\def\be{\begin{equation}}
\def\ee{\end{equation}}
\def\bea{\begin{eqnarray}}
\def\eea{\end{eqnarray}}
\def\eref#1{(\ref{#1})}
\def\disp{\displaystyle}
\def\donchitre#1#2{\vskip 6.5cm\noindent
\parbox[t]{1in}{\special{eps:#1.eps x=6.5cm y=5.5cm}}
\hbox to 7cm{}\parbox[t]{0.0cm}{\special{eps:#2.eps x=6.5cm y=5.5cm}}}
\def\XX{{\mathbb X}}
\def\BB{{\mathbb B}}
\def\bs#1{{\boldsymbol{#1}}}
\title{An analysis of training and generalization errors in shallow and deep networks}
\author{
 H.~N.~Mhaskar\thanks{
Institute of Mathematical Sciences, Claremont Graduate University, Claremont, CA 91711. The research of this author is supported in part by the Office of the Director of National Intelligence (ODNI), Intelligence Advanced Research Projects Activity (IARPA), via 2018-18032000002.
\textsf{email:} hrushikesh.mhaskar@cgu.edu} 
 \ and   T. Poggio \thanks{
   Center for Brains, Minds, and Machines, McGovern Institute for Brain Research,
   Massachusetts Institute of Technology, Cambridge, MA, 02139. The research of this author is supported by the Center for Brains, Minds and
  Machines (CBMM), funded by NSF STC award CCF-1231216.
 \textsf{tp@mit.edu} }
 }
 \date{}
\begin{document}

\maketitle

\begin{abstract}
This paper is motivated by an open problem around  deep networks, namely, the apparent absence of
over-fitting despite large 
over-parametrization which allows perfect
fitting of the training data. 
In this paper, we analyze  this phenomenon in the case of regression problems when each unit evaluates a periodic activation function. 
We argue that the minimal expected value of the square loss is inappropriate to measure the generalization error in approximation of compositional functions in order to take full advantage of the compositional structure. Instead, we measure the generalization error in the sense of maximum loss, and sometimes, as a pointwise error.
We give estimates on exactly how many parameters ensure both zero training error as well as a good generalization error. 
We prove that a solution of a regularization problem is guaranteed to yield a good training error as well as a good generalization error and estimate how much error to expect at which test data. 

\end{abstract}

\noindent\textbf{Keywords:}
Deep learning, generalization error, interpolatory approximation

\bhag{Introduction}\label{intsect}

The main problem of machine learning is the following. 
Given data $(\x,y)$ sampled from an unknown probability distribution $\mu$, the goal is to find a function $P$ that minimizes the generalization error $\mathbb{E}_\mu((y-P(\x))^2)$ among all functions in some function class that is thought to represent the prior information about the distribution. 
Since we do not know $\mu$, classical machine learning paradigm expresses the generalization error as a sum of three components: the variance, the approximation error, and the sampling error. 
The variance is a lower bound on the generalization error, and the estimation typically focuses on the other two errors. The sum of these two is given by $\mathbb{E}((f-P(\x))^2)$, where the expectation is with respect to the marginal distribution of $\x$ and the \emph{target function} $f$ is the conditional expectation of $y$ given $\x$.
 If the marginal distribution of $\x$ is known, then the split between approximation and sampling errors is no longer necessary, and one can obtain estimates  as well as constructions directly from characteristics of the training data (e.g., \cite{quadconst, eignet, eugenenevai}). 
In the classical paradigm where this distribution is not known,
the approximation error decreases as the number of parameters in $P$ increases to $\infty$.
However, this makes the empirical risk minimization problem harder to solve; making it essential to choose the number of parameters in $P$ to balance the two errors. 
In turn, this explains a commonly observed phenomenon that if one achieves a zero empirical risk on the training data by over-parametrized model $P$, the test error is generally not good.

There are several recent papers that demonstrate that this phenomenon is often not observed (e.g., \cite{hardt2016identity, neyshabur2017exploring, sokolic2016robust, zhangmusings, belkin2018understand, poggio2017theory3}).
There are a few recent papers that address this issue in the case of classification problems. Belkin, Hsu, and Mitra \cite{belkin2018overfitting} analyze the ``excess error'' in least square 
fits by piecewise linear interpolants over that obtained by the optimal Bayes' classifier. 
In \cite{poggio2017theory3, poggio2018theory3b}, the question is analyzed from the perspective of the geometry of the error surface with respect to different loss functions near the local extrema.
In particular, it is shown in \cite{poggio2017theory3} that substituting the ReLU activation function by a polynomial approximation exhibits the same behavior as the original network.

In this paper, we focus on regression problems. 
We wish to consider from the point of view of approximation theory the overfitting puzzle for universal approximation  in a more intrinsic and theoretical manner, independent of the training mechanism used. 
This paper does not seek to ``explain'' the overfitting phenomenon as observed. Such an explanation needs to  involve not just the analysis of the approximation problem itself, but also an analysis of the training algorithms used for this purpose. 
It is obvious that a network (or any other model)  using a number of parameters that is less  than the number  of training data points cannot in general produce a zero training error in the absence of some strong prior knowledge about the target function that generated the training data, no matter what training algorithm is used.
Our goal is to study the fundamental problem of function approximation to examine what characteristics of the data and how much overparametrization will give \textbf{theoretical guarantees} that the generalization error can be controlled while achieving a zero training error. 
We will do this without any prior assumption about the target function apart from continuity or at most differentiability. 

We wish to address the following issues about the  phenomenon of zero/small training error and small test error for universal approximation:
\begin{enumerate}
\item What characteristics of the data govern a zero training error and a good generalization error? 
\item How much over-parametrization will give a \textbf{mathematical guarantee} of the model to exhibit this behavior?
\item Propose a regularization scheme whose solution will guarantee a good (but not necessarily zero) training error while maintaining a good generalization error at the same time.
The emphasis here is on an estimation of  how much over-parametrization is necessary to get theoretical guarantees.
\item What bounds on the generalization error can be guaranteed by the solution of the (global) regularization scheme \textbf{at each point} in the test data (rather than a global error estimate), compared to the nearest neighbor in the training data? 
\end{enumerate}

In recent years, convolutional neural networks (CNNs) have achieved a revolution in machine learning. 
A good survey can be found in \cite{lecun2015deep}. From a practical point of view, the central features of CNNs are locality and weight sharing. 
From a strictly mathematical point of view, convolution is a very special operation that requires a group structure on the data.
According to the book \cite{goodfellow2016deep}, the CNNs ``are a specialized kind of neural network for processing data that has
 a known, grid-like topology. Examples include time-series data, which can be
 thought of as a 1D grid taking samples at regular time intervals, and image data,
 which can be thought of as a 2D grid of pixels.'' 
 For this kind of data, it is customary to treat it either as  data on the whole real line/plane with zero-padding, or otherwise use a symmetrical extension to treat it as  data on a circle/torus so that the standard group operations on these spaces can be used to define the operation of convolution.

In this paper, we will focus on function approximation on the torus. 
The most fundamental class for this purpose is the class of all trigonometric polynomials.
Accordingly, we will study the problem of the lack of overfitting in the context of approximation by trigonometric polynomials. 
We will explain in Section~\ref{trigtonetsect} the theoretical relationship between  trigonometric polynomials and neural/RBF networks in the periodic setting. In Section~\ref{shallowsect}, we will show how the theorems about trigonometric approximation translate into theorems about shallow networks with arbitrary periodic activation functions.

In the study of approximation error in deep learning, it is observed in \cite{dingxuanpap} that the compositional structure of the target function can be utilized effectively via a property called good propagation of error to obtain substantially better error bounds allowing us to overcome the curse of dimensionality observed in shallow networks.
This allows us to ``lift'' many results on approximation by shallow networks to those on deep networks.
However, there are a few barriers that prevent a straightforward extension.

One is that we do not know the functions evaluated at each node in the intervening layers; just the input/output relations between the data and output of the ultimate layers.  
There are some recent efforts in the direction of designing deep networks in some special applications using domain knowledge (e.g., \cite{han2019k,mhas_sergei_maryke_diabetes2017}).
It is conceivable that this problem does not appear in these contexts.

The other problem is even more fundamental, requiring a change in the traditional notion of generalization error.
For example, suppose we wish to approximate a function $f^*=f(f_1(\x_1), f_2(\x_2))$ by a network of the form $P^*=P(P_1(\x_1), P_2(\x_2))$, where $\x_1, \x_2\in \RR^d$. 
Without the compositional structure, $f^*$ has to be treated as a function of $2d$ variables.
The compositional structure allows us to treat the approximation problem as a set of three approximation problems: approximating functions $f_1$, $f_2$ as functions of $d$ variables each, and a function $f$ of $2$ variables. 

How do we define generalization error?
Defining it in terms of the original distribution of $((\x_1,\x_2),y)$ is not sensitive to the compositional structure. 
On the other hand, the input $(f_1,f_2)$ to the function $f$ is not the same (even may not have the same distribution) as the input $(P_1,P_2)$ to the approximation $P$.

Therefore, we measure in this paper the errors in the uniform norm rather than searching for appropriate $L^2$ norms suitable for the compositionality structure of the target function. 
Thus, we define the generalization error as the maximal error  between the target function and the model at all possible test points. 
One consequence is that the decomposition of the generalization error into three parts breaks down. Therefore, new ideas are required to achieve the approximation in terms of the training data alone.
The approximation errors themselves are studied in \cite{dingxuanpap, sphrelu}, but the techniques suggested there require the data points $\x_j$ to be sufficiently dense in the domain (Euclidean space, sphere, cube, etc.).
When the data is not dense, it is clearly not expected to get a good approximation on the whole domain. 
However, if the data does become denser and denser on some subset of the domain, one can expect a good approximation at points close by to the training data. Thus, we will establish pointwise bounds for the generalization error obtained by a solution of a  regularization scheme suggested for this purpose.

In Section~\ref{backsect}, we develop some notation and provide some background information that has motivated our current paper.
In Section~\ref{shallowsect}, we state our theorems for the case of shallow networks. In Section~\ref{deepsect}, we state  the analogues of the results in Section~\ref{shallowsect} in the case of deep networks. The proofs of the results in Sections~\ref{shallowsect} and \ref{deepsect} are given in Section~\ref{pfsect}.

\bhag{Background}\label{backsect}

The purpose of this section is to explain the connection between trigonometric polynomials and neural networks (Section~\ref{trigtonetsect}) as well as to explain a classical theorem which provides a motivation for our theorems in this paper (Section~\ref{intapproxsect}). In order to do so, we need first to develop some notation. This is done in Section~\ref{notationsect}.

\subsection{Notation}\label{notationsect}

Let $q\ge 1$ be an integer, $\TT^q$ be the $q$ dimensional torus (=$\RR^q/(2\pi\ZZ)^q$). For $\x, \y\in\TT^q$, 
$$
|\x-\y|=\max_{1\le i\le q}|(x_i-y_i)(\mbox{ mod } 2\pi)|.
$$
For a multi-integer $\k$, $|\k|_p$ is the $\ell^p$ norm of $\k$. 

The space  of all continuous functions $f:\TT^q\to\RR$, equipped with the supremum norm will be denoted by $C^*$ (or $C^*(\TT^q)$ if we wish to emphasize the input dimension to the functions). The norm on $C^*(\TT^q)$ will be denoted by $\|\cdot\|$ or $\|\cdot\|_q$ if it is important to identify the dimension.
For $n>0$, the space $\HH_n^q$ of trignometric polynomials in $q$ variables with (spherical) degree $<n$ is defined by
$$
\HH_n^q=\mathsf{span}\{\exp(i\k\cdot\circ) : |\k|_2 <n\}.
$$
The dimension of $\HH_n^q$ is $\sim n^q$.
If $f\in C^*(\TT^q)$, then its 
Fourier coefficients are defined by 
\be\label{fourcoeffdef}
\hat{f}(\k)=\frac{1}{(2\pi)^q}\int_{\TT^q}f(\x)\exp(-i\k\cdot\x)d\x, \qquad \k\in\ZZ^q,
\ee
and its
degree of approximation from
$\HH_n^q$ is defined by
$$
E_n(f)=E_{n}(q;f):=\inf_{T\in\HH_n^q}\|f-T\|.
$$
When our models are trigonometric polynomials in $\HH_n^q$, the quantity $E_n(f)$ denotes the ideal generalization error for the target function $f$. 

Let $h:\RR\to [0,1]$ be an infinitely differentiable, even function such that $h(t)=1$ if $|t|\le 1/2$, $h(t)=0$ if $|t|\ge 1$. We define 
\be\label{kerndef}
\Phi_N(\x)=\sum_{\k \in \ZZ^q}h\left(\frac{|\k|_2}{N}\right)\exp(i\k\cdot\x), \qquad \x\in\TT^q,\ N>0.
\ee

For $f\in C^*$, we define
\be\label{summkerndef}
\sigma_N(f)(\x)=\sum_{\k\in\ZZ^q}h\left(\frac{|\k|_2}{N}\right)\hat{f}(\k)\exp(i\k\cdot\x), \qquad \x\in\TT^q,\ N>0.
\ee
We note that the sums in both \eref{kerndef} and \eref{summkerndef} are finite sums, although they are written as infinite sums for  convenience of notation.

For any finite subset $\C\subset \TT^q$, we define its minimal separation by
\be\label{minsepdef}
\eta(\C)=\min_{\x,\y\in \C, \x\not=\y}|\x-\y|.
\ee
We assume a training data of the form $\mathcal{D}=\{(\x_j, y_j)\}_{j=1}^M$, where $\mathcal{C}=\{\x_j\}_{j=1}^M \subset \TT^q$, and $y_j=f(\x_j)+\epsilon_j$ for some $f\in C^*$. We denote
\be\label{epsilondef}
\epsilon=\max_{1\le j\le M} |\epsilon_j|.
\ee
The quantity $\epsilon$ plays the role of variance in our theory in this paper. 
In regression problems, where numerical accuracy is expected in the supremum norm, the errors $\epsilon_j$ all need to be small. 
Thus, even though the quantity $\epsilon$ seems to increase with $M$, it remains small as $M\to\infty$. 
One example is when the values of $f$ are not computed exactly (as they rarely are), but only through a numerical computation (for example, $f$ is the ideal solution of a differential equation, but the data is obtained by solving this equation numerically at the grid points). 
The quantity $\epsilon$ then represents the maximal error in this numerical computation.\\

\noindent{\textbf{The constant convention}\\
\emph{The
symbols $c, c_1,\cdots$ will denote generic positive constants,
depending on such fixed parameters of the problem as   $q$, $h$, $\mathcal{G}$, and $S$ (to be introduced later),
etc. and other quantities explicitly indicated, but their values may be
different at different occurrences, even within a single formula. The
notation $A\sim B$ means that $c_1A\le B\le c_2A$.}\\

\subsection{Neural networks and trigonometric polynomials}\label{trigtonetsect}
The material in this section is based on \cite{mhaskar1995degree}.
For reasons that will become clear shortly, the term \emph{activation function} in this paper will mean $\phi\in C^*$ such that $\hat{\phi}(1)\not=0$.
We note that a trigonometric polynomial is itself a neural network with the activation function $t\mapsto \cos t$. There is a close connection between trigonometric polynomials and networks with other activation functions.
 Let $\phi \in C^*(\TT)$ and $\hat{\phi}(1)\not=0$. Then for $\k\in\ZZ^q$ and $\x\in\TT^q$, it is not difficult to verify that
$$
\exp(i\k\cdot\x)=\frac{1}{2\pi\hat{\phi}(1)}\int_\TT \phi(t)\exp\left(i(\k\cdot\x-t)\right)dt=\frac{1}{2\pi\hat{\phi}(1)}\int_\TT \phi(\k\cdot\x-t)\exp(it)dt.
$$
Discretizing the integral expression above, it can be shown (cf. \cite[Proposition~4.2.1]{indiapap}) that for any integer $N\ge 1$, $\k\in\ZZ^q$,
\be\label{trigasneunet}
\left\|\exp(i\k\cdot(\circ))- \frac{1}{(2N+1)\hat{\phi}(1)}\sum_{j=0}^{2N}\exp\left(\frac{2\pi i j}{2N+1}\right)\phi\left(\k\cdot(\circ)-\frac{2\pi  j}{2N+1}\right)\right\| \le \frac{4}{|\hat{\phi}(1)|}E_N(1; \phi).
\ee
In particular, if 
 $T$ is a trigonometric polynomial,  let
\be\label{trigneunetdef}
\mathbb{G}_N(\phi, T)(\x)= \frac{1}{(2N+1)\hat{\phi}(1)}\sum_{j=0}^{2N}\exp\left(\frac{2\pi i j}{2N+1}\right)\left(\sum_{\k\in\ZZ^q} \hat{T}(\k)\phi\left(\k\cdot(\circ)-\frac{2\pi  j}{2N+1}\right)\right),
\ee
where it is understood that $\hat{T}(\k)=0$ if $|\k|_2$ exceeds the degree $n$ of $T$. 
The number of neurons involved in the network 
$\mathbb{G}_N(\phi,T)$ is $\sim Nn^q$.
The estimate \eref{trigasneunet} leads to
\be\label{trigneunetapprox}
\left\|T-\mathbb{G}_N(\phi,T)\right\|
\le \frac{4}{|\hat{\phi}(1)|}E_N(1; \phi)\sum_{\k\in\ZZ^q}|\hat{T}(\k)|.
\ee
Thus, a trigonometric polynomial can be approximated by a neural network with activation function $\phi$ and the error bounds can be 
obtained by keeping track of the degree of approximation of the target function by trigonometric polynomials, the magnitude of its Fourier coefficients and the bound in \eref{trigneunetapprox} (cf. \cite{mhaskar1995degree} for some examples). This leads to the following proposition, which will be proved in Section~\ref{pfsect}.

\begin{prop}\label{periodic_relu_prop}
Let $\phi\in C^*(\TT)$, $\hat{\phi}(1)\not=0$, $f\in C^*$.  Then for $n,N\ge 1$, we have
\be\label{relu_approx}
\begin{aligned}
\|f-\mathbb{G}_N(\phi,\sigma_n(f))\|&=\left\|f  -\frac{1}{(2N+1)\hat{\phi}(1)}\sum_{j=0}^{2N}\exp\left(\frac{2\pi i j}{2N+1}\right) \left(\sum_{|\k|_2<n} h\left(\frac{|\k|_2}{n}\right)\hat{f}(\k)\phi\left(\k\cdot(\circ)-\frac{2\pi  j}{2N+1}\right)\right)\right\|\\
&\le c(\phi)\left\{E_{n/2}(q;f)+n^{q/2}E_N(1;\phi)\|f\|\right\}.
\end{aligned}
\ee
\end{prop}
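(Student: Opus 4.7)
The plan is to decompose the error by inserting the intermediate trigonometric polynomial $\sigma_n(f)$, which lies in $\HH_n^q$ since $h$ is supported in $[-1,1]$. By the triangle inequality,
\[
\|f-\mathbb{G}_N(\phi,\sigma_n(f))\|\le \|f-\sigma_n(f)\|+\|\sigma_n(f)-\mathbb{G}_N(\phi,\sigma_n(f))\|,
\]
and I would bound the two terms separately, using the already-established network estimate \eref{trigneunetapprox} for the second and a classical Jackson/de la Vall\'ee Poussin argument for the first.

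For the first term, I would observe that $h(|\k|_2/n)=1$ whenever $|\k|_2\le n/2$, so $\sigma_n$ reproduces every $T\in\HH_{n/2}^q$. If $T^*\in\HH_{n/2}^q$ realizes $\|f-T^*\|=E_{n/2}(q;f)$, then $\sigma_n(f)-T^*=\sigma_n(f-T^*)$, giving
\[
\|f-\sigma_n(f)\|\le \|f-T^*\|+\|\sigma_n(f-T^*)\|\le (1+\|\sigma_n\|_{\mathrm{op}})\,E_{n/2}(q;f).
\]
The key ingredient here is that $\|\sigma_n\|_{\mathrm{op}}$ is bounded by a constant depending only on $h$ and $q$; this is standard, following from the smoothness of $h$ (and hence fast decay of $\Phi_N$), which gives a uniform $L^1$ bound on the convolution kernel. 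I would cite this (or derive it by a routine integration-by-parts estimate on $\Phi_N$).

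For the second term, I apply \eref{trigneunetapprox} to $T=\sigma_n(f)\in\HH_n^q$. Since $\widehat{\sigma_n(f)}(\k)=h(|\k|_2/n)\hat{f}(\k)$ vanishes for $|\k|_2\ge n$, and $|h|\le 1$, Cauchy--Schwarz together with Parseval yields
\[
\sum_{\k\in\ZZ^q}|\widehat{\sigma_n(f)}(\k)|\le \sum_{|\k|_2<n}|\hat{f}(\k)|\le \Bigl(\sum_{|\k|_2<n}1\Bigr)^{1/2}\Bigl(\sum_{\k}|\hat{f}(\k)|^2\Bigr)^{1/2}\le c\,n^{q/2}\|f\|_{L^2(\TT^q)}\le c\,n^{q/2}\|f\|,
\]
where the last step uses $\|f\|_{L^2}\le c\|f\|$ on the torus. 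Substituting into \eref{trigneunetapprox} gives $\|\sigma_n(f)-\mathbb{G}_N(\phi,\sigma_n(f))\|\le c(\phi)\,n^{q/2}E_N(1;\phi)\|f\|$, and combining with the first term gives the claimed bound.

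The only non-routine point is the uniform bound on $\|\sigma_n\|_{\mathrm{op}}$; everything else is direct bookkeeping of \eref{trigneunetapprox} and elementary estimates on Fourier coefficients. Once that Lebesgue-constant bound is in hand, the proof reduces to the triangle inequality plus the Cauchy--Schwarz step producing the $n^{q/2}$ factor.
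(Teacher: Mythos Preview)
Your proposal is correct and follows essentially the same route as the paper: the paper also splits via $\sigma_n(f)$, invokes \eref{trigneunetapprox} and Cauchy--Schwarz/Bessel to produce the $n^{q/2}\|f\|$ factor, and handles $\|f-\sigma_n(f)\|$ by the uniform operator bound on $\sigma_n$ together with reproduction of $\HH_{n/2}^q$ (stated as \eref{goodapprox} in Proposition~\ref{kernelopsummary_prop}(b)). The only point you flag as non-routine---the Lebesgue-constant bound for $\sigma_n$---is exactly what the paper supplies separately via the localization estimate on $\Phi_N$, so you have identified the right ingredient.
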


\begin{uda}\label{periodic_relu_rmk}
{\rm For example, we consider  the smooth ReLU function $t\mapsto \log(1+e^t)=t_++\O(e^{-|t|})$. Then the function
$\disp\psi(t)=\log\left(\frac{(1+e^{t+\pi})(1+e^{t-\pi})}{(1+e^t)^2}\right)$
is integrable on $\RR$. 
The periodization 
\be\label{smoothreludef}
\phi(t)=\sum_{j\in\ZZ}\psi(t+2\pi j), \qquad t\in\RR,
\ee
is an analytic function on $\TT$. So, Bernstein approximation theorem \cite[Theorem~5.4.2]{timanbk} shows that there exists $\rho_1<1$ with $E_N(1;\phi)\le \rho_1^N$ for all $N\ge c(\phi)$. 
In Proposition~\ref{periodic_relu_prop}, if $f$ satisfies
$E_n(q;f)=\O(n^{-\gamma})$ for some $\gamma$, then we may choose $N\sim \log n$ to get a network with $\O(n^q\log n)$ neurons 
to obtain an estimate $\O(n^{-\gamma})$ on the right hand side of \eref{relu_approx}.
\qed}
\end{uda}

This idea is generalized to many other settings, and algorithms are known to find the approximation to the target function using the training data, \textbf{without assuming any prior on the target function} (see, e.g. \cite{indiapap} for an early construction). 
However, formulating the problem directly as a minimization  of the supremum norm error between the function and the neural network model may not work. 
The theory implies certain relationships between the coefficients and the weights and thresholds.

Conversely, one can  approximate $\phi$ by trigonometric polynomials. 
Therefore, if one can obtain or assume some bounds on the coefficients of a neural network with $\phi$ as the activation function, then these bounds can be translated to bounds on the degree of approximation by trigonometric polynomials.
This part is hard to do on the torus with neural networks, but has been done in a far more general setting with kernel based approximation \cite{eignet}, where Mercer expansions satisfying certain technical conditions are known.

In view of this close relationship between general neural networks and trigonometric polynomials (i.e., networks with activation function $t\mapsto\cos t$), we will focus in this paper on trigonometric polynomials, and demonstrate in Section~\ref{shallowsect} how these results are translated to those with other neural networks.

\subsection{Interpolatory approximation}\label{intapproxsect}

In the language of  classical approximation theory, the problem of achieving a zero training error is the problem of interpolation.
In the context of trigonometric polynomials,  for any data of the form $\{(\theta_j, y_j)\}_{j=0}^{2n}$, $y_j\in\RR$, $\theta_j\in \TT$, and $\theta_j\not=\theta_k$ if $j\not=k$, there exists $T\in\HH_n^1$ such that $T(\theta_j)=y_j$ for $j=0,\cdots, 2n$ \cite[Chapter~X, Theorem~1.2]{zygmund}. 
Thus, it is easy to obtain a zero training error with a minimal number of free parameters.
As we argued in the introduction, the test error in this context should be measured in terms of uniform approximation to the target function. 
A well known theorem attributed  in \cite{natanson} to Faber and Bernstein states that for \textbf{any} system of interpolation nodes, there exists a function $ f\in C^*(\TT)$ for which the sequence of interpolatory trigonometric polynomials (with minimal degree as above)  does not converge uniformly to $f$. 

In 1943, Erd\H os \cite{erdos1943some} initiated (in the context of algebraic polynomials) a study of the question whether one can get a convergent sequence of interpolatory polynomials if one allows a polynomial of higher than minimal degree. 
A positive answer was given by Szabados in \cite{szabados1978some}. 
Although the answer is given in terms of algebraic polynomials, Szabados remarks that the same is clearly true for trigonometric polynomials as well.
An explicit statement to this effect is the following \cite[Theorem~3.1(a)]{approxint2002}.

\begin{theorem}\label{approxinttheo}
Let $\theta_1,\ldots,\theta_N$ be 
distinct points in $[-\pi,\pi]$, $\theta_{N+1}:=\theta_1+2\pi$, 
$\alpha>0$, and 
$$
\min_{1\le k\le N}|\theta_{k+1}-\theta_k|=: \eta.
$$
Then for $f\in C^*(\TT^1)$, 
there exists a trigonometric polynomial $T$ of 
degree at most $\disp (1+2/\alpha)(\pi/\eta)$ such that 
$f(\theta_j)=T(\theta_j)$, $1\le j\le N$, and
\be\label{szabad_bd}
\|f-T\| \le (2+\alpha)E_m(1;f)
\ee
where $\disp m= (1+2/\alpha)(\pi/\eta)$.
\end{theorem}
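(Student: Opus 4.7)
My plan is to follow the classical Erd\H os--Szabados decomposition strategy: write the required interpolant as the sum of a near-best trigonometric approximation and a small correction that restores the interpolation values at the nodes. The degree budget $m=(1+2/\alpha)(\pi/\eta)$ splits naturally into $\pi/\eta$ degrees, needed merely to perform any interpolation at $N$ points on the circle with minimum separation $\eta$ (a Marcinkiewicz--Zygmund-type density condition), plus $(2/\alpha)(\pi/\eta)$ extra degrees that are spent to drive the Lebesgue constant of a fundamental Lagrange system down to $1+\alpha$.

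Concretely, I would first choose $T^*\in\HH_m^1$ with $\|f-T^*\|=E_m(1;f)$ (such a best approximation exists since $\HH_m^1$ is finite-dimensional) and set $g=f-T^*$, so that $\|g\|=E_m(1;f)$ and $g(\theta_j)=f(\theta_j)-T^*(\theta_j)$ for each $j$. The central step is then to construct trigonometric polynomials $\ell_1,\dots,\ell_N$ of degree at most $m$ satisfying
\[
\ell_j(\theta_k)=\delta_{jk},\qquad \sum_{j=1}^N|\ell_j(\theta)|\le 1+\alpha \qquad(\theta\in\TT).
\]
Setting $T=T^*+\sum_{j=1}^N g(\theta_j)\,\ell_j$ yields a trigonometric polynomial of degree at most $m$ with $T(\theta_j)=T^*(\theta_j)+g(\theta_j)=f(\theta_j)$, and the triangle inequality gives
\[
|f(\theta)-T(\theta)|\le|g(\theta)|+\sum_{j=1}^N|g(\theta_j)|\,|\ell_j(\theta)|\le\|g\|\bigl(1+(1+\alpha)\bigr)=(2+\alpha)E_m(1;f),
\]
which is \eref{szabad_bd}.

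The hard part, and the step I would carry out with most care, is the construction of the fundamental system $\{\ell_j\}$ with Lebesgue constant at most $1+\alpha$ under only the separation hypothesis on $\{\theta_j\}$. The natural recipe is a damped-Lagrange form $\ell_j(\theta)=K_r(\theta-\theta_j)\,L_j(\theta)/K_r(0)$, where $K_r$ is a nonnegative Jackson-type kernel of trigonometric degree $r\sim(2/\alpha)(\pi/\eta)$ concentrated on the scale $1/r\sim\alpha\eta/(2\pi)$, and $L_j(\theta)=\prod_{k\ne j}\sin((\theta-\theta_k)/2)/\sin((\theta_j-\theta_k)/2)$ is the classical trigonometric Lagrange factor of degree $(N-1)/2\le\pi/\eta$. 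The Kronecker condition is then immediate from the vanishing of $L_j$ at $\theta_k$ for $k\ne j$, the total degree is at most $r+\pi/\eta\le m$, and the boundedness of $\sum_j|\ell_j(\theta)|$ rests on the cancellation between the polynomial blow-up of $L_j$ away from $\theta_j$ and the fast decay of $K_r$ on the scale $1/r$. The main obstacle is the quantitative tuning: one must choose the kernel profile and its normalization so that the off-diagonal contributions sum to at most $\alpha$ rather than merely $O(1)$, and this is where the factor $2/\alpha$ in the degree budget is exactly consumed. This constant-tracking is the substance of the arguments in \cite{szabados1978some} and \cite{approxint2002}, which I would adapt to the trigonometric setting here.
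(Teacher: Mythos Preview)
The paper does not give its own proof of Theorem~\ref{approxinttheo}; the statement is quoted as background from \cite[Theorem~3.1(a)]{approxint2002}. Your proposal follows the Erd\H os--Szabados line (best approximation plus a correction built from damped Lagrange fundamentals with Lebesgue constant $\le 1+\alpha$), which is precisely the strategy in \cite{szabados1978some} and \cite{approxint2002}, so there is nothing to fault in the plan and it matches the original sources you cite.

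What the present paper contributes instead is a different mechanism for its multivariate generalization, Theorem~\ref{feasibletheo}, which it explicitly advertises as ``much simpler'' than the proofs in \cite{approxint2002} and \cite{szabados1978some}. Rather than constructing fundamental polynomials $\ell_j$ and tracking $\sum_j|\ell_j|$ down to $1+\alpha$, the paper writes the interpolant as $\sigma_N(f)+\sum_j a_j\Phi_N(\cdot-\x_j)$ with the localized kernel $\Phi_N$ of \eref{kerndef}, and solves the linear system \eref{interp_eqns} for the $a_j$. The separation hypothesis $N\ge B\eta(\C)^{-1}$ makes the collocation matrix $[\Phi_N(\x_j-\x_k)]$ diagonally dominant (Propositions~\ref{kernelopsummary_prop} and~\ref{matrix_inv_prop}), yielding $\max_j|a_j|\le cN^{-q}(\epsilon+E_{N/2}(f))$; the bound \eref{phinsumest} on $\sum_j|\Phi_N(\x-\x_j)|$ then finishes the argument in a couple of lines. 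The price for this simplicity is twofold: the constants $c$ and $B$ are generic rather than the explicit $2+\alpha$ and $(1+2/\alpha)\pi/\eta$ you obtain, and the construction uses the Fourier coefficients of $f$ through $\sigma_N(f)$, not merely the sample values $f(\theta_j)$. Your approach buys sharp constants and a purely data-based interpolant at the cost of the Lebesgue-constant computation you correctly flag as the delicate step.
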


\begin{rem}\label{density_rmk}
{\rm A curious feature of Theorem~\ref{approxinttheo} is that one obtains the bound \eref{szabad_bd} on the generalization on the entire torus $\TT^1$ without requiring that the training data $\C$ be ``dense'' in $\TT^1$. 
}
\end{rem}
\bhag{Shallow networks}\label{shallowsect}

Our first theorem in this section shows the connection between the structural properties of the training data and the construction of a trigonometric polynomial $T^\#_N(\mathcal{D})$ that can interpolate the noisy data (i.e., achieve a zero training error), as well as achieve a good generalization error in the sense defined in Section~\ref{intsect}.

Before stating our main results, we first discuss two straightforward ideas. The first of these is to construct a trigonometric interpolatory polynomial $\mathcal{I}(\mathcal{D})$ of minimal degree. The other is to use a larger degree $N$, and solve the system of equations (cf. \eref{kerndef})
\be\label{loc_int_eqns}
\sum_{j=1}^M a_j\Phi_N(\x_\ell-\x_j)=y_\ell, \qquad \ell=1,\cdots, M.
\ee
For a sufficiently large value of $N$, one can show that this sytem of equations has a unique and stable solution. 
We denote the corresponding  polynomial by \be\label{locintdef}
\mathcal{L}_N(\mathcal{D})(\x)=\sum_{j=1}^M a_j\Phi_N(\x-\x_j).
\ee
We examine these constructions using a univariate example.
We consider $f(x)=|\cos x|$, and consider only the case when exact samples of $f$ are known at $128$ points. 
We note that for $x\in \TT$,
$$
f(x)=\frac{2}{\pi}\left\{1-\sum_{k=1}^\infty \frac{(-1)^k}{4k^2-1}\cos (2kx)\right\}.
$$
Therefore, the sequence $\{(k^2+1)^{s/2}\hat{f}(k)\}$ represents a sequence of Fourier coefficients of a continuous function for all $s<1$, but not if $s=1$ (and of course, not for any $s>1$). 
In particular, there is no ``optimal'' class with which to use the results of \cite{bdint} for this function.
\begin{uda}\label{dense_example}
{\rm
We consider $M=128$, $x_j=-\pi+2\pi j/M$, $j=0,\cdots, 127$, and denote the corresponding data by $\mathcal{D}_1$.
 We will refer to this choice as the \textbf{dense case}. 
 Figure~\ref{dense_fig} shows the errors at 1024 points on $[-\pi,\pi]$ for the operators $\mathcal{I}(\mathcal{D}_1)\in \HH_{64}^1$ and $\mathcal{L}_N(\mathcal{D}_1)\in \HH_{128}^1$ with $N=128$. It is noted that the collocation matrix for computing $\mathcal{I}(\mathcal{D}_1)$ is very ill-conditioned, but the matrix for computing $\mathcal{L}_N(\mathcal{D}_1)$ is well-conditioned. 
 Of course, in this very special example, one can compute $\mathcal{I}(\mathcal{D}_1)$ explicitly without having to solve a system of equations, but the intent of this example is to demonstrate the need to have methods more sophisticated than a straightforward interpolation.
\begin{figure}[ht]
\begin{center}
\begin{minipage}{0.3\textwidth}
\begin{center}
\includegraphics[width=\textwidth, height=0.75\textwidth]{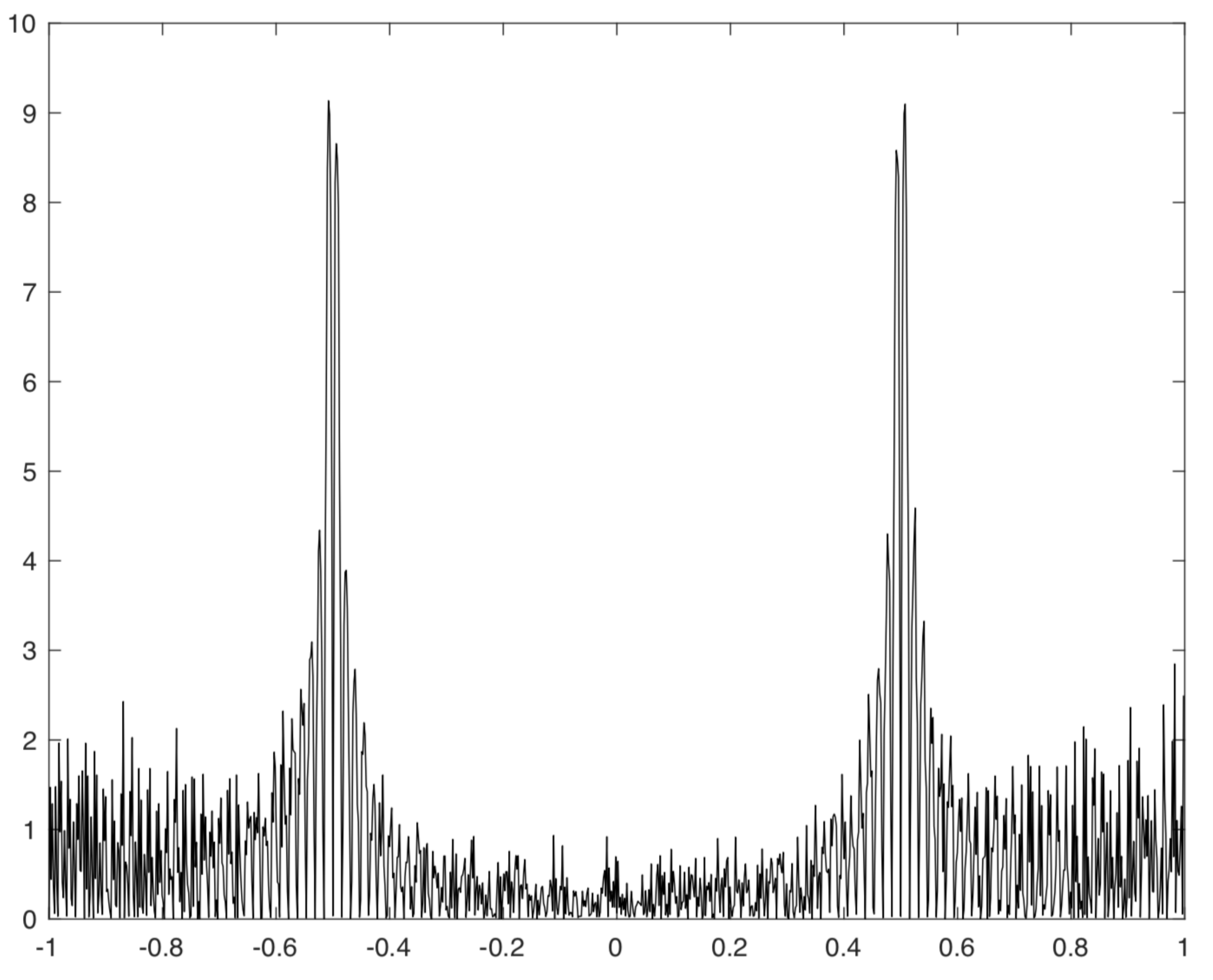} 
\end{center}
\end{minipage}
\begin{minipage}{0.3\textwidth}
\begin{center}
\includegraphics[width=\textwidth, height=0.75\textwidth]{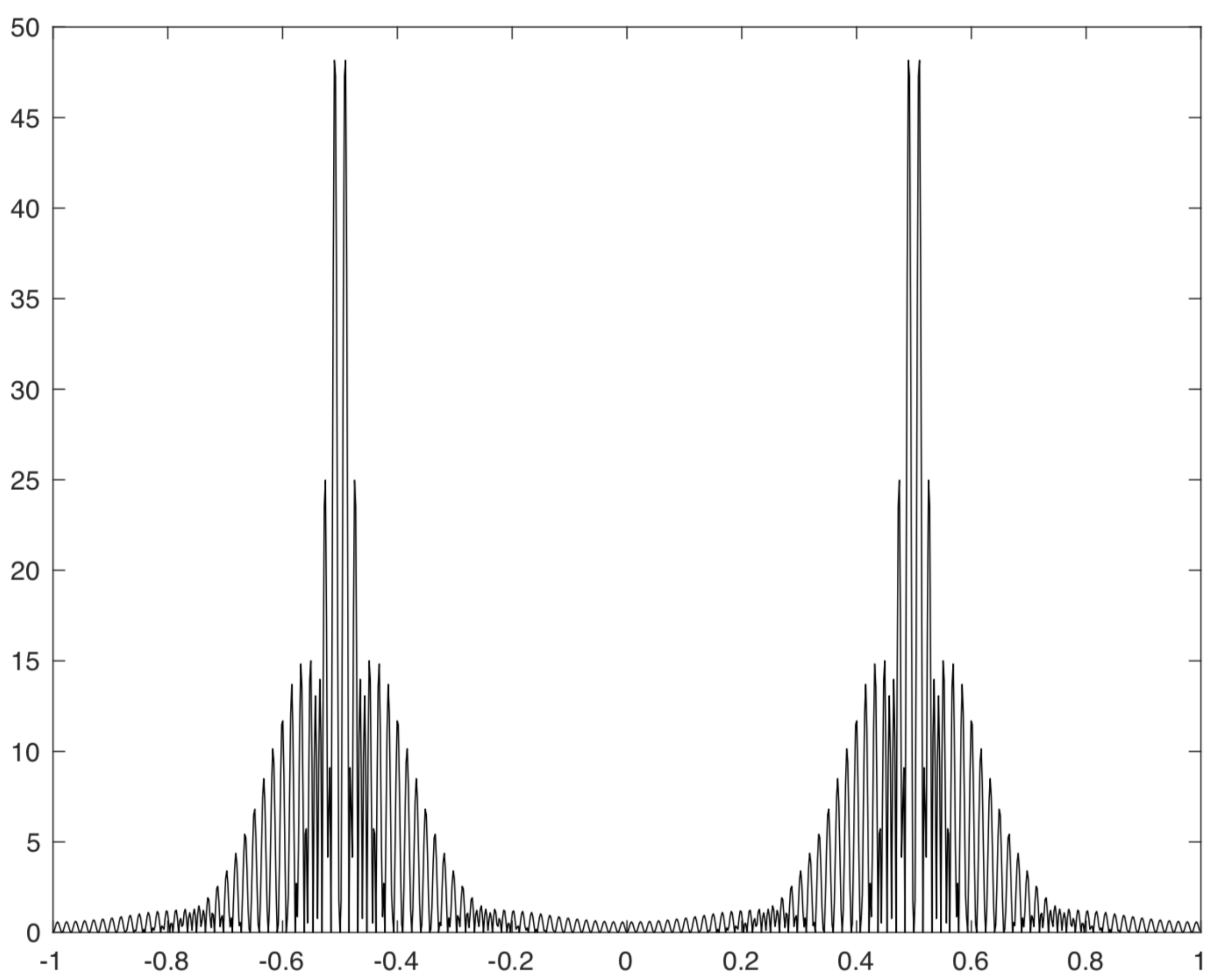} 
\end{center}
\end{minipage}
\end{center}
\caption{With $x_j=-\pi+2\pi j/M$, $j=0,\cdots, 127$, the differences at 1024 equidistant points of $[-\pi,\pi)$ between the true values $|\cos x|$ and the interpolatory polynomial of minimal degree, $\mathcal{I}(\mathcal{D}_1)(x)$ (left), and interpolation obtained by a higher degree localized kernel $\mathcal{L}_{128}(\mathcal{D}_1)(x)$ (right). The errors are magnified 1000 times and the points on the $x$-axis are multiples of $\pi$. The errors are similar near $\pm\pi/2$, but decrease rapidly to $0$ away from these in the figure on the right.}
\label{dense_fig}
\end{figure}
\qed}
\end{uda}
The situation is quite different when the data is not dense 
on $[-\pi,\pi)$.
\begin{uda}\label{not_dense_example}
{\rm
We consider $M=128$, $x_j=\pi/4+\pi j/(2M)$, $j=0,\cdots, 127$, and denote the corresponding data by $\mathcal{D}_2$.
 We will refer to this choice as the \textbf{non-dense case}. 
 Figure~\ref{non_dense_fig} shows the errors at 1024 points on $[-\pi,\pi]$ for the operators $\mathcal{I}(\mathcal{D}_2)$ and $\mathcal{L}_N(\mathcal{D}_2)$ with $N=128$, and $N=256$. As before, the collocation matrix for computing $\mathcal{I}(\mathcal{D}_2)$ is very ill-conditioned, but the matrix for computing $\mathcal{L}_N(\mathcal{D}_2)$ is well-conditioned. 
 However, a comparison between the middle and right figure of  Figure~\ref{non_dense_fig} shows the critical importance of choosing a degree higher than the minimal possible, commensurate with the minimal separation among the interpolation nodes.
 Also, we note in the right figure that the polynomial 
 $\mathcal{L}_{256}(\mathcal{D}_2)$ is highly localized, so that the errors on the interval $[\pi/4,\pi/2)$ are small, but those away from this interval are not; in fact, the polynomial is close to $0$ away from this interval.
\begin{figure}[ht]
\begin{center}
\begin{minipage}{0.3\textwidth}
\begin{center}
\includegraphics[width=\textwidth]{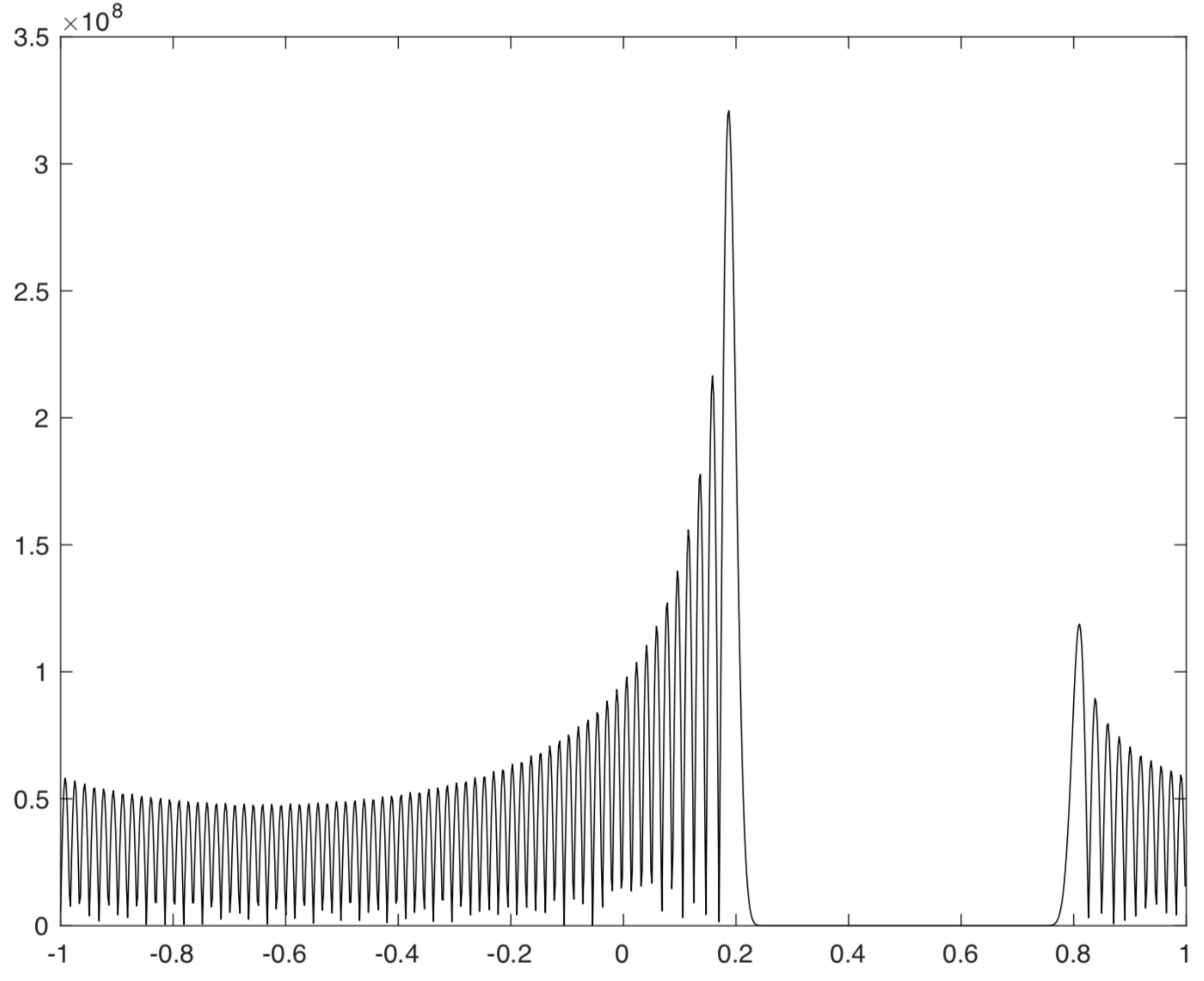} 
\end{center}
\end{minipage}
\begin{minipage}{0.3\textwidth}
\begin{center}
\includegraphics[width=\textwidth]{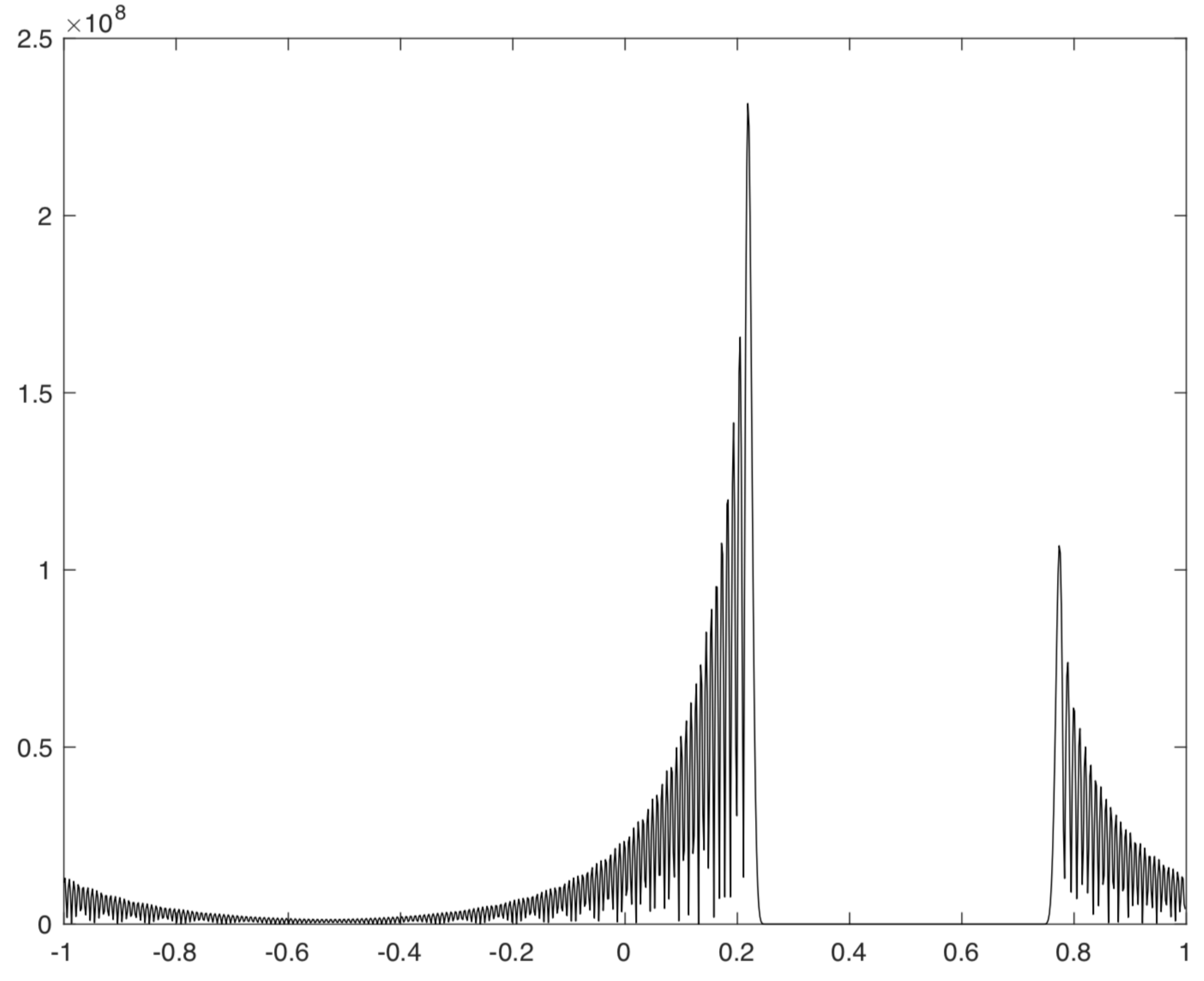} 
\end{center}
\end{minipage}
\begin{minipage}{0.3\textwidth}
\begin{center}
\includegraphics[width=\textwidth]{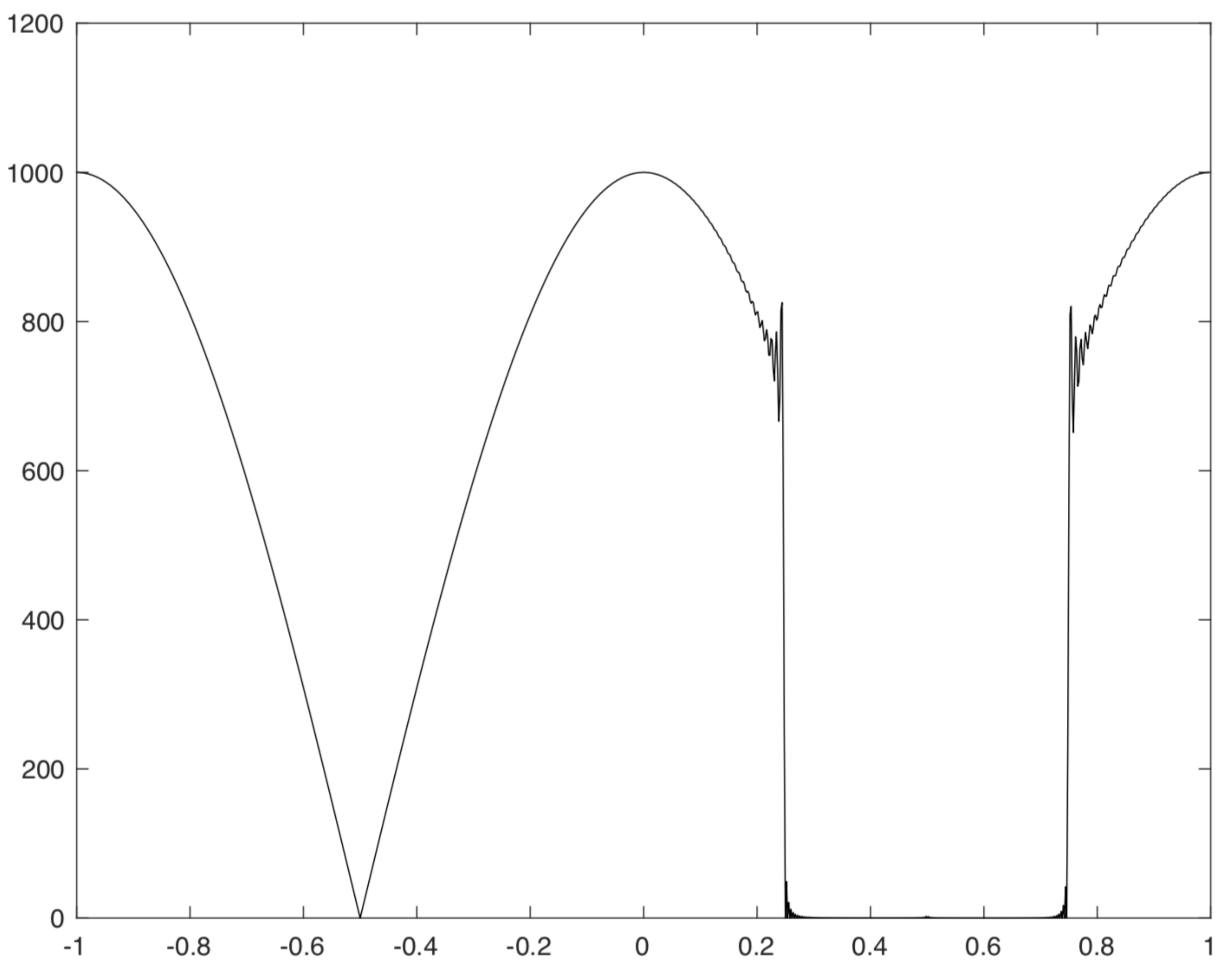} 
\end{center}
\end{minipage}
\end{center}
\caption{With $x_j=\pi/4+\pi j/(256)$, $j=0,\cdots, 127$, the differences at 1024 equidistant points of $[-\pi,\pi)$ between the true values $|\cos x|$ and the interpolatory polynomial of minimal degree, $\mathcal{I}(\mathcal{D}_2)(x)$ (left),  interpolation obtained by a higher degree localized kernel $\mathcal{L}_{128}(\mathcal{D}_2)(x)$ (middle), and $\mathcal{L}_{256}(\mathcal{D}_2)(x)$ (right). The errors are magnified 1000 times and the points on the $x$-axis are multiples of $\pi$. }
\label{non_dense_fig}
\end{figure}
\qed}
\end{uda}

As remarked in Remark~\ref{density_rmk}, it is not possible to achieve a bound analogous to \eref{szabad_bd} unless the training data is sufficiently dense on $\TT^q$ or unless more information about the target function is used in addition to its values at the training data. 
Therefore, let us now assume  that the Fourier coefficients $\hat{f}(\k)$ are known for all  $\k$ with $|\k|_2 <n$ for some $n>0$.
With this information, we can construct $\sigma_n(f)$ using \eref{summkerndef} with $n$ in place of $N$.
For a sufficiently large value of $N$, we may solve a system of equations
\be\label{interp_eqns}
\sum_{j=1}^M a_j\Phi_N(\x_\ell-\x_j) =y_\ell-\sigma_n(f)(\x_\ell), \qquad \ell=1,\cdots,M.
\ee
We then define
\be\label{blending_op_def}
\mathcal{T}_{n,N}^\#(\mathcal{D})(\x)=\sigma_n(f)(\x)+\sum_{j=1}^M a_j\Phi_N(\x-\x_j).
\ee

\begin{uda}\label{blending_example}
{\rm
We continue the examples with the target functions as in Examples~\ref{dense_example} and \ref{not_dense_example}. We estimate $\hat{f}(k)$ using a 128 point discrete Fourier transform. 
In the dense case, this is done using only the training data. 
In the non-dense case, the coefficients is the additional information we need, apart from the training data itself.
In the calculation of the operators $\mathcal{T}_{n,N}^\#$, we use $n=128$, $N=256$. The resulting errors are reported in Figure~\ref{blending_fig}.
We note that in the non-dense case, the errors are uniformly small on the entire interval $[-\pi,\pi)$, in contrast to the errors in the right-most figure in Figure~\ref{non_dense_fig}.
Also, the errors near the interpolation nodes are smaller than near the other singularity of the target function at $-\pi/2$.
In the non-dense case, we might as well use a larger number of Fourier coefficients to make $n=N$. We did this using 1024 point discrete Fourier transform, and $n=N=256$. The results are shown in Figure~\ref{blending_fig} as well.
\begin{figure}[ht]
\begin{center}
\begin{minipage}{0.3\textwidth}
\includegraphics[width=\textwidth]{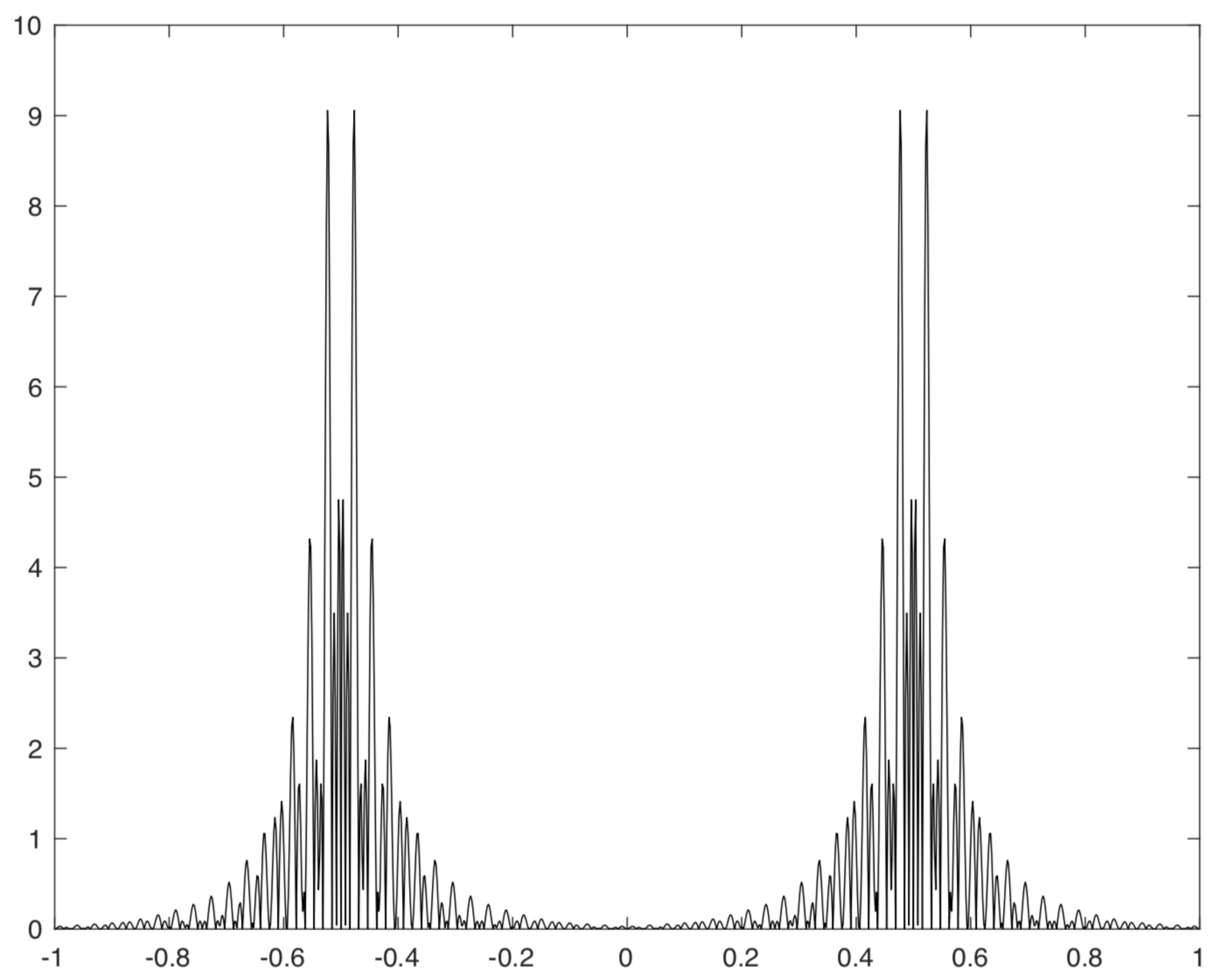} 
\end{minipage}
\begin{minipage}{0.3\textwidth}
\includegraphics[width=\textwidth]{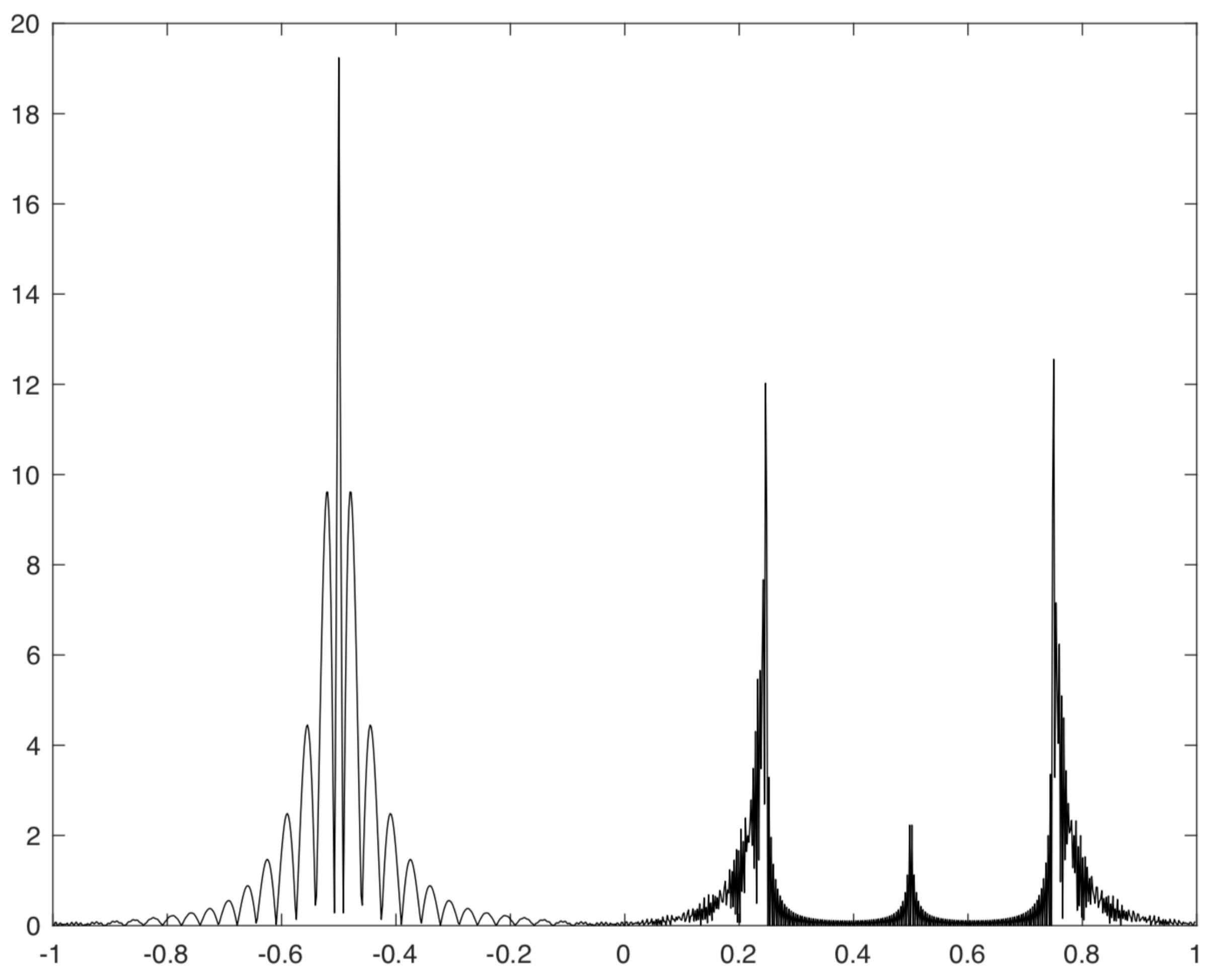} 
\end{minipage}
\begin{minipage}{0.3\textwidth}
\includegraphics[width=\textwidth]{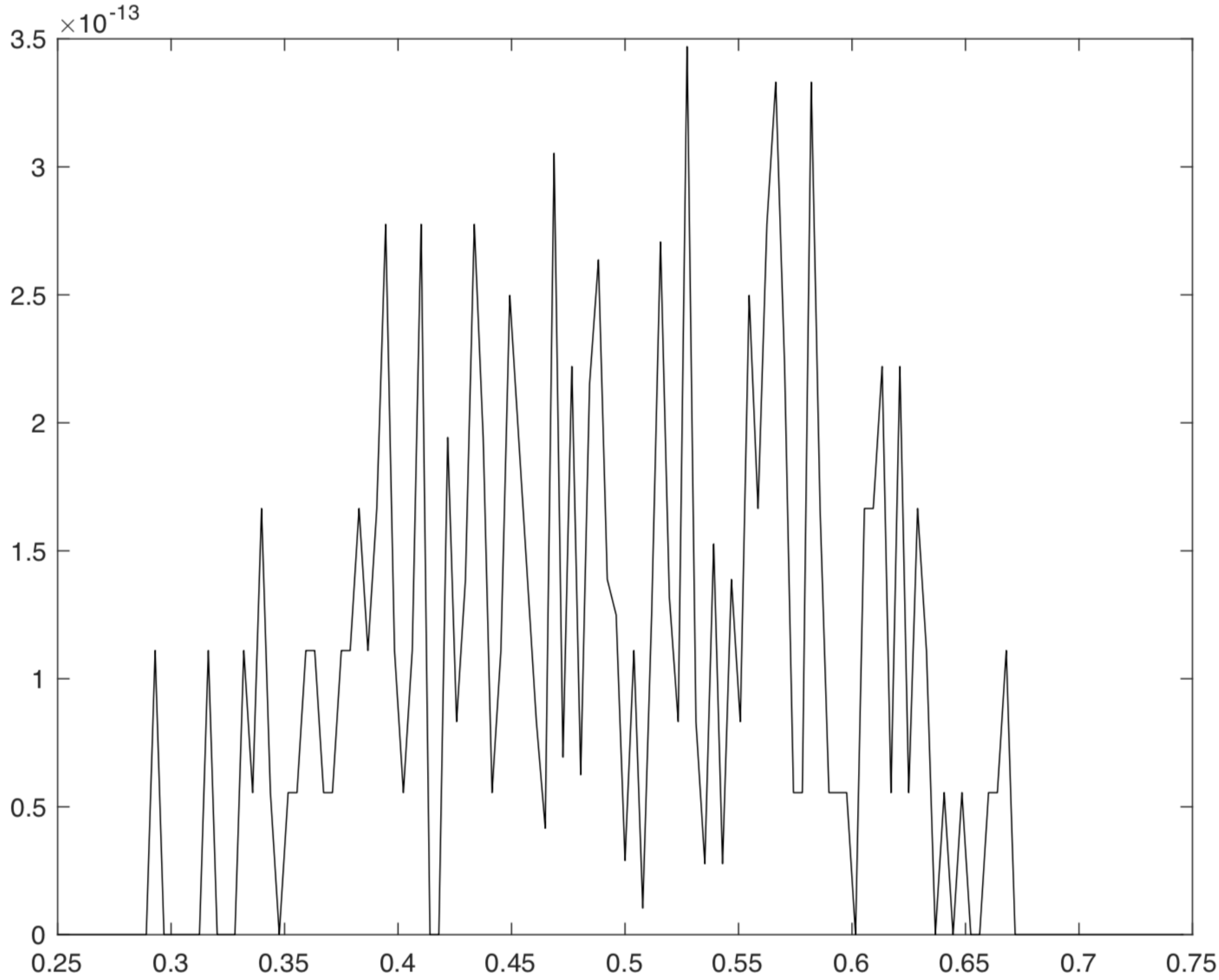} 
\end{minipage}
\begin{minipage}{0.3\textwidth}
\includegraphics[width=\textwidth]{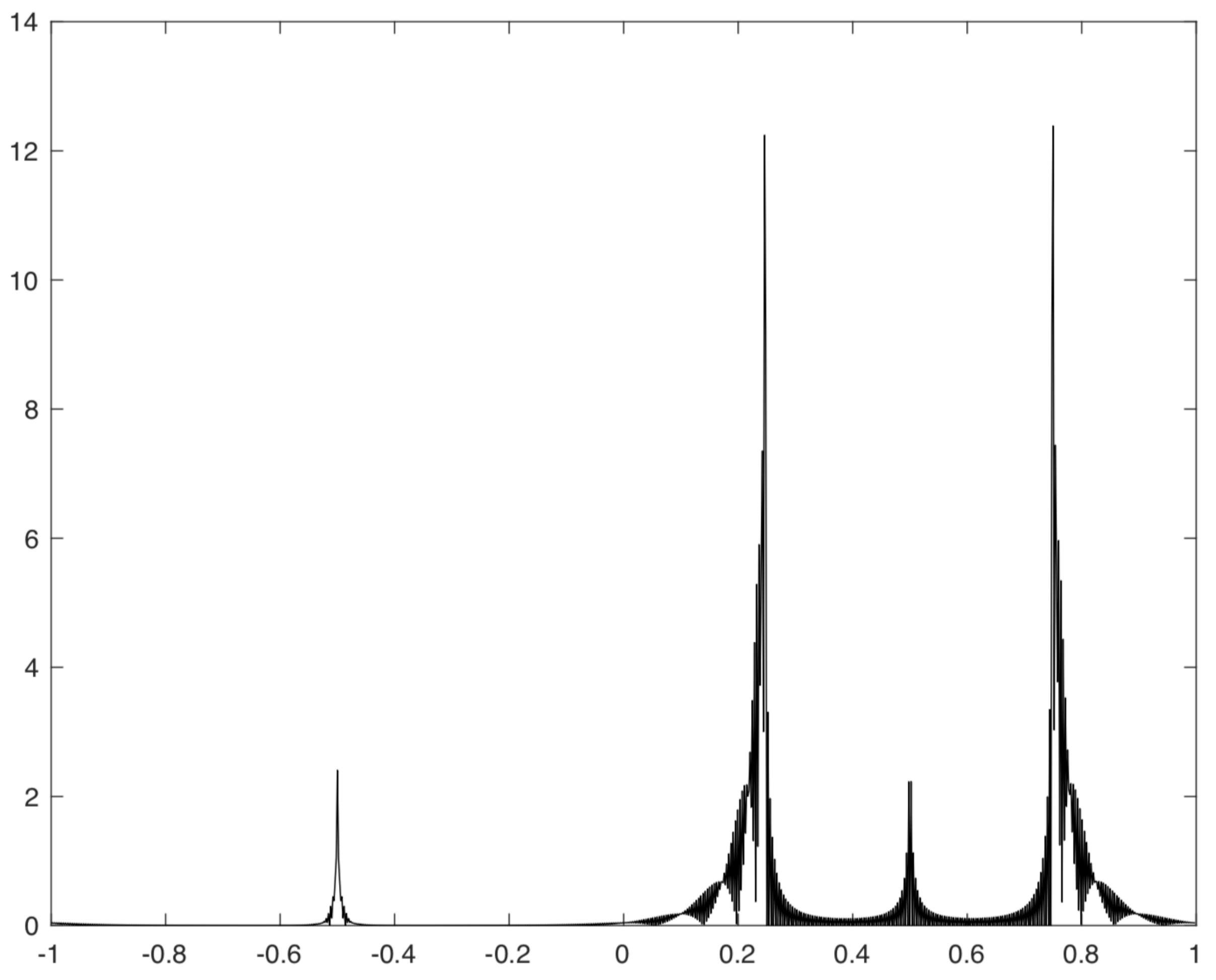} 
\end{minipage}
\begin{minipage}{0.3\textwidth}
\includegraphics[width=\textwidth]{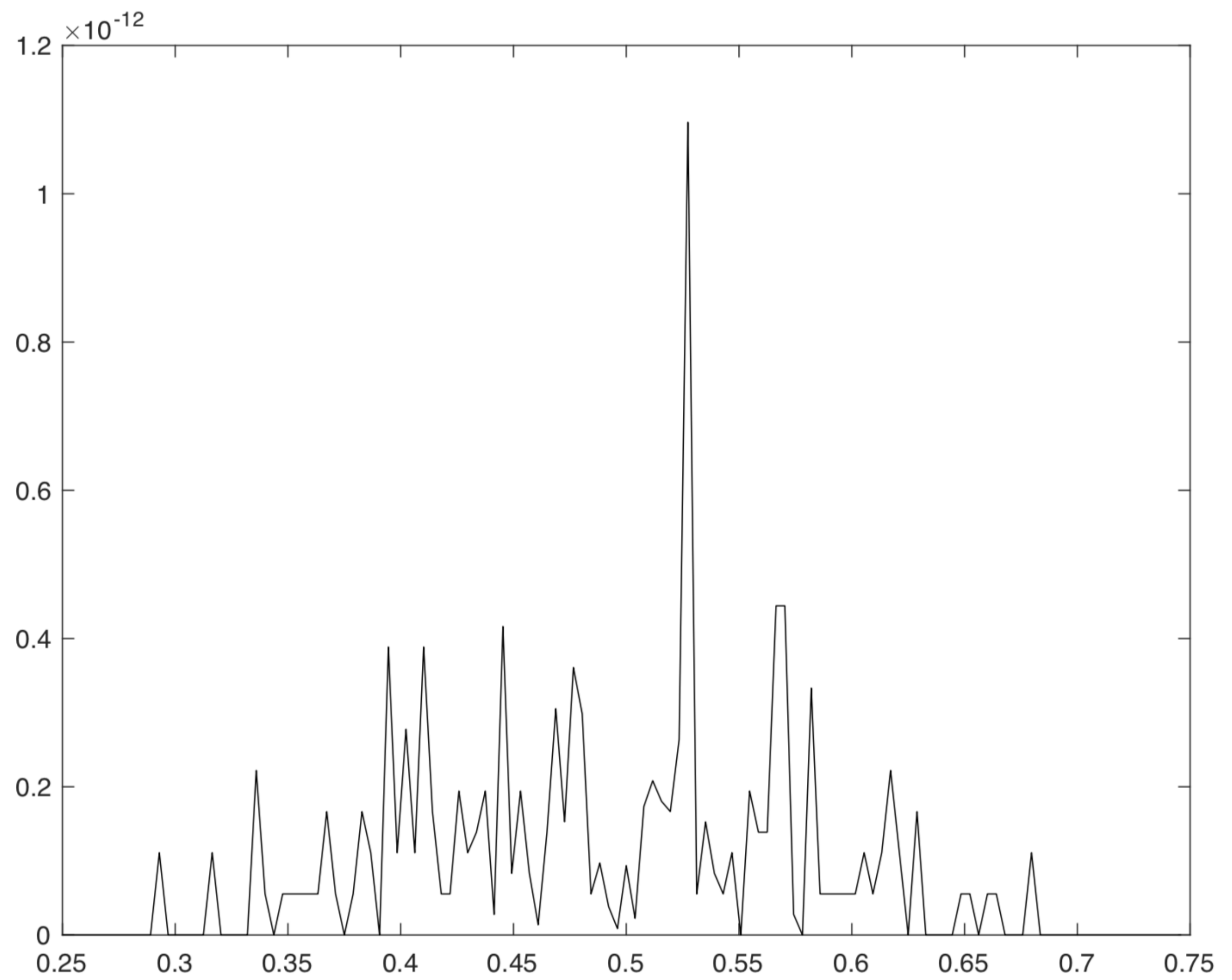} 
\end{minipage}
\end{center}
\caption{The errors at points of $[-\pi,\pi)$ using $\mathcal{T}_{128,256}^\#(\mathcal{D}_1)$ on the upper left, $\mathcal{T}_{128,256}^\#(\mathcal{D}_2)$ in the upper middle. The upper right figure is the error for $\mathcal{T}_{128,256}^\#(\mathcal{D}_2)$ at the interpolation nodes, to verify that the operator does the interpolation correctly. The bottom left and bottom right figures are analogous to the upper middle and upper right figures with $\mathcal{T}_{256,256}^\#(\mathcal{D}_2)$. All errors are maginfied 1000 times, and the $x$ axis has multiples of $\pi$.}
\label{blending_fig}
\end{figure}
\qed}
\end{uda}
 Theorem~\ref{feasibletheo} below is
 a generalization of Theorem~\ref{approxinttheo}, showing that the operators $\mathcal{T}_N^\#=\mathcal{T}_{N,N}^\#$ yield the requisite approximation.
Our proof is much simpler than that of Theorem~\ref{approxinttheo} in either \cite{approxint2002} or \cite{szabados1978some}.

\begin{theorem}\label{feasibletheo}
 There exists $B>0$ with the following property:  for  $f\in 
 C^*$ and $N\ge B\eta(\C)^{-1}$, the trigonometric polynomial $T^\#_N(\mathcal{D})=\mathcal{T}_{N,N}^\#(\mathcal{D})\in\HH_N^q$  in \eref{blending_op_def} is well defined, and satisfies
\be\label{interpbasic}
T^\#_N(\mathcal{D})(\x_j)= y_j, \qquad j=1,\cdots,M, \qquad \mbox{(Zero training error)}
\ee 
and
\be\label{degreebasic}
\|f-T^\#_N(\mathcal{D})\| \le c\left\{\epsilon+ E_{N/2}(f)\right\}. \qquad \mbox{(Good generalization error)}.
\ee
\end{theorem}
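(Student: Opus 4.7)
The plan is to separate the theorem into three tasks: (a) showing the linear system defining the coefficients $a_j$ is uniquely solvable once $N\ge B/\eta(\C)$, (b) observing that interpolation \eref{interpbasic} is then immediate from the construction \eref{blending_op_def}, and (c) establishing the uniform bound \eref{degreebasic}. Part (b) is free: plugging $\x=\x_\ell$ into \eref{blending_op_def} and using \eref{interp_eqns} with $n=N$ gives $T^\#_N(\mathcal{D})(\x_\ell)=\sigma_N(f)(\x_\ell)+(y_\ell-\sigma_N(f)(\x_\ell))=y_\ell$. So the real work is in (a) and (c), and both will rest on one fact: the smoothness of the cut-off $h$ makes $\Phi_N$ a localized kernel satisfying, for every $S>q$, an estimate of the form $|\Phi_N(\x)|\le cN^q(1+N|\x|)^{-S}$, together with $\Phi_N(\zz)\sim N^q$.

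For (a), let $A=[\Phi_N(\x_\ell-\x_j)]_{\ell,j=1}^M$. The diagonal entries equal $\Phi_N(\zz)\sim N^q$. To control the off-diagonal row sums I would decompose $\C\setminus\{\x_\ell\}$ into shells $\{\x_j:2^k\eta(\C)\le|\x_\ell-\x_j|<2^{k+1}\eta(\C)\}$ for $k\ge 0$; the minimal separation bounds the cardinality of the $k$th shell by $c\,2^{kq}$, and combining this with the localization estimate of $\Phi_N$ yields
\be\label{proposal:offdiag}
\sum_{j\ne\ell}|\Phi_N(\x_\ell-\x_j)|\le cN^q\sum_{k\ge 0}2^{k(q-S)}(N\eta(\C))^{-S}=cN^q(N\eta(\C))^{-(S-q)}.
\ee
Choosing $S$ large enough and then $N\eta(\C)\ge B$ for a suitable absolute constant $B$ makes the row sums strictly dominated by the diagonal, so Gershgorin (or Schur's test) gives invertibility together with the uniform bound $\|A^{-1}\|_{\ell^\infty\to\ell^\infty}\le c\,N^{-q}$.

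For (c), I would write $f-T^\#_N(\mathcal{D})=(f-\sigma_N(f))-\sum_j a_j\Phi_N(\circ-\x_j)$ and treat the two summands separately. Standard properties of the de~la~Vallée~Poussin-type operator $\sigma_N$ (these follow from the fact that $h\equiv 1$ on $[0,1/2]$) give $\|f-\sigma_N(f)\|\le c\,E_{N/2}(f)$. From step (a),
\be\label{proposal:abd}
\max_j|a_j|\le c\,N^{-q}\max_\ell|y_\ell-\sigma_N(f)(\x_\ell)|\le c\,N^{-q}\{\epsilon+E_{N/2}(f)\}.
\ee
The same shell argument used to prove \eref{proposal:offdiag}, but now applied at an arbitrary $\x\in\TT^q$ and summed over all of $\C$, gives $\sum_{j=1}^M|\Phi_N(\x-\x_j)|\le cN^q$ uniformly in $\x$. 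Combining these two estimates controls the second summand by $c\{\epsilon+E_{N/2}(f)\}$, and the triangle inequality yields \eref{degreebasic}.

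The only genuine obstacle is the localization/quadrature estimate for $\Phi_N$ on a separated set, which underpins both the Gershgorin bound in (a) and the uniform bound on $\sum_j|\Phi_N(\x-\x_j)|$ in (c); everything else is bookkeeping with the triangle inequality. The hypothesis $N\ge B\eta(\C)^{-1}$ enters exactly at the point where we need the off-diagonal row sum in \eref{proposal:offdiag} to be a small fraction of $\Phi_N(\zz)$, and no further density assumption on $\C$ is required—this is what makes the theorem a genuine analogue of Theorem~\ref{approxinttheo} in the spirit of Remark~\ref{density_rmk}.
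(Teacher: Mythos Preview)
Your proposal is correct and follows essentially the same route as the paper: the paper packages your shell argument as a measure-theoretic lemma (Lemma~\ref{regularlemma}) applied to the counting measure on $\C$, and the diagonal-dominance step as Proposition~\ref{matrix_inv_prop}, but the ingredients---kernel localization $|\Phi_N(\x)|\le cN^q(1+N|\x|)^{-S}$, dyadic shells over an $\eta$-separated set, Gershgorin, and the de~la~Vall\'ee~Poussin bound $\|f-\sigma_N(f)\|\le cE_{N/2}(f)$---are exactly yours. One harmless slip: in \eref{proposal:offdiag} the geometric series $\sum_{k\ge 0}2^{k(q-S)}$ is just a constant, so the bound should read $cN^q(N\eta(\C))^{-S}$ rather than $cN^q(N\eta(\C))^{-(S-q)}$; either way it is $o(\Phi_N(\bs 0))$ as $N\eta(\C)\to\infty$, so the argument is unaffected.
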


In view of the observations in Section~\ref{trigtonetsect}, the proof of the above theorem can be modified with neural network approximations at each stage to obtain the following version, where we overload the notation a bit for simplicity.

Suppose the following system of equations has a solution for some $\tilde{N}$ and $N$ (cf. \eref{trigneunetdef}):
\be\label{net_interp_eqns}
\sum_{j=1}^M a_j\mathbb{G}_{\tilde{N}}(\phi,\Phi_N)(\x_\ell-\x_j) =y_\ell-\mathbb{G}_{\tilde{N}}(\phi,\sigma_N(f))(\x_\ell), \qquad \ell=1,\cdots,M.
\ee
We then define
\be\label{net_blending_op_def}
\mathbb{G}_{\tilde{N},N}^\#(\mathcal{D})(\x)=\mathbb{G}_{\tilde{N}}(\phi,\sigma_N(f))(\x)+\sum_{j=1}^M a_j\mathbb{G}_{\tilde{N}}(\phi,\Phi_N)(\x-\x_j).
\ee
\begin{theorem}\label{net_feasibletheo}
There exist $B, \alpha^*>0$ with the following property. Let $f\in C^*$, $\phi\in C^*(\TT)$, $\hat{\phi}(1)\not=0$, $N\ge B\eta(\C)^{-1}$.  Let $\tilde{N}$ be such that
\be\label{netcond}
\|\Phi_N-\mathbb{G}_{\tilde{N}}(\phi,\Phi_N)\|\le \alpha^*.
\ee
Then the network $\mathbb{G}^\#_N(\mathcal{D})=\mathbb{G}_{\tilde{N},N}^\#(\mathcal{D})$  in \eref{net_blending_op_def} is well defined and satisfies
\be\label{net_interpbasic}
\mathbb{G}^\#_N(\mathcal{D})(\x_j)= y_j, \qquad j=1,\cdots,M, \qquad \mbox{(Zero training error)}
\ee 
and
\be\label{net_degreebasic}
\|f-\mathbb{G}^\#_N(\mathcal{D})\| \le c\left\{\epsilon+ E_{N/2}(f)+N^{q/2}E_{\tilde{N}}(1;\phi)\|f\|\right\}. \qquad \mbox{(Good generalization error)}.
\ee
\end{theorem}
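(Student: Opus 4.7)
The plan is to regard \eref{net_blending_op_def} as a perturbation of the trigonometric construction \eref{blending_op_def} analyzed in Theorem~\ref{feasibletheo}, with the perturbation size controlled by the hypothesis \eref{netcond} and by Proposition~\ref{periodic_relu_prop}. The zero training error \eref{net_interpbasic} is immediate from the design of \eref{net_interp_eqns}: evaluating \eref{net_blending_op_def} at $\x_\ell$ and substituting \eref{net_interp_eqns} cancels the $\mathbb{G}_{\tilde{N}}(\phi,\sigma_N(f))(\x_\ell)$ term and leaves exactly $y_\ell$, as soon as the system \eref{net_interp_eqns} is solvable. So it remains to establish solvability and the generalization bound \eref{net_degreebasic}.

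For solvability, I would encode \eref{interp_eqns} and \eref{net_interp_eqns} in matrix form as $A\mathbf{a}^{(1)}=\mathbf{b}^{(1)}$ and $\widetilde{A}\mathbf{a}^{(2)}=\mathbf{b}^{(2)}$, with $A_{\ell j}=\Phi_N(\x_\ell-\x_j)$, $\widetilde{A}_{\ell j}=\mathbb{G}_{\tilde{N}}(\phi,\Phi_N)(\x_\ell-\x_j)$, and right-hand sides $b^{(1)}_\ell=y_\ell-\sigma_N(f)(\x_\ell)$, $b^{(2)}_\ell=y_\ell-\mathbb{G}_{\tilde{N}}(\phi,\sigma_N(f))(\x_\ell)$. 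The proof of Theorem~\ref{feasibletheo} implicitly shows that, for $N\ge B\eta(\C)^{-1}$, the matrix $A$ (after rescaling by $\Phi_N(\zz)\sim N^q$) is strictly diagonally dominant -- this is where the localization of $\Phi_N$ combines with the minimal-separation hypothesis -- with a condition number independent of $M$. Since every entry of $\widetilde{A}-A$ is bounded in modulus by $\|\Phi_N-\mathbb{G}_{\tilde{N}}(\phi,\Phi_N)\|\le\alpha^*$, choosing $\alpha^*$ small relative to the localization constants for $\Phi_N$ (which depend only on $q$, $h$, $B$) keeps $\widetilde{A}$ diagonally dominant as well, so $\widetilde{A}^{-1}$ exists and admits the same operator-norm estimate as $A^{-1}$ via a Neumann series.

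For the generalization bound, I would write
\[
f - \mathbb{G}^\#_N(\mathcal{D}) \;=\; \bigl(f - T^\#_N(\mathcal{D})\bigr) \;+\; \bigl(T^\#_N(\mathcal{D}) - \mathbb{G}^\#_N(\mathcal{D})\bigr),
\]
bound the first piece directly by Theorem~\ref{feasibletheo} (contributing $c\{\epsilon+E_{N/2}(f)\}$), and split the second piece as
\[
\bigl[\sigma_N(f) - \mathbb{G}_{\tilde{N}}(\phi,\sigma_N(f))\bigr] \;+\; \sum_{j=1}^{M}\Bigl[a_j^{(1)}\Phi_N(\cdot-\x_j) - a_j^{(2)}\mathbb{G}_{\tilde{N}}(\phi,\Phi_N)(\cdot-\x_j)\Bigr].
\]
The first bracket is handled by a triangle inequality around Proposition~\ref{periodic_relu_prop} (using also $\|f-\sigma_N(f)\|\le cE_{N/2}(f)$), producing the term $cN^{q/2}E_{\tilde{N}}(1;\phi)\|f\|$ on the right-hand side of \eref{net_degreebasic}. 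The sum is split once more as $\sum_j a_j^{(1)}[\Phi_N-\mathbb{G}_{\tilde{N}}(\phi,\Phi_N)](\cdot-\x_j) + \sum_j(a_j^{(1)}-a_j^{(2)})\mathbb{G}_{\tilde{N}}(\phi,\Phi_N)(\cdot-\x_j)$; the first part uses \eref{netcond} together with a sup-norm bound on the superposition with weights $\mathbf{a}^{(1)}$, while the second uses the perturbation estimate $\|\mathbf{a}^{(1)}-\mathbf{a}^{(2)}\|\lesssim \|A^{-1}\|\bigl(\|A-\widetilde{A}\|\,\|\mathbf{a}^{(2)}\| + \|\mathbf{b}^{(1)}-\mathbf{b}^{(2)}\|_\infty\bigr)$, where $\|\mathbf{b}^{(1)}-\mathbf{b}^{(2)}\|_\infty\le\|\sigma_N(f)-\mathbb{G}_{\tilde{N}}(\phi,\sigma_N(f))\|$ is again controlled as above.

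The main obstacle I foresee is the sup-norm control of a superposition $\sum_j c_j\Phi_N(\cdot-\x_j)$ (and its $\mathbb{G}$-variant) in terms of a norm of $\mathbf{c}$, uniformly in $M$. This is a Marcinkiewicz--Zygmund/frame-type estimate implicit in the proof of Theorem~\ref{feasibletheo}, resting on the joint use of the localization of $\Phi_N$ and the hypothesis $N\ge B\eta(\C)^{-1}$; without it, both the coefficient bound $\|\mathbf{a}^{(1)}\|$ and the translation from the perturbation inequality back to a function-norm estimate would grow with $M$. Once it is in hand, the Neumann series argument, the perturbation inequality, and the triangle-inequality expansions become routine, and the constants $B$, $\alpha^*$ can be pinned down from the properties of $h$ and $q$ alone, independently of the data or the target function $f$.
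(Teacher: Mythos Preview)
Your proposal is correct, but it takes a more circuitous route than the paper. You decompose $f-\mathbb{G}^\#_N(\mathcal{D})$ through the intermediary $T^\#_N(\mathcal{D})$, which forces you to compare two coefficient vectors $\mathbf{a}^{(1)}$ and $\mathbf{a}^{(2)}$ via a Neumann-series perturbation estimate, and then to control several cross terms. The paper instead simply reruns the proof of Theorem~\ref{feasibletheo} with the network operators substituted throughout: Proposition~\ref{matrix_inv_prop} is stated for a general $\Psi$ satisfying $\|\Phi_N-\Psi\|\le\alpha^*$, so applying it with $\Psi=\mathbb{G}_{\tilde N}(\phi,\Phi_N)$ gives solvability of \eref{net_interp_eqns} and the coefficient bound $\max_j|a_j|\le cN^{-q}\max_\ell|y_\ell-\mathbb{G}_{\tilde N}(\phi,\sigma_N(f))(\x_\ell)|$ in one stroke; then one mimics \eref{pf1eqn4} with $\mathbb{G}_{\tilde N}(\phi,\sigma_N(f))$ in place of $\sigma_N(f)$ and $\mathbb{G}_{\tilde N}(\phi,\Phi_N)$ in place of $\Phi_N$, invoking \eref{relu_approx} instead of \eref{goodapprox}. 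The superposition bound you flag as the main obstacle is exactly \eref{phinsumest} (plus $M\le c\eta(\C)^{-q}\le cN^q$ to absorb the uniform $\alpha^*$ perturbation), and it is used in both approaches. Your detour buys nothing extra here; the direct substitution avoids the coefficient comparison entirely and gives the same constants.
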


\begin{rem}\label{shallow_number_rmk}
{\rm A volume comparison argument shows that the number $M$ of data points  satisfies $M\le c\eta(\C)^{-q}$. Thus, 
Theorems~\ref{feasibletheo} and ~\ref{net_feasibletheo} show that for a right configuration of the training data, a good generalization error as well as zero training error can be achieved by choosing the number of parameters proportional to the number of data points. It is demonstrated in \cite{belkin2018understand} that for RBF approximation, this phenomenon seems to hold in many applications with the number of parameters exactly equal to the number of data points.
}
\end{rem}

\begin{rem}\label{noise_rmk}
{\rm In practice, the training data is high dimensional and sparse; i.e., $\eta(\C)$ is large. 
The requirement that $N\ge B\eta(\C)^{-1}$ is therefore satisfied with moderate degrees $N$. 
}
\end{rem}

As demonstrated before, one cannot construct $T^\#_N(\mathcal{D})$ based only on the training data $\C$, unless $\C$ is sufficiently dense on $\TT^q$. 
We have already discussed an effort in the form of the operators $\mathcal{L}_N(\mathcal{D})$ in Example~\ref{not_dense_example}. 
An even simpler approach of just considering
$$
\frac{1}{\Phi_N(0)}\sum_{k=1}^M y_k\Phi_N(\circ-\x_k)
$$
yields similar bounds on the generalization error as those obtained by $\mathcal{L}_N(\mathcal{D})$. 
Of course, the training error for this simple construction is not $0$, but the localization properties of $\Phi_N$ ensure that it is small.
In general, if we anticipate a scenario where the training data sets become increasingly dense on some compact subset $K\subset \TT^q$, then we cannot expect convergence of  trigonometric polynomials that interpolate a noisy data, where the noise level does not decrease as well.

Another approach, described in \cite{bdint}, is to minimize a high order Sobolev norm of the trigonometric polynomial subject to the interpolatory conditions. 
This approach has been used to great advantage for a numerical solution of some notoriously hard partial differential equations in 2 or 3 dimensions.
However, the calculations are very ill-conditioned and require  very carefully designed algorithms.

We describe a softer regularization scheme that does not require high order Sobolev norms, and yields both good training and generalization errors. The generalization error is given point-wise, and is commensurate with the estimates given in Theorem~\ref{feasibletheo} when there is no noise.

 The space $W^*=W^*(\TT^q)$ consists of all continuously differentiable functions $f\in C^*$. We define
\be\label{sobolnormdef}
\|f\|_{W^*}=\|f\|_{W^*(\TT^q)} = \sum_{j=1}^q\|D_j f\|.
\ee
 For $n>0$ and $T\in \HH_n^q$, let
\be\label{regularizationdef}
R_n(T)=\max_{1\le j\le M} |y_j-T(\x_j)| + \frac{1}{n}\|T\|_{W^*}.
\ee

\begin{theorem}\label{constrtheo}
Let  $f\in W^*$,  $B$ be as in Theorem~\ref{feasibletheo},  $N\ge B\eta(\C)^{-1}$, and $T^*(\mathcal{D})=\disp\argmin_{T\in \HH_{N}^q}R_{N}(T)$. Then
\be\label{regerrest}
\max_{1\le j\le M}|y_j-\T^*(\mathcal{D})(\x_j)|\le \min_{T\in \HH_N^q}R_N(T) \le c\left\{\epsilon+\frac{
1}{N}\|f\|_{W^*}\right\}. \qquad \mbox{(Good training error)}
\ee
Let  $\x\in \TT^q$, and $\delta=\disp\min_{1\le j\le M}|\x-\x_j|$.  Then
\be\label{generalerr}
|f(\x)-T^*(\mathcal{D})(\x)| \le c(1+N\delta)\left\{\epsilon+\frac{
1}{N}\|f\|_{W^*}\right\}. \qquad \mbox{(Good generalization error)}.
\ee
\end{theorem}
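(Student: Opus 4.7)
The plan is to use the standard comparator trick from regularization theory, comparing $T^*(\mathcal{D})$ to the smooth projection $\sigma_N(f)$ defined in \eref{summkerndef}. Two standard properties of $\sigma_N$ will be needed: first, the Jackson-type estimate $\|f-\sigma_N(f)\|\le c\,E_{N/2}(f)\le c\,\|f\|_{W^*}/N$ for $f\in W^*$; and second, the fact that $D_j$ commutes with the Fourier multiplier $\sigma_N$ and $\sigma_N$ is uniformly bounded on $C^*$, which gives $\|\sigma_N(f)\|_{W^*}\le c\,\|f\|_{W^*}$. Both facts follow directly from the construction of $\sigma_N$ as convolution with a smooth, compactly supported (in frequency) kernel.

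For the training error bound \eref{regerrest}, I would plug $T=\sigma_N(f)\in\HH_N^q$ into the functional $R_N$. Since $|y_j-\sigma_N(f)(\x_j)|\le \epsilon+\|f-\sigma_N(f)\|$ for each $j$, and using the two properties above,
\[
R_N(\sigma_N(f))\le \epsilon+c\,\|f\|_{W^*}/N + (1/N)\,c\,\|f\|_{W^*}\le c\{\epsilon+\|f\|_{W^*}/N\}.
\]
By the defining optimality of $T^*(\mathcal{D})$ we have $R_N(T^*(\mathcal{D}))\le R_N(\sigma_N(f))$, which simultaneously gives the training error bound (since $\max_j|y_j-T^*(\mathcal{D})(\x_j)|\le R_N(T^*(\mathcal{D}))$) and the auxiliary Sobolev control $\|T^*(\mathcal{D})\|_{W^*}\le N\,R_N(T^*(\mathcal{D}))\le c\{N\epsilon+\|f\|_{W^*}\}$, which is the key input for the pointwise estimate.

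For the pointwise bound \eref{generalerr}, fix $\x\in\TT^q$ and let $\x_{j^*}$ achieve the minimum, so $|\x-\x_{j^*}|=\delta$. Apply the triangle inequality:
\[
|f(\x)-T^*(\mathcal{D})(\x)|\le |f(\x)-f(\x_{j^*})|+|f(\x_{j^*})-y_{j^*}|+|y_{j^*}-T^*(\mathcal{D})(\x_{j^*})|+|T^*(\mathcal{D})(\x_{j^*})-T^*(\mathcal{D})(\x)|.
\]
The definition \eref{sobolnormdef} of the $W^*$-norm together with the mean value theorem applied coordinatewise yields $|g(\x)-g(\y)|\le \|g\|_{W^*}|\x-\y|$ for any $g\in W^*$. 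Applied to both $f$ and $T^*(\mathcal{D})$, the first and fourth terms are bounded by $\|f\|_{W^*}\delta$ and $\|T^*(\mathcal{D})\|_{W^*}\delta\le c\delta\{N\epsilon+\|f\|_{W^*}\}$, respectively. The second term is at most $\epsilon$ by definition of $\epsilon$, and the third is bounded by the training error estimate \eref{regerrest}. Summing and regrouping gives
\[
|f(\x)-T^*(\mathcal{D})(\x)|\le c\{\epsilon+\|f\|_{W^*}/N\}+cN\delta\{\epsilon+\|f\|_{W^*}/N\}=c(1+N\delta)\{\epsilon+\|f\|_{W^*}/N\},
\]
which is \eref{generalerr}.

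I do not foresee a real obstacle; the proof is essentially three ingredients (optimality of $T^*(\mathcal{D})$, Jackson/boundedness for $\sigma_N$, and the Lipschitz-in-$\|\cdot\|_{W^*}$ property) combined by the triangle inequality. The only point requiring some care is making sure the Sobolev norm control $\|T^*(\mathcal{D})\|_{W^*}\le c\{N\epsilon+\|f\|_{W^*}\}$ is used with the correct factor $N\delta$ so that the final bound collapses cleanly to the claimed $(1+N\delta)$ prefactor; the calculation above shows this occurs automatically. Note that, in contrast to Theorem~\ref{feasibletheo}, the assumption $N\ge B\eta(\C)^{-1}$ is not actually invoked here, and is likely stated for consistency of the statement with the earlier constructions rather than being necessary to the argument.
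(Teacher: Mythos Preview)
Your argument is correct, but it uses a different comparator than the paper. The paper plugs in the interpolant $\mathcal{T}^\#=T_N^\#(\mathcal{D})$ from Theorem~\ref{feasibletheo}---this is precisely why the hypothesis $N\ge B\eta(\C)^{-1}$ is present---and then invokes the Czipser--Freud simultaneous approximation estimate (Corollary~\ref{czipserfreudcor}) to control $\|\mathcal{T}^\#\|_{W^*}$ from the uniform bound \eref{degreebasic}. Your choice of $\sigma_N(f)$ is more elementary: the Sobolev control $\|\sigma_N(f)\|_{W^*}\le c\|f\|_{W^*}$ follows directly from the commutation of $\sigma_N$ with $D_j$ together with the uniform boundedness \eref{opbd}, so neither Theorem~\ref{feasibletheo} nor Corollary~\ref{czipserfreudcor} is needed, and your observation that $N\ge B\eta(\C)^{-1}$ is not actually invoked is correct for your route. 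The paper's route has the minor aesthetic advantage that the data-fidelity term in $R_N(\mathcal{T}^\#)$ vanishes exactly (since $\mathcal{T}^\#$ interpolates) and ties the regularization result back to the earlier construction; your route is shorter and self-contained. The pointwise bound \eref{generalerr} is established identically in both arguments via the triangle inequality and the Lipschitz property induced by the $W^*$-norm.
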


\begin{rem}\label{usualMLrem}
{\rm
Theorem~\ref{constrtheo} makes no assumption on the target function except for differentiability. This is in contrast to usual machine learning theory, where one has to assume that the target function belongs to a reproducing kernel Hilbert space, with the kernel prescribed by the learning algorithm. 
}
\end{rem}
\begin{rem}\label{constrem}
{\rm
The estimate \eref{generalerr} shows that if $\x$ is very close to the training data so that $N\delta<1$, then the generalization error at $\x$ satisfies the same upper bound \eref{regerrest} that holds for $R_N(T^*)$. 
As remarked earlier in Remark~\ref{noise_rmk}, we have greater liberty in choosing  $N$ when the data is sparse. To take advantage of this fact, let $\x$ be such that $N\delta\ge 1$. Then \eref{generalerr} can be reformulated in the form
\be\label{generalerr_reform}
|f(\x)-T^*(\mathcal{D})(\x)|\le c\delta\left\{N\epsilon+\|f\|_{W^*}\right\}.
\ee
The term $\delta\|f\|_{W^*}$ is clearly a customary bound from numerical analysis in view of the mean value theorem. If $\epsilon \le \eta(\C)\|f\|_{W^*}$, it is possible to choose $N\sim \eta(\C)^{-1}$ so that error bound is $c\delta\|f\|_{W^*}$. 
}
\end{rem}
\begin{rem}\label{trigsupnorm}
{\rm
It is well known (cf. \cite[Chapter~X, Theorem~7.28]{zygmund} for the univariate case) that for any $T\in\HH_N^q$,
\be\label{discretization}
\|T\|\sim \max_{|\m|_\infty\le 3N-1}\left|T\left(\frac{2\pi \m}{3N}\right)\right|.
\ee
Therefore, 
$$
R_N(T)\sim\max_{1\le j\le M}|y_\ell-\sum_\k \hat{T}(\k)\exp(i\k\cdot\x_j)| +\frac{1}{N}\sum_{j=1}^q\max_{|\m|_\infty\le 3N-1}\left|\sum_\k k_j\hat{T}(\k)\exp\left(2\pi i\frac{\k\cdot\m}{3N}\right)\right|.
$$
If we replace $R_N(T)$ by the expression on the right hand side of the above equation, we get an optimization problem directly in terms of the coefficients $\hat{T}(\k)$. The theoretical results are not affected except for the actual values of the constants involved.
}
\end{rem}

\begin{rem}\label{directneunetrem}
{\rm
In this remark, let $T^*=\T^*(\mathcal{D})$.
 In view of \eref{trigneunetapprox}, the estimates \eref{regerrest} and \eref{generalerr} imply for any activation function $\phi$,
$$
\max_{1\le j\le M}|y_j-\mathbb{G}_{\tilde{N}}(\phi, T^*)(\x_j)|\le 
c\left\{\epsilon+\frac{
1}{N}\|f\|_{W^*}\right\}+\frac{4E_{\tilde{N}}(1;\phi)}{\hat{\phi(1)}}\sum_\k |\widehat{T^*}(\k)|,
$$
and
$$
|f(\x)-\mathbb{G}_{\tilde{N}}(\phi, T^*)(\x)| \le c(1+N\delta)\left\{\epsilon+\frac{
1}{N}\|f\|_{W^*}\right\}+\frac{4E_{\tilde{N}}(1;\phi)}{\hat{\phi(1)}}\sum_\k |\widehat{T^*}(\k)|
$$
respectively.
In particular, for the activation function obtained from the smooth ReLU function, the extra error terms decrease exponentially rapidly with $\tilde{N}$.
 It is therefore tempting to set up the regularization functional \eref{regularizationdef} directly with neural networks with free coefficients, weights, and thresholds to be determined by a suitable optimization technique. 
 However, the estimate \eref{trigneunetapprox} implies a strong connection between  these parameters for the network approximating a trigonometric polynomial. Therefore, the solution to such a direct approach with neural networks is not guaranteed to give the right training and testing errors.
}
\end{rem}

\bhag{Deep networks}\label{deepsect}

The following discussion regarding the terminology for  deep networks, including Figure~\ref{graphpict}, is based on the discussion in \cite{dingxuanpap}, and elaborates upon the same.

A commonly used definition of a deep network is the following. 
Let $\phi :\RR\to\RR$ be an activation function; applied to a vector $\x=(x_1,\cdots,x_q)$, $\phi(\x)=(\phi(x_1),\cdots,\phi(x_q))$. Let $L\ge 1$ be an integer, for $\ell=0,\cdots,L$, let $q_\ell\ge 1$  be an integer ($q_0=q$),  $T_\ell :\RR^{q_\ell}\to \RR^{q_{\ell+1}}$ be an affine transform, where $q_{L+1}=1$.  A deep network with $L$ hidden layers is defined as the compositional function
\be\label{usual_deep}
\x\mapsto T_L(\phi(T_{L-1}(\phi(T_{L-2}\cdots\phi(T_0(\x))\cdots).
\ee
This definition has several shortcomings. 
First, a function may have more than one compositional representation, as we will demonstrate shortly, so that the affine transforms and $L$ are not determined uniquely by the function itself. 
Second, this notion does not capture the connection between the nature of the target function and its approximation.
Third, the affine transforms $T_\ell$ define a special directed acyclic graph (DAG). 
It is cumbersome to describe notions of weight sharing, convolutions, sparsity, skipping of layers, etc. in terms of these transforms. 
Therefore, we have proposed in \cite{dingxuanpap}, to describe a deep network more generally as a directed acyclic graph (DAG) architecture.

Let $\mathcal{G}$ be a DAG, with the set of nodes $V\cup \mathbf{S}$, where $\mathbf{S}$ is the set of source nodes, and $V$ that of non-source nodes. A $\mathcal{G}$-function is defined as follows. Each of the  in-edges to each node in $V\cup \mathbf{S}$ represents an input real variable. 
For each node $v\in V\cup \mathbf{S}$, we denote its in-degree by $d(v)$. A node $v\in V\cup \mathbf{S}$ itself represents the evaluation of a real valued function $f_v$ of the $d(v)$ inputs. The out-edges fan out the result of this evaluation. 
Each of the source nodes obtains an input from some Euclidean space. 
Other nodes can also obtain such an input, but by introducing dummy nodes, it is convenient to assume that only the source nodes obtain an input from the Euclidean space. 

The notion of the level (or height) of a node is defined as follows.
The level of a source node is $0$; it represents a shallow network. 
The level of $v\in V$ is the length of the longest path from the nodes in $\mathbf{S}$ to $v$. 
The level of $v$ will be denoted by $H(v)$.

In \cite{dingxuanpap}, we have argued that deep networks display better approximation properties than shallow networks because they can take advantage of a compositional structure in the target function which shallow networks cannot.
However, compositionality is the property of an expression for the function; not an intrinsic property of a function itself. 
A simple example in the univariate case is the constant function $f(x)\equiv 2$, $x\in [0,1]$, that can also be expressed as a compositional function
$$f(x)=(x+1)\cosh\left(\log\left(\frac{2+\sqrt{3-2x-x^2}}{x+1}\right)\right), \qquad x\in [0,1].$$
It is not clear whether two different DAG structures can give rise to the same function. Even if we assume a certain DAG, it
is not clear that the choice of the constituent functions is uniquely determined for a given function on $\RR^q$. 
For example, one can write
$$
\cos^4 x= ((\cos x)^2)^2=(\cos^2 x)^2=(1/4)(1+\cos(2x))^2.
$$
The second expression above has the structure $h_1(h_2(h_3(x)))$, and the other two have the structures $g_1(g_2(x))$ or $f_1(f_2(x))$, both representing 
the same DAG but with different constituent functions. Thus, the question of whether a given multivariate function is in fact compositional cannot be answered.
Of course, for a given DAG, it is possible to use inverse/implicit function theorem  (in theory) in some cases to decide whether a family of functions are compositional according to the given DAG.

For our mathematical analysis, we therefore find it convenient to think of a $\mathcal{G}$-function as a set of functions $f=\{f_v :\RR^{d(v)}\to\RR\}_{v\in V\cup\mathbf{S}}$, rather than a single 
function on $\RR^q$. 
For example, the DAG $\mathcal{G}$ in Figure~\ref{graphpict} (\cite{dingxuanpap}) represents the compositional function
\bea\label{gfuncexample}
f^*(x_1,\cdots, x_9)&=&
h_{19}(h_{17}(h_{13}(h_{10}(x_1,x_2,x_3), h_{11}(x_4,x_5)), \nonumber\\ 
&& \qquad\qquad h_{14}(h_{10},h_{11}), h_{16}(h_12(x6,x_7,x_8,x_9)), h_{18}(h_{15}(h_{11},h_{12}),h_{16})).
\eea
The $\mathcal{G}$-function is $\{h_{10},\cdots,h_{19}= f^*\}$.
The individual functions $f_v$ will be called \textit{constituent functions}. 

We assume that there is only one sink node, $v^*$ (or $v^*(\mathcal{G})$) whose output is denoted by $f^*$ (the \emph{target function}). 
Technically, there are two functions involved here: one is the final output as a function of all the inputs to all source nodes, the other is the final output as a function of the inputs to the node $v^*$. 
We will use the symbol $f^*$ to denote both with comments on which meaning is intended when we feel that it may not be clear from the context. A similar convention is followed with respect to each of the constituent functions as well. 
For example, in the DAG of Figure~\ref{graphpict},
the function $h_{14}$ can be thought of both as a function of two variables, namely the outputs of $h_{10}$ and $h_{11}$ as well as a function of five variables $x_1,\cdots, x_5$.
In particular, if each constituent function is a neural network, $h_{14}$ is a shallow network receiving two inputs.

\begin{figure}[h]
\begin{center}
\includegraphics[scale=0.5]{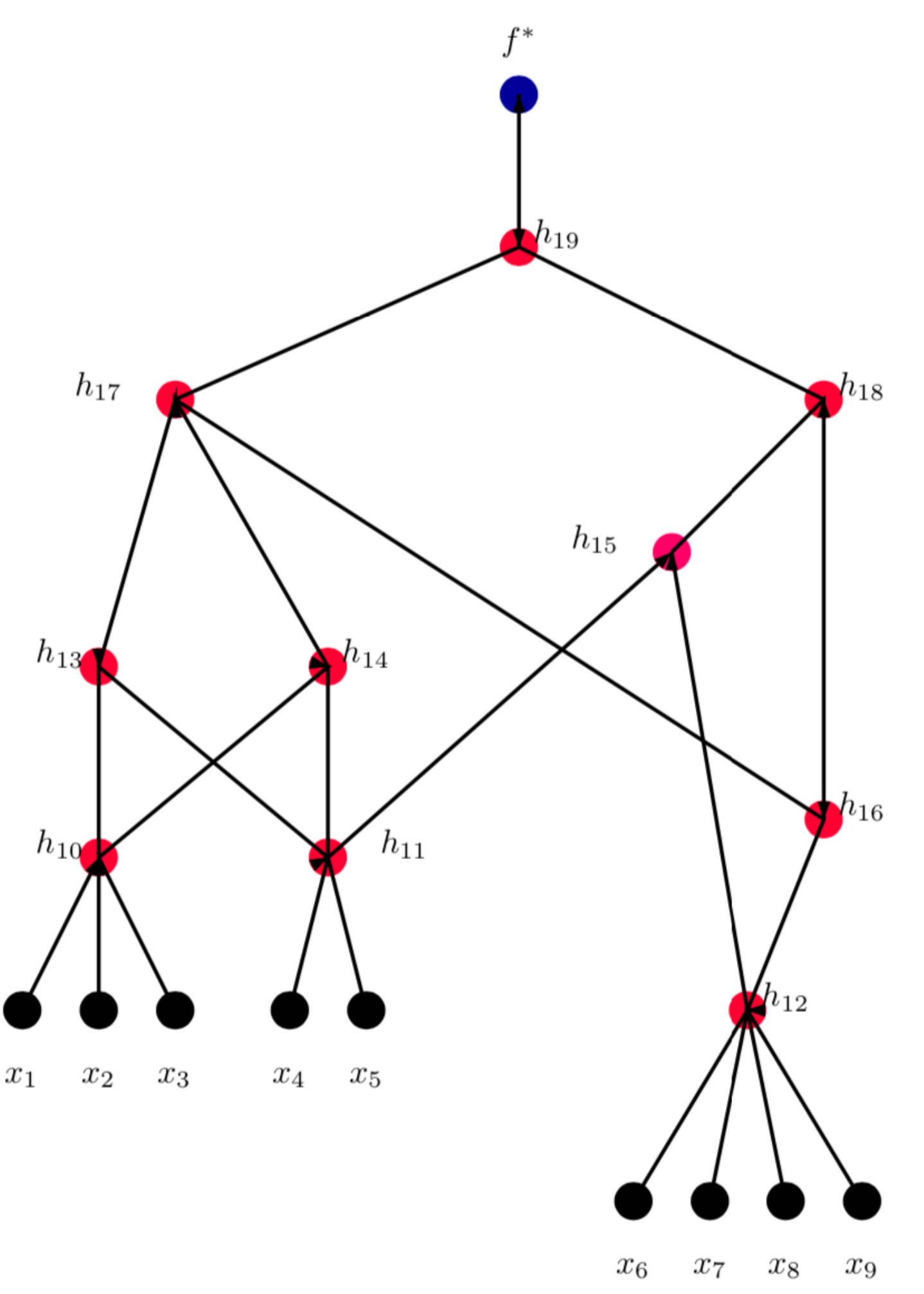} 

\end{center}

\caption{An example of a $\mathcal{G}$-function ($f^*$ given in \eref{gfuncexample}). The vertices of the DAG $\mathcal{G}$ are denoted by red dots. The black dots represent the input to the various nodes as indicated by the in--edges of the red nodes, and the blue dot indicates the output value of the $\mathcal{G}$-function, $f^*$ in this example.}
\label{graphpict}
\end{figure}

In this paper, we are interested only in the case where each of the inputs to each of the source nodes is in $\TT$ rather than $\RR$. 
Although this is no longer true for the non-source nodes, it is possible to accomplish this in the case when each of the constituent functions is continuous, as follows. We observe first that there is a one-to-one correspondence between functions on $[-1,1]^d$ and functions on $\TT^d$ that are even in each variable, given by
$$
F^\circ(\theta_1,\cdots,\theta_d)=F(\cos\theta_1,\cdots,\cos\theta_d), \qquad (\theta_1,\cdots,\theta_d)\in \TT^d.
$$
Let $f_u$ be one of the constituent functions.
With a re-normalization, we may  assume that the range of $f_u$ is a subset of $[-1,1]$. If $u_1,\cdots,u_{d(v)}$ are children of $v$ (i.e., if there is an edge from each $u_1,\cdots, u_{d(v)}$ to $v$), then $f_v$ can be seen as a function on $[-1,1]^d$, from which one can construct an even function $f_v^\circ$ on $\TT^d$ as just explained : informally, think of $f_v$ as an even function of the points $(\pm \arccos(f_{u_1}),\cdots, \pm\arccos(f_{u_{d(v)}})) \in\TT^d$.
Rather than complicating our notations, we will therefore assume in this paper that the domain of each constituent function is a torus, and the range is a subset of $\TT$. 
In particular, we may assume that every constituent function $f_v \in C^*(\TT^{d(v)})$. 

 We adopt the convention that for any function class $\XX(\TT^d)$, the class $\mathcal{G}$-$\XX$ denotes 
the set of $\mathcal{G}$-functions $f=\{f_v\}_{v\in V}$, where each constituent function $f_v\in \XX(\TT^{d(v)})$. We define 
\be\label{gengfuncnormdef}
\|f\|_{\XX,\mathcal{G}}=\sum_{v\in V\cup\mathbf{S}}\|f_v\|_{\XX(\TT^{d(v)})}.
\ee
Thus, for example,  $\mathcal{G}$-$W^*$ is the set of all $\mathcal{G}$-functions $f=\{f_v\}_{v\in V}$, where each constituent function $f_v\in \XX(\TT^{d(v)})$, and we write
$$
\|f\|_{W^*, \mathcal{G}}=\sum_{v\in V\cup\mathbf{S}}\|f_v\|_{W^*(\TT^{d(v)})}.
$$
For a vector $\mathbf{N}=(N_v)$, the symbol $\mathcal{G}$-$\HH_\mathbf{N}$ denotes the set of $\mathcal{G}$-functions $\{T_v\}_{v\in V}$, where each $T_v\in \HH_{N_v}^{d(v)}$, and for a $\mathcal{G}$-function $f=\{f_v\}_{v\in V}$, 
\be\label{gaddegapproxdef}
E_{\mathbf{N},\mathcal{G}}(f)=\sum_{v\in V\cup\mathbf{S}}E_{N_v}(d(v);f_v).
\ee

To make precise the various inputs to the constituent functions, we introduce some conventions. 
Let $\mathbf{S}=\{v_1,\cdots,v_s\}$, $q=\sum_{j=1}^s d(v_j)$. The input  $\x$ can be viewed as a vector in $\RR^q$, but also as an element of $\disp\prod_{j=1}^s \RR^{d(v_j)}$; i.e.,    $\x=((\x)_{v_1},\cdots,(\x)_{v_s})$ so that $(\x)_{v_j}\in\RR^{d(v_j)}$. 
We note that in making this statement, we have tacitly introduced some possible dummy variables here: for example, the function with the compositional structure
$$
F(x_1,x_2,x_3)=f(f_1(x_1,x_2), f_2(x_2, x_3))
$$
is viewed as a function of $4$ variables
$$
\tilde{F}(x_1,\cdots,x_4)=f(f_1(x_1,x_2), f_2(x_4, x_3))
$$
although $F$ is the restriction of $\tilde{F}$ to the hyper-plane $x_2=x_4$. 
In our opinion, it is a very deep question to figure out what domains the functions in practice are defined on, and to some extent, manifold learning addresses this issue.
Much of the approximation theory literature is however limited to approximation on cubes, spheres, and other known domains, torus in the present paper.
From a purely mathematical point of view, this is justified by an appeal to Stein's extension theorem \cite[Chapter~VI]{stein2016singular}.
Therefore, making this tacit introduction of dummy variables allows us to simplify notation and results without any theoretical loss of generality.

 Correspondingly, we define the sets 
$$
\C_{v_j}=\{(\x)_{v_j} : \x\in \C\}, \qquad j=1,\cdots, s.
$$
Thus, $\C_{v_j}$ is the training data ``seen'' by the source node $v_j$. 
This notion is extended recursively to other nodes of $\mathcal{G}$.
Let $v$  not be a source node,  $u_1, \cdots, u_{d(v)}$ be the children of $v$, $\C_{u_1},\cdots,\C_{u_{d(v)}}$ be the training data seen by these nodes. Thus, for any $\x\in\C$, the components given by $(\x)_{u_j}$ are seen by $u_j$, and
$$
\C_v=\left\{(f_{u_1}((\x)_{u_1}),\cdots,f_{u_{d(v)}}((\x)_{u_{d(v)}})) : 
\x\in \C\right\}.
$$
For example, in the DAG of Figure~\ref{graphpict}, the children of $h_{14}$ are $h_{10}$ and $h_{11}$. For each $\x\in\RR^9$, $h_{10}$ sees the components $(x_1,x_2,x_3)$, while $h_{11}$ sees the components $(x_4, x_5)$. We have
\begin{eqnarray*}
\lefteqn{\C_{h_{10}}=\{((\x)_1,(\x)_2,(\x)_3): \x\in \C\},\quad \C_{h_{11}}=\{((\x)_4,(\x)_5) : \x\in \C\},}\\
\C_{h_{14}}&=&\{((h_{10}((\x)_1,(\x)_2,(\x)_3), h_{11}((\x)_4,(\x)_5)): \x\in \C\}
\end{eqnarray*}
%
%
%

In this section, we state our theorems only for networks with activation function $t\mapsto \cos t$; i.e., networks that evaluate trigonometric polynomials.
The transition to networks with other activation functions is obtained by approximating these by trigonometric polynomials using \eref{trigneunetapprox} as in Section~\ref{shallowsect}.
This only adds to an additional  complication in the notation without adding anything new conceptually.

  The analogue of Theorem~\ref{feasibletheo} is the following.
\begin{theorem}\label{deepfeasibletheo}
Let $\mathcal{G}$ be a DAG with  sink node $v^*$. 
There exists $C=C(\mathcal{G})>0$ with the following property:  Let $f=\{f_v\}_{v\in V\cup\mathbf{S}}$ be a $\mathcal{G}$-function such that each of the constituent functions $\{f_v\}_{v\in V}$ is Lipschitz continuous with Lipschitz constant $\le L$. If $N_v\ge C(\mathcal{G})\eta(\C_v)^{-1}$, for each $v\in V$, and $\mathbf{N}=(N_v)$, the $\mathcal{G}$-function $T^\#_\mathbf{N}(\mathcal{D})\in\mathcal{G}$-$\HH_\mathbf{N}$ defined in \eref{deep_tshar_def}  satisfies
\be\label{deepinterpbasic}
(T^\#_\mathbf{N}(\mathcal{D}))_{v^*}(\x_j)= y_j, \qquad j=1,\cdots,M, \qquad \mbox{(Zero training error)}
\ee 
and
\be\label{deepdegreebasic}
\|f-T^\#_\mathbf{N}(\mathcal{D})\|_{C^*,\mathcal{G}} \le c(\mathcal{G},L)\left\{\epsilon+ E_{\mathbf{N}/2,\mathcal{G}}(f)\right\}. \qquad \mbox{(Good generalization error)}.
\ee
\end{theorem}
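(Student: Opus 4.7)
The plan is to build $T^\#_\mathbf{N}(\mathcal{D})=\{T_v\}_{v\in V\cup\mathbf{S}}$ inductively in order of increasing level $H(v)$, invoking Theorem~\ref{feasibletheo} at each node. At each $v$ I will work with an \emph{effective} training set $\widetilde{\C}_v$ obtained by propagating the original inputs $\{\x_j\}$ through the constituent polynomials $T_u$ already built for nodes $u$ with $H(u)<H(v)$. At a source $v$ one has $\widetilde{\C}_v=\C_v$, and $T_v$ is the noise--free Theorem~\ref{feasibletheo} interpolant of the values $\{f_v(\x)\}_{\x\in\C_v}$. At an intermediate $v\in V\setminus\{v^*\}$, $T_v$ is likewise a Theorem~\ref{feasibletheo} interpolant of $f_v$ on $\widetilde{\C}_v$ with $\epsilon=0$. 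At the sink $v^*$, $T_{v^*}$ is obtained from Theorem~\ref{feasibletheo} applied to $\{(\widetilde{\x}_j,y_j)\}_{j=1}^M$, where $\widetilde{\x}_j\in\widetilde{\C}_{v^*}$ is the image of $\x_j$ under the sub-network already built. Conclusion \eref{deepinterpbasic} is then immediate: $(T^\#_\mathbf{N}(\mathcal{D}))_{v^*}(\x_j)=T_{v^*}(\widetilde{\x}_j)=y_j$ by the interpolation property of Theorem~\ref{feasibletheo}.

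The first technical point is to verify that each invocation of Theorem~\ref{feasibletheo} above is legitimate, i.e., $N_v\ge B\,\eta(\widetilde{\C}_v)^{-1}$, even though the hypothesis controls only $\eta(\C_v)$. I will show $\eta(\widetilde{\C}_v)\ge\tfrac{1}{2}\eta(\C_v)$ by induction on $H(v)$. The discrepancy between $\widetilde{\C}_v$ and $\C_v$ comes from substituting $T_u$ for $f_u$ at deeper nodes, and the Lipschitz hypothesis yields $|\widetilde{\x}_v-\x_v|\le L\,\max_{H(u)<H(v)}\|f_u-T_u\|$, which by the already established (inductive) Theorem~\ref{feasibletheo} bound at lower levels is controlled by the approximation errors $E_{N_u/2}(d(u);f_u)$. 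Choosing $C(\mathcal{G})$ in the hypothesis large enough (depending on $|V|$, $L$, the depth of $\mathcal{G}$, and $B$) forces this accumulated perturbation to fall below $\tfrac{1}{4}\eta(\C_v)$, closing the induction.

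For the error bound, Theorem~\ref{feasibletheo} at each non--sink $v$ gives $\|f_v-T_v\|\le c\,E_{N_v/2}(d(v);f_v)$ with zero noise contribution. At the sink, the effective noise satisfies $\epsilon_{v^*}\le\epsilon+\max_j|f^*(\x_j)-f_{v^*}(\widetilde{\x}_j)|\le\epsilon+c(\mathcal{G},L)\sum_{v\ne v^*}E_{N_v/2}(d(v);f_v)$, where the second inequality is again Lipschitz propagation through the DAG from the children of $v^*$ down to the sources. Thus $\|f_{v^*}-T_{v^*}\|\le c(\mathcal{G},L)\bigl(\epsilon+E_{\mathbf{N}/2,\mathcal{G}}(f)\bigr)$, and summing over $v\in V\cup\mathbf{S}$ using the definition \eref{gengfuncnormdef} of $\|\cdot\|_{C^*,\mathcal{G}}$ yields \eref{deepdegreebasic}.

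The main obstacle is the delicate interplay between the separation--preservation step, which needs $\|f_u-T_u\|$ small at lower levels, and the approximation bound, which in turn needs the separation hypothesis. The resolution is strict induction on $H(v)$, where at each stage one uses only bounds already established at strictly lower levels, with the Lipschitz factor $L$ raised to the depth of $\mathcal{G}$ absorbed into $c(\mathcal{G},L)$. Beyond this bookkeeping, the proof is essentially a node--by--node application of Theorem~\ref{feasibletheo} combined with the propagation-of-error calculus of \cite{dingxuanpap}.
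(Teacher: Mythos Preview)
Your inductive scheme—apply Theorem~\ref{feasibletheo} node by node in order of increasing level, then propagate the error via the Lipschitz hypothesis—is exactly the paper's argument. The paper takes $C(\mathcal{G})=\max_{v}B(d(v))$ (with $B(\cdot)$ from Proposition~\ref{kernelopsummary_prop}), builds $T_v^\#$ at every non-sink $v$ as the noise-free Theorem~\ref{feasibletheo} interpolant of $f_v$ on the \emph{true} set $\C_v$, and at $v^*$ as the interpolant of the data $y_j$ on $\C_{v^*}$.

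The ``main obstacle'' you single out is a non-issue, and this matters because your proposed resolution of it does not work. Since each non-sink $T_u$ interpolates $f_u$ \emph{exactly} at its training inputs, an immediate induction on $H(v)$ gives $P_u(\x_j)=F_u(\x_j)$ for every training point $\x_j$ and every node $u$ below $v$ (writing $F_u,P_u$ for the composite outputs of the true and polynomial networks at $u$). Hence $\widetilde{\C}_v=\C_v$ identically at every node, no separation is lost, and your $\epsilon_{v^*}$ collapses to $\epsilon$ with no extra term. Your construction then coincides with the paper's.

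Had $\widetilde{\C}_v\ne\C_v$, your fix—inflate $C(\mathcal{G})$ so that the accumulated perturbation, controlled by the $E_{N_u/2}(d(u);f_u)$, drops below $\tfrac14\eta(\C_v)$—would fail. The theorem asks $C$ to depend only on $\mathcal{G}$, and your parenthetical already slips in $L$. More seriously, the source-node constituents $f_u$, $u\in\mathbf{S}$, are only assumed continuous, so $E_{N_u/2}(d(u);f_u)$ carries no quantitative rate whatsoever; and even at the Lipschitz nodes the Jackson estimate gives $E_{N_u/2}(f_u)\le cL/N_u\le cL\,\eta(\C_u)/C$, which is tied to $\eta(\C_u)$, not to $\eta(\C_v)$. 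No choice of $C$ depending only on $\mathcal{G}$ (or even on $\mathcal{G}$ and $L$) can force this below $\tfrac14\eta(\C_v)$ in general. Once you observe $\widetilde{\C}_v=\C_v$, this difficulty evaporates and the remainder of your outline is correct and matches the paper.
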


We recall that the polynomials in Theorem~\ref{feasibletheo} and hence, in Theorem~\ref{deepfeasibletheo} are constructed using the Fourier coefficients of the various functions involved. 
One could use the polynomials $\mathcal{L}_N(\mathcal{D})$ defined in \eref{locintdef} instead to get a construction based only on the values of the various functions computed using the training data.
However, since we do not know the constituent functions, this construction is not as constructive as with shallow networks. 
Besides, as with shallow networks, such a construction does not yield a uniform error bound analogous to \eref{deepdegreebasic}.

In contrast, Theorem~\ref{constrtheo} can be extended in a purely constructive manner, so as to yield a good training error and to keep the gradient of the resulting approximation under control so that pointwise generalization error bounds can be obtained.
Even though one does not know the constituent functions, 
one can construct a DAG trigonometric polynomial knowing the DAG structure and the training data alone.

For $\mathbf{n}=(n_v)$ and $T\in \mathcal{G}$-$\HH_\mathbf{n}$, let
\be\label{deepregularizationdef}
R_{\mathbf{n},\mathcal{G}}(T)=\max_{1\le j\le M} |y_j-T_{v^*}(\x_j)| + \sum_{v\in V}\frac{1}{n_v}\|T_v\|_{W^*(\TT^{d(v)})}.
\ee
In this definition, it is understood that $T_{v^*}$ is thought of as a function of the $q$-dimensional vector $\x$, but is computed using the all the constituent functions in $T$ using DAG structure prescribed by $\mathcal{G}$.

\begin{theorem}\label{deepconstrtheo}
Let $\mathcal{G}$ be a DAG with  sink node $v^*$. 
Let $f=\{f_v\}_{v\in V\cup\mathbf{S}}\in \mathcal{G}$-$W^*$, and $\max_{v\in V\cup\mathbf{S}}\|f_v\|_{W^*(\TT^{(d(v)})}\le L$. There exists $C(\mathcal{G})>0$ with the following property:  If $N_v\ge C(\mathcal{G})\eta(\C_v)^{-1}$, for each $v\in V$, and $\mathbf{N}=(N_v)$,
\be\label{deepregerrest}
\min_{T\in \mbox{$\mathcal{G}$-$\HH_\mathbf{N}$}}R_{\mathbf{N},\mathcal{G}}(T) \le c(\mathcal{G}, L)\left\{\epsilon+\sum_{v\in V}\frac{1}{N_v}\|f_v\|_{W^*(\TT^{d(v)})}\right\}. \qquad \mbox{(Good training error)}
\ee
Let $\x\in \TT^q$,  $\delta=\disp\min_{1\le j\le M}|\x-\x_j|$, $N=\max_{v\in V}N_v$.  Then with $$T^*(\mathcal{D})=\disp\argmin_{\mbox{$\mathcal{G}$-$\HH_\mathbf{N}$}}R_{\mathbf{N},\mathcal{G}}(T),$$ we have
\be\label{deepgeneralerr}
|f_{v^*}(\x)-(T^*(\mathcal{D}))_{v^*}(\x)| \le c(\mathcal{G}, L)\left\{\epsilon+\sum_{v\in V}\frac{1}{N_v}\|f_v\|_{W^*(\TT^{d(v)})} \right\} + c(\mathcal{G},L)\left(1+N\epsilon\right)^{H(v^*)}\delta. \qquad \mbox{(Good generalization error)}.
\ee
\end{theorem}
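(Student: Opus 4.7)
The plan is to build on Theorem~\ref{constrtheo} by applying it separately at each node of the DAG, combining the node--level conclusions with a Lipschitz--propagation argument. Two principal facts will be used throughout: (i) by construction, $\|T^*_v\|_{W^*(\TT^{d(v)})}/N_v\le R_{\mathbf{N},\mathcal{G}}(T^*)$ for every $v\in V\cup\mathbf{S}$, so that a bound on $R_{\mathbf{N},\mathcal{G}}(T^*)$ immediately controls all the Sobolev norms of the constituent polynomials, and (ii) the Sobolev norm dominates the Lipschitz constant, which lets us transport errors through the DAG.

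For the training bound \eref{deepregerrest} I would exhibit an explicit feasible competitor $\widetilde T\in\mathcal{G}$-$\HH_{\mathbf{N}}$ whose cost $R_{\mathbf{N},\mathcal{G}}(\widetilde T)$ is $\le c(\mathcal{G},L)\rho$, where I write $\rho:=\epsilon+\sum_v \|f_v\|_{W^*}/N_v$. At each node $v$, define $\widetilde T_v$ by applying Theorem~\ref{constrtheo} to the \emph{noiseless} data $\{(\mathcal{I}_v(\x_j),f_v(\mathcal{I}_v(\x_j)))\}_{j=1}^M$ living on $\C_v$ with degree $N_v\ge C(\mathcal{G})\eta(\C_v)^{-1}$; this simultaneously yields $\|\widetilde T_v\|_{W^*}\le c\|f_v\|_{W^*}$ and a training error at $v$ of order $\|f_v\|_{W^*}/N_v$. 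An induction on the level $H(v)$, using the Lipschitz bound $L$ to propagate children's training errors through $f_v$, then shows that $\max_j|f^*(\x_j)-\widetilde T_{v^*}(\x_j)|\le c(\mathcal{G},L)\sum_{v}\|f_v\|_{W^*}/N_v$. Adding the noise contribution $\epsilon$ and the regularization sum $\sum (1/N_v)\|\widetilde T_v\|_{W^*}\le c\sum\|f_v\|_{W^*}/N_v$ yields $R_{\mathbf{N},\mathcal{G}}(\widetilde T)\le c\rho$; the minimality of $T^*$ finishes \eref{deepregerrest}.

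For the pointwise bound \eref{deepgeneralerr} I would not try to control each $|f_v-T^*_v|$ separately (there is no direct training data at intermediate nodes). Instead, with $\x_j$ a nearest training point at distance $\delta$, split
\[
|f_{v^*}(\x)-T^*_{v^*}(\x)|\le |f_{v^*}(\x)-f_{v^*}(\x_j)|+|f_{v^*}(\x_j)-y_j|+|y_j-T^*_{v^*}(\x_j)|+|T^*_{v^*}(\x_j)-T^*_{v^*}(\x)|.
\]
The first term is at most $L^{H(v^*)}\delta$ since $f_{v^*}$, viewed as a function of $\x$, is a composition of $H(v^*)$ Lipschitz maps of constant $L$. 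The second is bounded by $\epsilon$, and the third by $R_{\mathbf{N},\mathcal{G}}(T^*)\le c\rho$ from the previous step. For the last, I would invoke the Bernstein inequality $\mathrm{Lip}(T^*_v)\le \|T^*_v\|_{W^*}\le N_v R_{\mathbf{N},\mathcal{G}}(T^*)\le c(L+N\epsilon)$ (using the controlled training error and $N_v\le N$), and then apply the chain rule along the paths of $\mathcal{G}$ so that the Lipschitz constant of $T^*_{v^*}$ as a function of $\x$ is bounded by a product of at most $H(v^*)$ such factors, giving $\le c(\mathcal{G},L)(1+N\epsilon)^{H(v^*)}$. Combining the four pieces produces \eref{deepgeneralerr}.

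The main technical obstacle is the last step: showing that the composite Lipschitz constant of $T^*_{v^*}$ in the $\x$--variables multiplies cleanly as a product of node--wise Sobolev norms in a general DAG (not only on a tree), while simultaneously absorbing combinatorial factors like $d(v)$ and the mismatch $N/N_v$ into the $\mathcal{G}$-- and $L$--dependent constants. A secondary, more routine difficulty is verifying that the level--by--level induction used in the oracle bound really does telescope additively under the DAG topology (as opposed to multiplicatively), which it does precisely because each $\widetilde T_v$ is built with zero artificial noise and hence contributes only the additive term $c\|f_v\|_{W^*}/N_v$ at its own level.
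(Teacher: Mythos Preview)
Your approach is essentially the same as the paper's. The paper's own proof is a brief sketch: it says the argument mirrors that of Theorem~\ref{constrtheo}, builds the competitor via the interpolant from Theorem~\ref{deepfeasibletheo} together with Corollary~\ref{czipserfreudcor} and the good propagation of error, and then repeats the four--term split \eref{pf3eqn6} using the chain rule through the DAG to handle the outer terms --- exactly the decomposition you propose. Your plan is in fact more explicit than the paper's, and the two technical caveats you flag (telescoping in a DAG rather than a tree, and absorbing the $N/N_v$ mismatch into $c(\mathcal{G},L)$) are not addressed in the paper's sketch either.
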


\begin{rem}\label{deeprmk}
{\rm
The theorems in this section indicate that the superiority of deep learning over shallow learning comes from two factors. 
One is that the compositional structure of the target function allows us to study the problem in a cascade of low dimensional problems.
The other is that this structure might allow us to sparsify the training data as we move up the cascade; noting that the minimal separation is defined only among the distinct points in a set.
}
\end{rem}

\bhag{Proofs}\label{pfsect}
\subsection{Preliminary results on trigonometric approximation}\label{trigsect}
We recall first some essential properties of  the kernels and operators defined in \eref{kerndef} and \eref{summkerndef}. Since the proofs of these facts are scattered among several of our publications,  we will sketch the proof in some detail for the sake of completion.

\begin{prop}\label{kernel_loc_prop}
Let $S >q$ be an integer. 
For $N\ge 1$,
\be\label{lockern}
|\Phi_N(\x)|\le \frac{cN^q}{\max(1, (N|\x|)^S)},  \qquad \x\in\TT^q,
\ee
and
\be\label{kernlowbd}
|\Phi_N(\bs 0)| \ge cN^q.
\ee
\end{prop}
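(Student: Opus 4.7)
The plan is to split the proposition into a straightforward lower bound at the origin and the localization estimate, which requires a Poisson summation argument.

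First, for the lower bound \eref{kernlowbd} and the trivial part of \eref{lockern}: at $\x=\bs0$, $\Phi_N(\bs0)=\sum_{\k\in\ZZ^q}h(|\k|_2/N)$, and since $h\ge 0$ and $h\equiv 1$ on $|t|\le 1/2$, this is bounded below by the number of lattice points in the ball $\{|\k|_2\le N/2\}$, which is $\sim N^q$. The same observation gives the trivial upper bound $|\Phi_N(\x)|\le\sum_\k h(|\k|_2/N)\le cN^q$, handling the regime $N|\x|\le 1$ in \eref{lockern}.

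For the decay when $N|\x|\ge 1$, I would introduce the continuous analogue
\[
g(\y)=\int_{\RR^q}h(|\bs\xi|_2/N)\exp(i\bs\xi\cdot\y)\,d\bs\xi=N^qG(N\y),\qquad G(\u)=\int_{\RR^q}h(|\bs\eta|_2)\exp(i\bs\eta\cdot\u)\,d\bs\eta.
\]
Since $h$ is even, smooth, and compactly supported on $\RR$, the radial profile $\bs\eta\mapsto h(|\bs\eta|_2)$ is a Schwartz function, hence so is its Fourier transform $G$. In particular, for the chosen integer $S>q$,
\[
|G(\u)|\le c_S(1+|\u|_2)^{-S},\qquad \u\in\RR^q.
\]
Poisson summation then yields
\[
\Phi_N(\x)=\frac{1}{(2\pi)^q}\sum_{\m\in\ZZ^q}g(\x+2\pi\m)=\frac{N^q}{(2\pi)^q}\sum_{\m\in\ZZ^q}G\!\left(N(\x+2\pi\m)\right).
\]

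Now I would pick the representative $\x\in[-\pi,\pi]^q$ and split the sum. The main term $\m=\bs0$ gives $|N^qG(N\x)|\le cN^q(1+N|\x|_2)^{-S}\le cN^q(1+N|\x|)^{-S}$, using equivalence of the $\ell^2$ and $\ell^\infty$ norms up to a factor $\sqrt q$. For $\m\neq\bs0$ and $\x\in[-\pi,\pi]^q$, $|\x+2\pi\m|_2\ge c|\m|_2$, so each term is at most $cN^q(N|\m|_2)^{-S}$, and
\[
\sum_{\m\ne\bs0}N^q\bigl|G(N(\x+2\pi\m))\bigr|\le cN^{q-S}\sum_{\m\ne\bs0}|\m|_2^{-S}\le c_1 N^{q-S},
\]
where convergence of the last sum uses $S>q$. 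Since $|\x|\le\pi$ on the torus, $N^{q-S}\le \pi^S\cdot N^q(N|\x|)^{-S}$ whenever $N|\x|\ge 1$, so this tail is absorbed into the main-term bound. Combining the two regimes gives \eref{lockern}.

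The only subtle point is the bookkeeping in the Poisson step — tracking the Schwartz decay after the affine change of variable $\bs\xi=N\bs\eta$ and converting the lattice-translate estimate into the $\max(1,(N|\x|)^S)$ form — but nothing genuinely hard arises, because the cutoff $h$ is $C^\infty$ with compact support and the asymmetry between the $\ell^2$ norm on frequencies and the $\ell^\infty$ norm on the torus costs only a constant depending on $q$.
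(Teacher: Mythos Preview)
Your argument is correct and follows essentially the same route as the paper: both prove the trivial bound $|\Phi_N(\x)|\le\Phi_N(\bs0)\sim N^q$ by counting lattice points, then obtain the decay via Poisson summation together with the Fourier decay of the radial cutoff $H(\bs\eta)=h(|\bs\eta|_2)$. The only cosmetic difference is that the paper estimates each tail term directly by $|\x+2\pi\j|\ge(2|\j|_\infty-1)|\x|$ to pull out $(N|\x|)^{-S}$, whereas you first sum the tail to $cN^{q-S}$ and then use $|\x|\le\pi$ to absorb it; both are equally valid.
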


\begin{proof}
 Our proof summarizes that of \cite[Theorem~6.1]{bdint}.
We consider the function $H(\t)=h(|\t|_2)$, $\t\in\RR^q$. Since $0\le H(\t)\le 1$ for all $\t\in\RR^q$, and $H(\t)=1$ for $|\t|_2\le 1/2$, it is clear that
\be\label{pf5eqn3}
|\Phi_N(\x)|\le \Phi_N(\bs 0) \sim N^q.
\ee
In particular, this proves \eref{kernlowbd}.
Since $h$ is constant in a neighborhood of $\bs 0$, it is easy to see that $H$ is $S$ times continuously differentiable as well, so that its Fourier transform satisfies
\be\label{pf5eqn1}
|\hat{H}(\u)|\le \frac{c(H)}{|\u|^S}, \qquad \u\not=\bs 0.
\ee
Hence, the Poisson summation formula yields
\be\label{pf5eqn2}
\Phi_N(\x)=N^q\sum_{\j\in\ZZ^q}\hat{H}(N(\x+2\j\pi)), \qquad \x\in\TT^q.
\ee
For $\x\not=0$, $\j\not=\bs 0$, 
$$
 |\x+2\j\pi| \ge 2|\j|_\infty\pi-|\x|\ge (2|\j|_\infty-1)|\x|.
$$
Therefore, \eref{pf5eqn1} and \eref{pf5eqn2} show that for $\x\not=\bs 0$,
$$
|\Phi_N(\x)|\le N^q\left\{|\hat{H}(N\x)|+\sum_{\j\in\ZZ^q, \j\not=\bs 0}|\hat{H}(N(\x+2\j\pi))|\right\} \le c(H)N^q\left\{\frac{1}{(N|\x|)^S}+\sum_{\j\in\ZZ^q, \j\not=\bs 0}\frac{1}{((2|\j|_\infty-1)N|\x|)^S}\right\}.
$$
Since $S>q$, the infinite series converges. 
Therefore,
\be\label{lockernstrong}
|\Phi_N(\x)|\le c\frac{N^q}{(N|\x|)^S}, \qquad \x\not=\bs 0.
\ee
Together with \eref{pf5eqn3}, this  leads to \eref{lockern}.
\end{proof}

\begin{prop}\label{kernelopsummary_prop}
Let $S >q$ be an integer.\\
\textrm{(a)} 
    There exists a constant $B>0$ with the following property:  If $\C=\{\x_1,\cdots,\x_M\}$, and $N\ge B\eta(\C)^{-1}$ then  for $\x\in\TT^q$,
\be\label{phinsumest}
\sum_{j=1}^M |\Phi_N(\x-\x_j)| \le cN^q. 
\ee
and
\be\label{greshgorin}
\sum_{j: |\x_j-\x|\ge \eta(\C)} |\Phi_N(\x-\x_j)|\le (1/2)\Phi_N(\bs 0)=(1/2)\Phi_N(\x-\x)\le cN^q.
\ee

\textrm{(b)} If $T\in\HH_{N/2}^q$ then $\sigma_N(T)=T$. Further, 
\be\label{opbd}
\|\sigma_N(f)\| \le c\|f\|, \qquad f\in C^*.
\ee
Consequently,
\be\label{goodapprox}
E_N(f)\le \|f-\sigma_N(f)\|\le cE_{N/2}(f), \qquad f\in C^*.
\ee
\end{prop}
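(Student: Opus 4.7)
My plan is to deduce everything from the pointwise localization bound \eref{lockern} together with the $\eta(\C)$--separation of the training set. I would first verify a standard packing estimate: for any $\x\in\TT^q$ and any $r\ge \eta(\C)$, the number of points of $\C$ lying in the ball $\{\y:|\y-\x|\le r\}$ is at most $c(r/\eta(\C))^q$, with only $O(1)$ points in the smaller ball of radius $\eta(\C)$ itself. This is a routine volume argument once one notices that balls of radius $\eta(\C)/2$ around distinct points of $\C$ are pairwise disjoint.

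For part (a), I would split the sum $\sum_j|\Phi_N(\x-\x_j)|$ according to a dyadic decomposition based on annuli of radii $2^k\eta(\C)$. The $O(1)$ points in the innermost ball each contribute at most $\Phi_N(\zz)\sim N^q$ by \eref{lockern}, which is a total of $cN^q$. For the annulus at distance $\sim 2^k\eta(\C)$, the packing bound gives at most $c\,2^{kq}$ points, while \eref{lockern} bounds each summand by $cN^q/(N\cdot 2^k\eta(\C))^S \le cN^q/(B\,2^k)^S$, because $N\eta(\C)\ge B$. Summing the geometric series $\sum_k 2^{k(q-S)}/B^S$ converges since $S>q$ and gives a contribution of order $cN^q/B^S$. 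Choosing $B$ sufficiently large and adding the two ranges yields \eref{phinsumest}; using the same computation but omitting the innermost ball and absorbing $1/B^S$ into a factor $1/2$ of $\Phi_N(\zz)$ produces the Gershgorin bound \eref{greshgorin}. The only real delicacy will be keeping the constants straight so that the prefactor on the ``far'' sum is genuinely at most $1/2$; this is where the freedom to enlarge $B$ is used.

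For part (b), the identity $\sigma_N(T)=T$ for $T\in\HH_{N/2}^q$ is immediate from the definition \eref{summkerndef}, because $h(|\k|_2/N)=1$ whenever $|\k|_2<N/2$, while $\hat T(\k)$ vanishes for $|\k|_2\ge N/2$. To get the operator bound \eref{opbd} I would write $\sigma_N$ as convolution with $\Phi_N$: interchanging the finite sum and the integral in the Fourier inversion gives $\sigma_N(f)(\x)=(2\pi)^{-q}\int_{\TT^q} f(\y)\Phi_N(\x-\y)\,d\y$, whence $\|\sigma_N(f)\|\le(2\pi)^{-q}\|\Phi_N\|_{L^1}\|f\|$. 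To control $\|\Phi_N\|_{L^1}$ uniformly in $N$, I use \eref{lockern} once more: integrating $cN^q$ over $|\y|\le 1/N$ gives $O(1)$, and integrating $cN^{q-S}|\y|^{-S}$ over the remainder gives $O(1)$ because $S>q$.

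Finally, \eref{goodapprox} follows from the first two items of (b): the lower inequality is automatic because $\sigma_N(f)\in\HH_N^q$, and the upper inequality uses a standard ``quasi-projection'' trick. Choosing a best approximation $T^*\in\HH_{N/2}^q$ to $f$, the reproducing property gives $\sigma_N(T^*)=T^*$, so $\|f-\sigma_N(f)\|\le\|f-T^*\|+\|\sigma_N(T^*-f)\|\le (1+c)\,E_{N/2}(f)$ by \eref{opbd}. No single step here is a real obstacle; the only technical point is the dyadic counting argument in part (a), and in particular making sure the constant $B$ is chosen large enough to convert the tail bound into the precise factor $1/2$ required by \eref{greshgorin}.
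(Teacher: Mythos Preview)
Your proposal is correct and follows essentially the same route as the paper: a packing/volume argument combined with a dyadic annulus decomposition and the localization bound \eref{lockern} (together with the lower bound \eref{kernlowbd} to convert the tail into $\tfrac12\Phi_N(\bs 0)$) for part (a), and the convolution representation with an $L^1$ bound on $\Phi_N$ plus the quasi-projection trick for part (b). The only organizational difference is that the paper packages the dyadic tail estimate once and for all into a separate lemma (Lemma~\ref{regularlemma}) stated for an arbitrary measure $\nu$ satisfying a ball-growth condition $\nu(\BB(\x,r))\le A_\nu(r+d)^q$, and then specializes it to the counting measure on $\C$ for part (a) and to Lebesgue measure for part (b); this abstraction is reused later (e.g.\ in Remark~\ref{straightinterprem}), but the underlying computation is exactly your dyadic sum.
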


The two parts of the proposition are proved along the same lines. 
It is convenient to prove the following lemma (cf. \cite[Proposition~5.1]{eignet}), which we will use once with $\nu$ being the measure that associates the mass $1$ with each $\x_j$ for part (a), and  once with $\nu$ being the Lebesgue measure $\mu^*$ on $\TT^q$ for part (b).

\begin{lemma}\label{regularlemma}
Let $\nu$ be a positive measure on $\TT^q$, $d\ge 0$, and for some $A_\nu >0$,
\be\label{regularmeasdef}
\nu(\BB(\x,r))\le A_\nu(r+d)^q, \qquad r>0, \ \x\in\TT^q.
\ee 
Then for $r\ge 1/N$,
\be\label{nutailest}
\int_{\TT^q\setminus \BB(\x,r)} |\Phi_N(\x-\u)|d\nu(\u) \le cA_\nu(Nr)^{q-S}(1+d/r)^q,
\ee
where $c$ is a positive constant depending only on $q$, $S$, and $h$ but not on $r$, $d$, or $\nu$.
\end{lemma}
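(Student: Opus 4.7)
The plan is to bound the integral by a standard dyadic annular decomposition, using the pointwise kernel estimate \eref{lockern} on each annulus and the measure regularity \eref{regularmeasdef} to control the $\nu$-mass.

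First I would set $A_k=\{\u\in\TT^q: 2^kr \le |\x-\u| < 2^{k+1}r\}$ for $k=0,1,2,\ldots$ so that $\TT^q\setminus \BB(\x,r)\subset \bigcup_{k\ge 0} A_k$. On $A_k$ we have $N|\x-\u|\ge N\cdot 2^k r \ge 2^k \ge 1$ (using the hypothesis $Nr\ge 1$), so Proposition~\ref{kernel_loc_prop} gives the pointwise bound
\be\label{dyadicpointwise}
|\Phi_N(\x-\u)| \le \frac{cN^q}{(N\cdot 2^k r)^S} = c\, N^{q-S}(2^k r)^{-S}, \qquad \u\in A_k.
\ee
Next, by the regularity assumption \eref{regularmeasdef},
$$
\nu(A_k) \le \nu(\BB(\x, 2^{k+1}r)) \le A_\nu(2^{k+1}r+d)^q = A_\nu(2^{k+1}r)^q\left(1+\frac{d}{2^{k+1}r}\right)^q \le c\,A_\nu(2^k r)^q\left(1+\frac{d}{r}\right)^q,
$$
where in the last step I use $2^{k+1}\ge 1$ so that $d/(2^{k+1}r)\le d/r$.

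Combining \eref{dyadicpointwise} with the bound on $\nu(A_k)$, the contribution of annulus $A_k$ to the integral is at most
$$
c\,N^{q-S}(2^k r)^{-S}\cdot A_\nu(2^k r)^q\left(1+\frac{d}{r}\right)^q = c\,A_\nu\, N^{q-S} r^{q-S}\left(1+\frac{d}{r}\right)^q 2^{k(q-S)}.
$$
Since $S>q$, the geometric series $\sum_{k\ge 0}2^{k(q-S)}$ converges to a constant depending only on $q$ and $S$. Summing and recognizing $N^{q-S}r^{q-S}=(Nr)^{q-S}$ yields the desired estimate
$$
\int_{\TT^q\setminus\BB(\x,r)}|\Phi_N(\x-\u)|d\nu(\u) \le c\,A_\nu (Nr)^{q-S}(1+d/r)^q.
$$

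No serious obstacle is expected; the only mildly subtle points are (i) verifying that $Nr\ge 1$ is precisely what is needed to place us in the polynomial-decay regime of \eref{lockern} on every annulus, and (ii) absorbing the shift $d$ into the factor $(1+d/r)^q$ uniformly in $k$, which works because $2^{k+1}\ge 1$. The rest is bookkeeping of constants, all of which depend only on $q$, $S$, and $h$.
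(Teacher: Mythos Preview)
Your proof is correct and follows essentially the same dyadic annular decomposition as the paper's own proof: the paper also writes $\mathbb{A}_k=\BB(\x,2^{k+1}r)\setminus \BB(\x,2^kr)$, bounds $\nu(\mathbb{A}_k)\le (2^{k+1}r+d)^q\le 2^q 2^{kq}(r+d)^q$ via \eref{regularmeasdef}, applies the pointwise kernel decay (it cites \eref{lockernstrong} where you cite \eref{lockern}, which are equivalent on the annuli since $Nr\ge 1$), and sums the geometric series $\sum_k 2^{k(q-S)}$. The only cosmetic difference is that the paper normalizes $A_\nu=1$ at the outset and writes the factor as $(r+d)^q$ rather than $r^q(1+d/r)^q$.
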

 \begin{proof}\ 
In this proof, we assume by re-normalization that $A_\nu=1$. Also, we write $\mathbb{A}_k=\BB(\x,2^{k+1}r)\setminus \BB(\x, 2^kr)$.
Then \eref{regularmeasdef} shows that 
$\nu(\mathbb{A}_k)\le (2^{k+1}r+d)^q\le 2^q 2^{kq}(r+d)^q$.  
Using \eref{lockernstrong}, we conclude that
$$
\int_{\TT^q\setminus \BB(\x,r)} |\Phi_N(\x-\u)|d\nu(\u)\le \sum_{k=0}^\infty \int_{\mathbb{A}_k} |\Phi_N(\x-\u)|d\nu(\u)\le c2^q(Nr)^{q-S}(1+d/r)^q\sum_{k=0}^\infty 2^{(q-S)k}.
$$
 \end{proof}

\noindent\textbf{Proof of Proposition~\ref{kernelopsummary_prop}.}\\
To prove part (a), let $\nu$ be a measure that associates the mass $1$ with each $\x_j$, $j=1,\cdots, M$, and let $\eta=\eta(\C)$.  We claim that $\nu$ satisfies \eref{regularmeasdef} with $d=\eta$ and $A_\nu=c\eta^{-q}$ for some $c>0$ depending only on $q$.
Fix $\x$ and $r$. If $r<\eta/2$, then $\BB(\x,r)\cap \C$ can contain at most $1$ point. Therefore the claim is satisfied trivially.
Let $r\ge \eta/2$, and $\BB(\x,r)\cap \C=\{\x_1,\cdots,\x_J\}$. 
We observe that $\mu^*(\BB(\y,s))=cs^q$ for all $\y\in\TT^q$ and $s\in (0,\pi)$. 
Since the balls $\BB(\x_j,\eta/3)$ are disjoint, we have
$$
\nu(\BB(\x,r))=|J|=c\eta^{-q}\sum_{j=1}^J \mu^*(\BB(\x_j,\eta/3) )=c\eta^{-q}\mu^*\left(\cup_{j=1}^J \BB(\x_j,\eta/3)\right) \le c\eta^{-q}\mu^*(\BB(\x, r+\eta/3)) \le c_1\eta^{-q}(r+\eta)^q.
 $$
This proves the claim.
Thus, we may use \eref{nutailest} with $r=d=\eta$, $A_\nu=c\eta^{-q}$ to obtain for $N\ge \eta^{-1}$ that
$$
\sum_{j: |\x-\x_j|\ge \eta}|\Phi_N(\x-\x_j)|=\int_{\TT^q\setminus \BB(\x,\eta)}|\Phi_N(\x-\u)|d\nu(\u)\le c\eta^{-q}(N\eta)^{q-S}=cN^q(N\eta)^{-S}\le c\Phi_N(\bs 0)(N\eta)^{-S}.
$$
We choose $B>0$ such that if $N\eta\ge B$, $c(N\eta)^{-S}\le 1/2$. 
This proves \eref{greshgorin}. 
Further, since any ball $\BB(\x,\eta)$ contains at most $2^q$ points of $\C$, \eref{greshgorin} and \eref{pf5eqn3} lead to \eref{phinsumest}.

To prove part (b), we use Lemma~\ref{regularlemma} with $\mu^*$ in place of $\nu$. 
Clearly, \eref{regularmeasdef} is satisfied with $d=0$, and $A_{\mu^*}=c$. 
Therefore, using \eref{pf5eqn3} and \eref{nutailest} (with $r=1/N$) leads to
$$
\int_{\TT^q}|\Phi_N(\x-\u)|d\mu^*(\u)=
\int_{\BB(\x,1/N)}|\Phi_N(\x-\u)|d\mu^*(\u)+ \int_{\TT^q\setminus \BB(\x,1/N)}|\Phi_N(\x-\u)|d\mu^*(\u)\le c.
$$
It is now easy to deduce \eref{opbd}. 
If $T\in\HH_{N/2}^q$, then $\hat{T}(\k)=0$ if $|\k|_2\ge N/2$, while $h(t)=1$ if $|t|\le 1/2$. 
It follows from the definition \eref{summkerndef} that   $\sigma_N(T)=T$. 
For any $f\in C^*$ and $T\in\HH_{N/2}^q$, \eref{opbd} leads to
$$
E_N(f)\le \|f-\sigma_N(f)\|=\|f-T-\sigma_N(f-T)\|\le c\|f-T\|.
$$
This implies \eref{goodapprox}. \qed

\begin{prop}\label{matrix_inv_prop}
Let $\C$, $B$, and $N$ be as in Proposition~\ref{kernelopsummary_prop}, $\Psi\in C^*$, and $\mathbf{B}$ be the matrix $[\Psi(\x_j-\x_k)]_{j,k=1}^M$. 
There exists $\alpha^*>0$  such that if
\be\label{phinapprox}
\|\Phi_N-\Psi\|\le \alpha^*,
\ee
then the matrix $\mathbf{B}$ is invertible, and $\|\mathbf{B}^{-1}\|_{\ell^\infty\to\ell^\infty}\le cN^{-q}$.
In particular, let
 $\mathbf{b}\in \RR^M$.  Then there exists (uniquely) $\mathbf{a}\in \RR^M$ such that 
\be\label{mateqn}
\sum_{j=1}^Ma_j\Psi(\x_\ell-\x_j)=b_\ell, \qquad \ell=1,\cdots,M,
\ee
and
\be\label{invmatnorm}
\max_{1\le j\le M}|a_j| \le cN^{-q}\max_{1\le \ell\le M}|b_\ell|.
\ee
\end{prop}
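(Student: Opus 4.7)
My plan is to prove Proposition~\ref{matrix_inv_prop} by a Gershgorin/perturbation argument, using Proposition~\ref{kernel_loc_prop} and Proposition~\ref{kernelopsummary_prop}(a) as the main building blocks. First I would introduce the auxiliary matrix $\mathbf{A}=[\Phi_N(\x_j-\x_k)]_{j,k=1}^M$ and decompose $\mathbf{B}=\mathbf{A}+\mathbf{E}$, where $\mathbf{E}=[\Psi(\x_j-\x_k)-\Phi_N(\x_j-\x_k)]_{j,k=1}^M$. By hypothesis \eref{phinapprox}, every entry of $\mathbf{E}$ is bounded in absolute value by $\alpha^*$, so $\|\mathbf{E}\|_{\ell^\infty\to\ell^\infty}\le M\alpha^*$.

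Next, I would show that $\mathbf{A}$ is strictly diagonally dominant and bound $\|\mathbf{A}^{-1}\|_{\ell^\infty\to\ell^\infty}$. The diagonal entries all equal $\Phi_N(\zz)$, which by \eref{kernlowbd} satisfies $\Phi_N(\zz)\ge cN^q$. For the off-diagonal row sums, fix a row index $j$; since the points $\x_k$ with $k\ne j$ all satisfy $|\x_j-\x_k|\ge\eta(\C)$ and $N\ge B\eta(\C)^{-1}$, inequality \eref{greshgorin} gives $\sum_{k\ne j}|\Phi_N(\x_j-\x_k)|\le (1/2)\Phi_N(\zz)$. The Varah bound for strictly diagonally dominant matrices then yields $\|\mathbf{A}^{-1}\|_{\ell^\infty\to\ell^\infty}\le 2/\Phi_N(\zz)\le cN^{-q}$.

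To pass from $\mathbf{A}$ to $\mathbf{B}$ I use the standard Neumann-series perturbation. A volume comparison (the balls $\BB(\x_j,\eta(\C)/3)$ are disjoint inside $\TT^q$) gives $M\le c_0\eta(\C)^{-q}\le c_0 B^{-q} N^q$, so $\|\mathbf{A}^{-1}\mathbf{E}\|_{\ell^\infty\to\ell^\infty}\le \|\mathbf{A}^{-1}\|\cdot M\alpha^*\le c_1\alpha^*$, with $c_1$ depending only on $q$, $h$, and $B$. Choosing $\alpha^*:=1/(2c_1)$ makes $\mathbf{I}+\mathbf{A}^{-1}\mathbf{E}$ invertible with inverse of $\ell^\infty$-norm at most $2$. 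Hence $\mathbf{B}=\mathbf{A}(\mathbf{I}+\mathbf{A}^{-1}\mathbf{E})$ is invertible, and
\[
\|\mathbf{B}^{-1}\|_{\ell^\infty\to\ell^\infty}\le \|(\mathbf{I}+\mathbf{A}^{-1}\mathbf{E})^{-1}\|\cdot\|\mathbf{A}^{-1}\|\le 2\cdot cN^{-q}=c'N^{-q}.
\]
Existence and uniqueness of $\mathbf{a}$ satisfying \eref{mateqn}, together with the bound \eref{invmatnorm}, follow immediately from $\mathbf{a}=\mathbf{B}^{-1}\mathbf{b}$ and the operator-norm bound just derived.

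The only subtle point, which I would flag as the main obstacle, is coordinating the two small-parameter choices: one needs the constant $B$ in the hypothesis $N\ge B\eta(\C)^{-1}$ (inherited from Proposition~\ref{kernelopsummary_prop}(a)) to be fixed \emph{first} so that $M\le c_0 B^{-q}N^q$ and the diagonal dominance of $\mathbf{A}$ are both guaranteed, and then to choose $\alpha^*$ depending on $B$, $q$, $h$ to absorb the perturbation. Once this ordering is respected, the argument is routine and the constants in \eref{invmatnorm} depend only on the fixed parameters $q$, $h$, $S$.
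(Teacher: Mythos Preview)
Your proposal is correct and follows essentially the same approach as the paper: introduce $\mathbf{A}=[\Phi_N(\x_j-\x_k)]$, use \eref{kernlowbd} and \eref{greshgorin} to get $\|\mathbf{A}^{-1}\|_{\ell^\infty\to\ell^\infty}\le cN^{-q}$, bound $M\le c\eta(\C)^{-q}\le cN^q$ by volume comparison, and then perturb to $\mathbf{B}$. The only cosmetic differences are that you spell out the diagonal-dominance (Varah) bound explicitly and run the Neumann series by hand, whereas the paper cites the standard matrix perturbation theorem from Golub--Van~Loan; these are the same argument in different packaging.
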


\begin{proof}\ 
When $\Psi=\Phi_N$, the proposition follows from \eref{kernlowbd}, \eref{greshgorin}, and standard facts from linear algebra, (cf. \cite[Proposition~6.1]{eignet}).
In this proof, the matrix norm $\|\cdot\|$ will refer to the norm $\|\cdot\|_{\ell^\infty\to\ell^\infty}$, and we write $\eta=\eta(\C)$. 
Denoting the matrix $[\Phi_N(\x_j-\x_k)]_{j,k=1}^M$ by $\mathbf{A}$, we have observed that $\|\mathbf{A}^{-1}\|\le cN^{-q}$.
Also, it is easy to see  that $M\le c\eta^{-q}$. Therefore, recalling that $N\eta\ge B$, 
\begin{eqnarray*}
\|\mathbf{A}-\mathbf{B}\|&=&\max_j\sum_{k=1}^M|\Phi_N(\x_j-\x_k)-\Psi(\x_j-\x_k)| \le M\|\Phi_N-\Psi\|\le c\eta^{-q}\|\Phi_N-\Psi\|\\
&\le& cB^{-q}N^q\|\Phi_N-\Psi\|\le \frac{cB^{-q}}{\|\mathbf{A}^{-1}\|}\|\Phi_N-\Psi\|.
\end{eqnarray*}
We now choose $\alpha^*$ so that \eref{phinapprox} implies
$$
\|\mathbf{A}-\mathbf{B}\|\le \frac{1}{2\|\mathbf{A}^{-1}\|}.
$$
A perturbation theorem from linear algebra \cite[Theorem~2.3.4]{golub2012matrix} then shows that
$$
\|\mathbf{A}^{-1}-\mathbf{B}^{-1}\|\le \|\mathbf{A}^{-1}\|\le cN^{-q}.
$$
This shows that $\|\mathbf{B}^{-1}\|\le cN^{-q}$.
\end{proof}

We also need some results about approximation of a function and its derivatives.
We recall a theorem from \cite[Theorem~$1^\circ$]{czipser1957approximation}.  Per Section~\ref{notationsect}, the notation $\|\circ\|_1$ indicates the univariate supremum norm.

\begin{theorem}\label{czipserfreudtheo}
Let $q=1$,  $f\in W^*(\TT)$, $n\ge 1$ be an integer, $E>0$, and $T\in\HH_n^1$ satisfy
\be\label{unitrigapprox}
\|f-T\|_1 \le E.
\ee
Then 
\be\label{unitrigderapprox}
\|f'-T'\|_1 \le c\left\{n E + E_{n}(f')\right\}.
\ee
\end{theorem}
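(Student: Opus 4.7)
The plan is a short decomposition argument that combines the Jackson-type approximation properties of the operator $\sigma_N$ with the Bernstein inequality for trigonometric polynomials. The key observation is that, because $\sigma_N$ is a Fourier multiplier with the smooth cutoff $h$, it commutes with differentiation: reading off \eref{summkerndef} one sees that $(\sigma_N(f))' = \sigma_N(f')$. This identity lets us insert $\sigma_{2n}(f)$ as an intermediary between $f$ and $T$ while keeping an object on which Bernstein's inequality applies.

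Concretely, I would write
\[
f' - T' \;=\; \bigl(f' - \sigma_{2n}(f')\bigr) \;+\; \bigl(\sigma_{2n}(f) - T\bigr)'.
\]
The first summand is controlled immediately by \eref{goodapprox}: $\|f' - \sigma_{2n}(f')\|_1 \le c\, E_n(f')$. For the second summand, note that $T\in\HH_n^1\subset\HH_{2n}^1$ and $\sigma_{2n}(f)\in\HH_{2n}^1$, so $\sigma_{2n}(f) - T\in\HH_{2n}^1$. Applying the classical Bernstein inequality $\|P'\|_1 \le 2n\,\|P\|_1$ for $P\in\HH_{2n}^1$, together with the triangle inequality, the estimate $\|\sigma_{2n}(f) - f\|_1 \le c\,E_n(f)$ from \eref{goodapprox}, and the trivial bound $E_n(f) \le \|f-T\|_1 \le E$, one obtains
\[
\|(\sigma_{2n}(f) - T)'\|_1 \;\le\; 2n\bigl(\|\sigma_{2n}(f) - f\|_1 + \|f - T\|_1\bigr) \;\le\; c\,nE.
\]
Adding the two contributions gives $\|f' - T'\|_1 \le c\{nE + E_n(f')\}$, which is precisely \eref{unitrigderapprox}.

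There is no serious obstacle here beyond careful bookkeeping in choosing the inflation factor. The multiplier $2$ in $N=2n$ is chosen so that (i) $T\in\HH_n^1 = \HH_{N/2}^1$ is a legitimate competitor in the best-approximation interpretation of \eref{goodapprox}, which is what produces $\|\sigma_{2n}(f)-f\|_1\le cE_n(f)\le cE$, and (ii) the polynomial $\sigma_{2n}(f)-T$ to which Bernstein is applied has degree only $\mathcal{O}(n)$, so that the Bernstein factor contributes only $nE$ and not something worse. The one ingredient that is not directly stated in the excerpt is Bernstein's inequality itself, which is however standard for trigonometric polynomials and may be quoted without further comment.
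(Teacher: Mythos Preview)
Your argument is correct. The decomposition via $\sigma_{2n}(f)$, the commutation $(\sigma_{2n}(f))'=\sigma_{2n}(f')$, the application of \eref{goodapprox} to both $f$ and $f'$, and the Bernstein inequality on the degree-$<2n$ polynomial $\sigma_{2n}(f)-T$ all go through exactly as you describe.

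Note, however, that the paper does not give its own proof of this statement: it simply cites it as Theorem~$1^\circ$ of Czipser--Freud \cite{czipser1957approximation}. So there is no ``paper's approach'' to compare against. What you have supplied is a clean, self-contained derivation using only the machinery already developed in Section~\ref{trigsect} (the operator $\sigma_N$ and the estimate \eref{goodapprox}) plus the classical Bernstein inequality. This is arguably preferable in the present context, since it keeps the paper self-contained and avoids sending the reader to an external reference for a result whose proof is only a few lines given the tools at hand.
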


In the multivariate case, we take the derivatives one variable at a time to deduce the following corollary of Theorem~\ref{czipserfreudtheo}.

\begin{cor}\label{czipserfreudcor}
Let  $f\in W^*$, $N\ge 1$ be an integer, $E>0$, and $T\in\HH_N^q$ satisfy
\be\label{trigapprox}
\|f-T\| \le E.
\ee
Then  
\be\label{trigderapprox}
\|f-T\|_{W^*} \le c\left\{N E + \sum_{j=1}^q E_N(D_j f)\right\}.
\ee
In particular,
\be\label{difftrigest}
\|T\|_{W^*} \le cN\left\{ E +\frac{1}{N}\|f\|_{W^*}\right\}.
\ee
\end{cor}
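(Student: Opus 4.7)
\medskip
\noindent\textbf{Proof proposal.} My plan is to reduce the multivariate statement to the univariate Theorem \ref{czipserfreudtheo} by slicing, i.e.\ fixing all but one variable at a time. Fix $j\in\{1,\dots,q\}$ and for any $\y\in\TT^{q-1}$ write $g_\y(t)=f(\ldots,t,\ldots)$ and $S_\y(t)=T(\ldots,t,\ldots)$, where $t$ occupies the $j$-th slot. Since $T\in\HH_N^q$ has spherical degree $<N$ in the full variable $\x$, its restriction $S_\y$ is a univariate trigonometric polynomial of degree $<N$, i.e.\ $S_\y\in\HH_N^1$. Moreover, the hypothesis $\|f-T\|\le E$ implies $\|g_\y-S_\y\|_1\le E$ uniformly in $\y$.

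Next I need to control the univariate degree of approximation of $D_jf(\cdot,\y)=g_\y'$. For any $\te>0$ choose $U\in\HH_N^q$ with $\|D_jf-U\|\le E_N(D_jf)+\te$; restricting to the $j$-th slice gives a polynomial $U_\y\in\HH_N^1$ with $\|g_\y'-U_\y\|_1\le E_N(D_jf)+\te$, so $E_N(1;g_\y')\le E_N(D_jf)$. Applying Theorem \ref{czipserfreudtheo} to $g_\y$, $S_\y$ with the bound $E$ yields
\be\label{sliceest}
\|D_j f(\cdot,\y)-D_j T(\cdot,\y)\|_1 \le c\{NE+E_N(1;g_\y')\}\le c\{NE+E_N(D_jf)\}
\ee
uniformly in $\y$. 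Taking supremum over $\y$, this gives $\|D_j(f-T)\|\le c\{NE+E_N(D_jf)\}$, and summing over $j=1,\dots,q$ produces \eref{trigderapprox}.

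To obtain \eref{difftrigest}, I apply \eref{trigderapprox} together with the triangle inequality:
$$
\|T\|_{W^*}\le \|f\|_{W^*}+\|f-T\|_{W^*}\le \|f\|_{W^*}+c\left\{NE+\sum_{j=1}^qE_N(D_jf)\right\}.
$$
Since $E_N(D_jf)\le \|D_jf\|$ trivially, the last sum is bounded by $\|f\|_{W^*}$, giving $\|T\|_{W^*}\le cNE+c\|f\|_{W^*}=cN\{E+\|f\|_{W^*}/N\}$, which is \eref{difftrigest}.

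The only mildly delicate step is the justification that slicing a multivariate trigonometric polynomial of \emph{spherical} degree $<N$ yields a univariate polynomial of degree $<N$ (so the slice $S_\y$ is eligible for Theorem \ref{czipserfreudtheo}), and that a best multivariate approximant can be sliced to dominate the univariate error uniformly in the frozen variables; both are immediate from the definition of $\HH_N^q$ as the span of exponentials $e^{i\k\cdot\x}$ with $|\k|_2<N$, since $|k_j|\le|\k|_2<N$. The rest of the argument is bookkeeping around the one-dimensional Czipser--Freud estimate.
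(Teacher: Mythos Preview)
Your proposal is correct and follows exactly the approach the paper indicates (in a single sentence preceding the corollary): apply the univariate Theorem~\ref{czipserfreudtheo} one coordinate at a time by slicing. You have simply supplied the details the paper omits---that a slice of $T\in\HH_N^q$ lies in $\HH_N^1$ because $|k_j|\le|\k|_2<N$, and that slicing a near-best approximant to $D_jf$ controls $E_N(1;g_\y')$ uniformly in $\y$---and these are handled correctly.
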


We note also the direct theorem of trigonometric approximation \cite[Section~5.3]{timanbk}.

\begin{prop}\label{directtheoprop}
If $f\in W^*$ then
\be\label{favardest}
E_N(f)\le \frac{c}{N}\|f\|_{W^*}.
\ee
\end{prop}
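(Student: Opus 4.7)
The plan is to use the smoothing operator $\sigma_N$ defined in \eref{summkerndef}. By the second inequality in \eref{goodapprox}, it suffices (after relabeling $N$ with $2N$, which only affects the constant) to establish
$$
\|f - \sigma_N(f)\| \le \frac{c}{N}\|f\|_{W^*}.
$$

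First I would observe that $\sigma_N$ is convolution with $\Phi_N/(2\pi)^q$, and since
$$
\int_{\TT^q}\Phi_N(\u)\,d\u = (2\pi)^q h(0) = (2\pi)^q,
$$
the operator $\sigma_N$ reproduces constants. Consequently,
$$
f(\x) - \sigma_N(f)(\x) = \frac{1}{(2\pi)^q}\int_{\TT^q}\bigl(f(\x)-f(\y)\bigr)\Phi_N(\x-\y)\,d\y.
$$
For $f\in W^*$, the mean value theorem applied one coordinate at a time (along the shortest arc in each circle factor) yields $|f(\x)-f(\y)| \le \sum_j \|D_j f\|\,|x_j-y_j| \le \|f\|_{W^*}\,|\x-\y|$, where $|\cdot|$ is the sup-metric on $\TT^q$ defined in Section~\ref{notationsect}. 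Substituting gives
$$
\|f - \sigma_N(f)\| \le \frac{\|f\|_{W^*}}{(2\pi)^q}\int_{\TT^q}|\u|\,|\Phi_N(\u)|\,d\u.
$$

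The remaining task is to show $\int_{\TT^q}|\u|\,|\Phi_N(\u)|\,d\u \le c/N$. Fixing an integer $S > q+1$ in Proposition~\ref{kernel_loc_prop}, I would split the integral at $|\u|=1/N$. On the small ball $\{|\u|\le 1/N\}$, the uniform bound $|\Phi_N|\le cN^q$ gives a contribution of at most $cN^q \cdot N^{-(q+1)} = c/N$. On its complement I would use $|\Phi_N(\u)|\le cN^q(N|\u|)^{-S}$ and decompose into dyadic shells $\{2^k/N < |\u|\le 2^{k+1}/N\}$; summing yields a convergent geometric series $\sum_{k\ge 0} 2^{k(q+1-S)}$ since $S>q+1$, again producing a bound of $c/N$.

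All the heavy lifting — namely the kernel localization — is already done in Proposition~\ref{kernel_loc_prop}, so there is no substantive obstacle here; the proof is essentially routine bookkeeping in estimating the first-moment integral of $|\Phi_N|$ against Lebesgue measure. One could alternatively package the dyadic-shell step by invoking Lemma~\ref{regularlemma} with $\nu = \mu^*$ (and $d=0$, $A_\nu = c$) iteratively on annuli, but the direct estimate above is the shortest route to the stated $\mathcal{O}(1/N)$ bound.
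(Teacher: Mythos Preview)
Your argument is correct in substance. The paper itself does not prove this proposition; it simply records it as a classical direct theorem and cites Timan's book \cite[Section~5.3]{timanbk}. Your route is genuinely different in that it is self-contained within the paper's own machinery: you exploit the localization estimate for $\Phi_N$ (Proposition~\ref{kernel_loc_prop}) to bound the first moment $\int_{\TT^q}|\u|\,|\Phi_N(\u)|\,d\u$ by $c/N$, and combine this with the Lipschitz bound $|f(\x)-f(\y)|\le \|f\|_{W^*}|\x-\y|$. This is a nice observation --- it shows that the Jackson-type estimate follows directly from the kernel localization already established, without appealing to an external reference. The classical proof in Timan proceeds instead via moduli of continuity and Favard-type means, which is more general but also requires more outside input.

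One small slip: you invoke ``the second inequality in \eref{goodapprox}'' and speak of relabeling $N$ by $2N$. What you actually need is the trivial first inequality $E_N(f)\le \|f-\sigma_N(f)\|$ (equivalently, the observation that $\sigma_N(f)\in\HH_N^q$); the second inequality goes the wrong way for your purpose. This is purely cosmetic and does not affect the validity of the argument.
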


\subsection{Proof of Proposition~\ref{periodic_relu_prop}.}
Using \eref{trigneunetapprox} and Schwarz inequality, we obtain that
\be\label{pf4eqn1}
\begin{aligned}
\|\sigma_n(f)-\mathbb{G}_N(\phi,\sigma_n(f))\| &\le c(\phi)E_N(1;\phi)\sum_{\k\in\ZZ^q}\left|h\left(\frac{|\k|_2}{n}\right)\hat{f}(\k)\right|\\ &\le c(\phi)E_N(1;\phi)\left\{\sum_{\k\in\ZZ^q}\left(h\left(\frac{|\k|_2}{n}\right)\right)^2\right\}^{1/2}\left\{\sum_{\k\in\ZZ^q}
|\hat{f}(\k)|^2\right\}^{1/2}.
\end{aligned}
\ee
Since $h(t)=0$ if $t\ge 1$, and $0\le h(t)\le 1$ for all $t$, 
$$
\sum_{\k\in\ZZ^q}\left(h\left(\frac{|\k|_2}{n}\right)\right)^2\le cn^q.
$$
In view of Bessel inequality, \eref{pf4eqn1} now implies
$$
\|\sigma_n(f)-\mathbb{G}_N(\phi,\sigma_n(f))\|\le c(\phi)E_N(1;\phi)n^{q/2}\left(\int_{\TT^q}|f(\x)|^2d\x\right)^{1/2}\le c(\phi)E_N(1;\phi)n^{q/2}\|f\|.
$$
Together with \eref{goodapprox}, this leads to 
\eref{relu_approx}. \qed
\subsection{Proof of the theorems in Section~\ref{shallowsect}.}\label{pf_feasible_sect}

\noindent\textbf{Proof of Theorem~\ref{feasibletheo}.} \\

In this proof, let $B$ be as in Proposition~\ref{kernelopsummary_prop}(a). $N\ge B\eta(\C)^{-1}$,
and for $\ell=1,\cdots, M$,
$z_\ell=y_\ell-\sigma_N(f)(\x_\ell)$.  Proposition~\ref{matrix_inv_prop} then guarantees that there exist $a_j\in\RR$ such that 
\be\label{pf1eqn1}
\sum_{j=1}^Ma_j\Phi_N(\x_\ell-\x_j) = z_\ell=y_\ell-\sigma_N(f)(\x_\ell), \qquad \ell=1,\cdots,M,
\ee
and (cf. \eref{invmatnorm} and \eref{goodapprox})
\bea\label{pf1eqn2}
\max_{1\le j\le M} |a_j| &\le& cN^{-q}\max_{1\le\ell\le M}|z_\ell|=cN^{-q}\max_{1\le \ell\le M}|y_\ell-\sigma_N(f)(\x_\ell)|\nonumber\\
&\le& cN^{-q}\left\{\max_{1\le \ell\le M}|y_\ell-f(\x_\ell)| +\max_{1\le \ell\le M}|f(\x_\ell)-\sigma_N(f)(\x_\ell)|\right\}\nonumber\\
&\le& cN^{-q}\left\{\epsilon+E_{N/2}(f)\right\}.
\eea

We now recall that
\be\label{pf1eqn3}
\mathcal{T}_N^\#(\mathcal{D})(\x)=\sigma_N(f)(\x)+\sum_{j=1}^M a_j\Phi_N(\x-\x_j), \qquad \x\in\TT^q.
\ee
Clearly, $\mathcal{T}_N^\#(\mathcal{D})\in\HH_N^q$, and $\mathcal{T}_N^\#(\mathcal{D})(\x_\ell)=y_\ell$, $\ell=1,\cdots,M$. This proves \eref{interpbasic}.

Moreover, for every $\x\in\TT^q$, \eref{pf1eqn3} and \eref{goodapprox} lead to
\bea\label{pf1eqn4}
|f(\x)-\mathcal{T}_N^\#(\mathcal{D})(\x)| &\le& |f(\x)-\sigma_N(f)(\x)|+\left\{\max_{1\le j\le M} |a_j|\right\}\sum_{j=1}^M|\Phi_N(\x-\x_j)|\nonumber\\
&\le& c\left\{E_{N/2}(f)+cN^{-q}\left\{\epsilon+E_{N/2}(f)\right\}\sum_{j=1}^M|\Phi_N(\x-\x_j)|\right\}.\nonumber\\
\eea
In view of \eref{phinsumest}, this leads to \eref{degreebasic}. \qed\\

\noindent\textbf{Proof of Theorem~\ref{net_feasibletheo}.}\\

This proof is very similar to that of Theorem~\ref{feasibletheo}. The condition \eref{netcond} and Proposition~\ref{matrix_inv_prop} ensure that the system of equations \eref{net_interp_eqns} has a unique solution satisfying \eref{invmatnorm}. 
The same argument as in \eref{pf1eqn4} then works with the operators as in Theorem~\ref{net_feasibletheo} replacing those in Theorem~\ref{feasibletheo}; we use \eref{relu_approx} in place of \eref{goodapprox}. \qed\\

\noindent\textbf{Proof of Theorem~\ref{constrtheo}.}\\

 In this proof, let   $\mathcal{T}^\#=T^\#_N(\mathcal{D})$ be as in Theorem~\ref{feasibletheo}, and
\be\label{pf3eqn1}
E= \epsilon+ E_{N/2}(f).
\ee
In view of \eref{degreebasic}, we may use Corollary~\ref{czipserfreudcor} to deduce that
\be\label{pf3eqn3}
\|f-\mathcal{T^\#}\|_{W^*}\le c\left\{NE + \sum_{j=1}^q E_N(D_j f)\right\}\le cN\left\{E+\frac{1}{N}\|f\|_{W^*}\right\}.
\ee
and in particular,
\be\label{pf3eqn4}
\|\mathcal{T^\#}\|_{W^*} \le cN\left\{E+\frac{1}{N}\|f\|_{W^*}\right\}.
\ee
Using \eref{interpbasic}, \eref{pf3eqn1} and \eref{pf3eqn4}, we obtain:
\bea\label{pf3eqn5}
\min_{T\in \HH_N}R_N(T)&\le& R_N(\mathcal{T}^\#)\le \max_{1\le j\le M}|y_j-\mathcal{T}^\#(\x_j)| +\frac{1}{N}\|\mathcal{T^\#}\|_{W^*}\nonumber\\
&=& \frac{1}{N}\|\mathcal{T^\#}\|_{W^*} \le c\left\{E+\frac{1}{N}\|f\|_{W^*}\right\}.
\eea
In view of Proposition~\ref{directtheoprop}, 
$$
E \le c\left\{\epsilon+\frac{1}{N}\|f\|_{W^*}\right\}.
$$
Together with \eref{pf3eqn5}, this
 proves \eref{regerrest}. 

Next, let $\x\in \TT^q$, and $\delta=\disp\min_{1\le j\le M}|\x-\x_j|=|\x-\x_\ell|$. For brevity, we write
 $$
  \tilde{E}= \epsilon+\frac{1}{N}\|f\|_{W^*}.
  $$ 
Using \eref{regerrest} and Corollary~\ref{czipserfreudcor}, we deduce that
\bea\label{pf3eqn6}
|f(\x)-T^*(\x)| &\le& |f(\x)-f(\x_\ell)| +|f(\x_\ell)-T^*(\x_\ell)| +|T^*(\x_\ell)-T^*(\x)|\nonumber\\
&\le& |\x-\x_\ell|\|f\|_{W^*} + c\tilde{E}+|\x-\x_\ell|\|T^*\|_{W^*}\le N\delta \frac{1}{N} \|f\|_{W^*} +c\tilde{E} +N\delta \tilde{E}\nonumber\\
 &\le& c(1+N\delta) \tilde{E}.
\eea
This proves \eref{generalerr}.
\qed

\begin{rem}\label{straightinterprem}
{\rm
If $\delta=\min_{1\le j\le M}|\x-\x_j|=|\x-\x_\ell|$, then we have
\begin{eqnarray*}
\lefteqn{\left|f(\x)-\frac{1}{\Phi_N(0)}\sum_{k=1}^M y_k\Phi_N(\x-\x_k)\right|}\\
&\le& |f(\x)-f(\x_\ell)|+\left|\frac{1}{\Phi_N(0)}\sum_{k=1}^M (y_k-f(\x_k))\Phi_N(\x-\x_k)\right| +\left|f(\x_\ell)- \frac{1}{\Phi_N(0)}\sum_{k=1}^M f(\x_k)\Phi_N(\x-\x_k)\right|\\
&\le& \delta\|f\|_{W^*}+\epsilon +\frac{1}{\Phi_N(0)}|f(\x_\ell)(\Phi_N(\x_\ell-\x_\ell)-\Phi_N(\x-\x_\ell))| +\frac{1}{\Phi_N(0)}\sum_{k\not=\ell}|f(\x_k)||\Phi_N(\x-\x_k)|\\
&\le&  \delta\|f\|_{W^*}+ c\epsilon +cN\delta\|f\| +\frac{1}{\Phi_N(0)}\sum_{k\not=\ell}|f(\x_k)||\Phi_N(\x-\x_k)|.
\end{eqnarray*}
Hence, if $2\pi\ge \delta \ge 1/N$, Lemma~\ref{regularlemma} used with $\nu$ as in the proof of Proposition~\ref{kernelopsummary_prop}(a), $d=\eta(\C)$, $r=\delta$ shows that 
$$
\frac{1}{\Phi_N(0)}\sum_{k\not=\ell}|f(\x_k)||\Phi_N(\x-\x_k)|\le c(N\delta)^{-S}(\delta+\eta(\C))^q\|f\| \le c(N\delta)^{-S}\|f\|.
$$
Therefore, if $1/N\le \delta\le 2\pi$,
\be\label{straightbd}
\left|f(\x)-\frac{1}{\Phi_N(0)}\sum_{k=1}^M y_k\Phi_N(\x-\x_k)\right|\le c\{\delta\|f\|_{W^*}+\epsilon +cN\delta\|f\|\}.
\ee
A similar bound can be proved  with the polynomial $\mathcal{L}_N(\mathcal{D})$.
\qed}
\end{rem}
\subsection{Proofs of the theorems in Section~\ref{deepsect}.}\label{deeppfs}
The induction argument given in the proof of Theorem~\ref{deepfeasibletheo} allows us to ``lift'' results about shallow networks to deep networks. We will refer to this argument as \emph{good propagation of error}.\\

\noindent\textbf{Proof of Theorem~\ref{deepfeasibletheo}.}\\
We observe that each node in $V\cup \mathbf{S}$ can be thought of as the sink node of an appropriate sub-DAG $\mathcal{G}_v$ of $\mathcal{G}$. Moreover, in the definitions such as \eref{gengfuncnormdef}, $\|f\|_{\XX,\mathcal{G}}\sim \sum_{v\in V\cup\mathbf{S}}\|f_v\|_{\XX,\mathcal{G}_v}$, and similarly for \eref{gaddegapproxdef}.  So, the statement of this theorem should be true also for each such sub-DAG.
Let $C=\max_{v\in V\cup\mathbf{S}}B(d(v))$ where $B(d(v))$ is the constant introduced in Proposition~\ref{kernelopsummary_prop} applied with $d(v)$ in place of $q$.

 Theorem~\ref{deepfeasibletheo} applied to a vertex in $\mathbf{S}$ is the same as Theorem~\ref{feasibletheo}.

Suppose the theorem is proved for every node of level up to $\ell$ for some $\ell\ge 0$, $v$ be a node at level $\ell+1$, and $u_1,\cdots,u_{d(v)}$ be its children. 
The sub-DAG with the outputs of these children as the  input to the sink node $v$ 
 is of course a shallow network, to which we may apply Theorem~\ref{feasibletheo}. 
Since  $N_v\ge C\eta(\C_v)^{-1}$ , Theorem~\ref{feasibletheo} implies that  there is $T_v^\#\in \HH_{N_v}^{d(v)}$ satisfying\footnote{Here, if $v=v^*$ is the sink node of the original DAG, the statement needs to be modified as follows: there is $T_{v^*}^\#\in\HH_N^{d(v^*)}$ that satisfies (thought of as a function of all the inputs to the network)
\be\label{pf2eqn2}
T_{v^*}^\#(\x_j)=T_{v^*}^\#((\x_j)_{v^*})=y_j, \qquad j=1,\cdots, M,
\ee
and (thought of as a function on $\TT^{d(v^*)}$)
\be\label{pf2eqn3}
\|f_{v^*}-T_{v^*}^\#\|_{d(v^*)} \le c\left\{\epsilon+E_{N_{v^*}/2}(d(v^*); f_{v^*})\right\}.
\ee}
\be\label{pf2eqn1}
T_v^\#(\x)=f_v(\x), \qquad \x\in \C_v, \qquad \|f_v-T_v^\#\|_{d(v)}\le cE_{N_v/2}(d(v); f_v).
\ee

Our induction hypothesis shows that the analogues of the estimates \eref{pf2eqn1} and \eref{pf2eqn2} are true for each of the children $u_1,\cdots, u_{d(v)}$.
Necessarily, $v\in V$, and 
 so, each $f_v$ is Lipschitz continuous with Lipschitz constant $\le L$.
Using triangle inequality, we deduce  that
\begin{eqnarray*}
\lefteqn{|f_v(f_{u_1}(\x_{u_1}),\cdots,f_{u_{d(v)}}(\x_{u_{d(v)}}))-T^\#_v(T^\#_{u_1}(\x_{u_1}),\cdots,T^\#_{u_{d(v)}}(\x_{u_{d(v)}}))|}\\
&\le& |f_v(T^\#_{u_1}(\x_{u_1}),\cdots,T^\#_{u_{d(v)}}(\x_{u_{d(v)}}))-T^\#_v(T^\#_{u_1}(\x_{u_1}),\cdots,T^\#_{u_{d(v)}}(\x_{u_{d(v)}}))|\\
&&\qquad + |f_v(f_{u_1}(\x_{u_1}),\cdots,f_{u_{d(v)}}(\x_{u_{d(v)}}))-f_v(T^\#_{u_1}(\x_{u_1}),\cdots,T^\#_{u_{d(v)}}(\x_{u_{d(v)}}))|\\
&\le& \sup_{\y\in\TT^{d(v)}}|f_v(\y)-T^\#_v(\y)| + L\left\|(f_{u_1}(\x_{u_1}),\cdots,f_{u_{d(v)}}(\x_{u_{d(v)}}))-(T^\#_{u_1}(\x_{u_1}),\cdots,T^\#_{u_{d(v)}}(\x_{u_{d(v)}}) )\right\|_1\\
&\le& \|f_v-T^\#_v\|_{d(v)} +L\sum_{j=1}^{d(v)}\|f_{u_j}-T^\#_{u_j}\|_{C^*, \mathcal{G}_{u_j}}.
\end{eqnarray*}
In view of our dual interpretation of the constituent functions as functions of their immediate input as well as the input seen by these functions and the induction hypothesis, this shows that  Theorem~\ref{deepfeasibletheo} is valid for the sub-DAG with $v$ as the sink node.
We define
\be\label{deep_tshar_def} 
T_{\mathbf{N}}^\#(\mathcal{D})=\{T^\#_v\}\in \mathcal{G}-\HH_\mathbf{N}.
\ee
 The equation \eref{pf2eqn2} is the same as \eref{deepinterpbasic}. 
 The estimate \eref{deepdegreebasic} follows by induction as just explained.
\qed\\

\noindent\textbf{Proof of Theorem~\ref{deepconstrtheo}.}\\

This proof is essentially the same as that of Theorem~\ref{constrtheo}. 
We point out some considerations required in the details which are different.
Since each $f_v\in W^*(\TT^{d(v)})$, the simultaneous approximation theorem Corollary~\ref{czipserfreudcor} holds for each $f_v$. 
We may assume that the perturbation exists only at the sink node, and the rest of the data is exact. Finally, we use the good propagation of error to obtain \eref{deepregerrest}. 
The proof of \eref{deepgeneralerr} involves an argument similar to \eref{pf3eqn6}. The middle term on the first line of that chain of inequalities gives the first term on the right hand side in  \eref{deepgeneralerr}. The other terms can be estimated as in \eref{pf3eqn6}, except that the chain rule of differentiation needs to be used several times to get to the level of the source nodes. This is facilitated by our extra assumption that $\max_{v\in V}\|f_v\|_{W^*(\TT^{(d(v)})}\le L$. 
\qed

\bhag{Conclusions and open problems}
We have explored the puzzle that deep networks (and sometimes also shallow ones) do not exhibit over-fitting even though the number of parameters is very large and the training error is reduced to zero. We have initiated a rigorous study of this phenomenon from the point of view of function approximation, giving estimates on how many parameters are needed to exhibit a zero or good training error, which is also compatible with the generalization error. Our estimates are given in terms of the data characteristics and the smoothness of the target function. 

One obvious problem is to reduce the number of parameters to be trained in the regularization functional \eref{regularizationdef}.
 Expressing the trigonometric polynomials as linear combinations of the  translates  $\Phi_N(\circ-2\pi\j/N)$ ensures that the resulting solution will have 
 coefficients that are small away from the training data. 
  It is therefore reasonable to take only some of these translates that are close to the training data.
However, it is not clear that the theory will work with the space defined by the span of only those translates which are kept. Also, there may be some numerical instability problems with this basis. 

A deeper and wider area of theoretical investigation is the following. Theorems~\ref{constrtheo} and \ref{deepconstrtheo} suggest that the approximation error given by the
trigonometric polynomials constructed there gives an estimate on the support of the marginal distribution of the training and test data.
In \cite{spie10}, the approximation errors of the convergent bounded interpolatory polynomials constructed in \cite{bdint} were used for
texture detection and segmentation of images. 
What would be the analogues for understanding the nature of the data using the approximation errors obtained here?


\begin{thebibliography}{10}

\bibitem{belkin2018overfitting}
M.~Belkin, D.~J. Hsu, and P.~Mitra.
\newblock Overfitting or perfect fitting? risk bounds for classification and
  regression rules that interpolate.
\newblock In {\em Advances in Neural Information Processing Systems}, pages
  2300--2311, 2018.

\bibitem{belkin2018understand}
M.~Belkin, S.~Ma, and S.~Mandal.
\newblock To understand deep learning we need to understand kernel learning.
\newblock In {\em Proceedings of the 35th International Conference on Machine
  Learning (PMLR)}, pages 80:541--549, 2018.

\bibitem{bdint}
S.~Chandrasekaran, K.~R. Jayaraman, and H.~N. Mhaskar.
\newblock Minimum {S}obolev norm interpolation with trigonometric polynomials
  on the torus.
\newblock {\em Journal of Computational Physics}, 249:96--112, 2013.

\bibitem{spie10}
S.~Chandrasekaran, K.~R. Jayaraman, J.~Moffitt, H.~N. Mhaskar, and S.~Pauli.
\newblock Minimum {S}obolev norm schemes and applications in image processing.
\newblock In {\em IS\&T/SPIE Electronic Imaging}, pages 753507--753507.
  International Society for Optics and Photonics, 2010.

\bibitem{czipser1957approximation}
J.~Czipszer and G.~Freud.
\newblock Sur l'approximation d'une fonction p\'eriodique et de ses
  d\'eriv\'ees successires par un polynome trigonom\'etrique et par ses
  d\'eriv\'ees successives.
\newblock {\em Acta Math.,(Sweden)}, 99:33--51, 1958.

\bibitem{erdos1943some}
P.~Erd{\H o}s.
\newblock On some convergence properties of the interpolation polynomials.
\newblock {\em Annals of Mathematics}, pages 330--337, 1943.

\bibitem{golub2012matrix}
G.~H. Golub and C.~F. Van~Loan.
\newblock {\em Matrix computations}, volume~3.
\newblock JHU Press, 2012.

\bibitem{goodfellow2016deep}
I.~Goodfellow, Y.~Bengio, A.~Courville, and Y.~Bengio.
\newblock {\em Deep learning}, volume~1.
\newblock MIT press Cambridge, 2016.

\bibitem{han2019k}
Y.~Han, L.~Sunwoo, and J.~C. Ye.
\newblock k-space deep learning for accelerated {MRI}.
\newblock {\em IEEE transactions on medical imaging}, 2019.

\bibitem{hardt2016identity}
M.~Hardt, B.~Recht, and Y.~Singer.
\newblock Train faster, generalize better: Stability of stochastic gradient
  descent.
\newblock In {\em ICML'16 Proceedings of the 33rd International Conference on
  International Conference on Machine Learning - Volume 48}, pages 1225--1234,
  2016.

\bibitem{quadconst}
Q.~T. Le~Gia and H.~N. Mhaskar.
\newblock Localized linear polynomial operators and quadrature formulas on the
  sphere.
\newblock {\em SIAM Journal on Numerical Analysis}, 47(1):440--466, 2009.

\bibitem{lecun2015deep}
Y.~LeCun, Y.~Bengio, and G.~Hinton.
\newblock Deep learning.
\newblock {\em Nature}, 521(7553):436--444, 2015.

\bibitem{indiapap}
H.~N. Mhaskar.
\newblock Approximation theory and neural networks.
\newblock In {\em Wavelet Analysis and Applications, Proceedings of the
  international workshop in Delhi}, pages 247--289, 1999.

\bibitem{eignet}
H.~N. Mhaskar.
\newblock Eignets for function approximation on manifolds.
\newblock {\em Applied and Computational Harmonic Analysis}, 29(1):63--87,
  2010.

\bibitem{sphrelu}
H.~N. Mhaskar.
\newblock Function approximation with zonal function networks with activation
  functions analogous to the rectified linear unit functions.
\newblock {\em Journal of Complexity}, 51:1--19, April 2019.

\bibitem{mhaskar1995degree}
H.~N. Mhaskar and C.~A. Micchelli.
\newblock Degree of approximation by neural and translation networks with a
  single hidden layer.
\newblock {\em Advances in Applied Mathematics}, 16(2):151--183, 1995.

\bibitem{approxint2002}
H.~N. Mhaskar, F.~J. Narcowich, N.~Sivakumar, and J.~D. Ward.
\newblock Approximation with interpolatory constraints.
\newblock {\em Proceedings of the American Mathematical Society},
  130(5):1355--1364, 2002.

\bibitem{eugenenevai}
H.~N. Mhaskar, P.~Nevai, and E.~Shvarts.
\newblock Applications of classical approximation theory to periodic basis
  function networks and computational harmonic analysis.
\newblock {\em Bulletin of Mathematical Sciences}, 3(3):485--549, 2013.

\bibitem{mhas_sergei_maryke_diabetes2017}
H.~N. Mhaskar, S.~V. Pereverzyev, and M.~D. van~der Walt.
\newblock A deep learning approach to diabetic blood glucose prediction.
\newblock {\em Frontiers in Applied Mathematics and Statistics}, 3:14, 2017.

\bibitem{dingxuanpap}
H.~N. Mhaskar and T.~Poggio.
\newblock Deep vs. shallow networks: An approximation theory perspective.
\newblock {\em Analysis and Applications}, 14(06):829--848, 2016.

\bibitem{natanson}
I.~P. Natanson.
\newblock {\em Constructive function theory}.
\newblock Ungar, 1964.

\bibitem{neyshabur2017exploring}
B.~Neyshabur, S.~Bhojanapalli, D.~McAllester, and N.~Srebro.
\newblock Exploring generalization in deep learning.
\newblock In {\em Advances in Neural Information Processing Systems}, pages
  5949--5958, 2017.

\bibitem{poggio2017theory3}
T.~Poggio, K.~Kawaguchi, Q.~Liao, B.~Miranda, L.~Rosasco, X.~Boix, J.~Hidary,
  and H.~Mhaskar.
\newblock Theory of deep learning {III}: explaining the non-overfitting puzzle.
\newblock {\em arXiv preprint arXiv:1801.00173}, 2017.

\bibitem{poggio2018theory3b}
T.~Poggio, Q.~Liao, B.~Miranda, A.~Banburski, X.~Boix, and J.~Hidary.
\newblock Theory {III}b: Generalization in deep networks.
\newblock {\em arXiv preprint arXiv:1806.11379}, 2018.

\bibitem{sokolic2016robust}
J.~Sokoli{\'c}, R.~Giryes, G.~Sapiro, and M.~R. Rodrigues.
\newblock Robust large margin deep neural networks.
\newblock {\em IEEE Transactions on Signal Processing}, 65(16):4265--4280,
  2016.

\bibitem{stein2016singular}
E.~M. Stein.
\newblock {\em Singular integrals and differentiability properties of functions
  (PMS-30)}, volume~30.
\newblock Princeton university press, 2016.

\bibitem{szabados1978some}
J.~Szabados.
\newblock On some convergent interpolatory polynomials.
\newblock {\em Fourier Analysis and Approximation Theory, Coll. Math. Soc.
  J{\'a}nos Bolyai}, 19:805--815, 1978.

\bibitem{timanbk}
A.~F. Timan.
\newblock {\em Theory of Approximation of Functions of a Real Variable:
  International Series of Monographs on Pure and Applied Mathematics},
  volume~34.
\newblock Elsevier, 2014.

\bibitem{zhangmusings}
C.~Zhang, Q.~Liao, A.~Rakhlin, B.~Miranda, N.~Golowich, and T.~Poggio.
\newblock Musings on deep learning: Properties of {SGD}.
\newblock {\em CBMM Memo}, 67, 2017.

\bibitem{zygmund}
A.~Zygmund.
\newblock {\em Trigonometric series}, volume~1.
\newblock Cambridge {U}niversity {P}ress, 2002.

\end{thebibliography}

\end{document}